\documentclass[letterpaper]{article} 
\usepackage{aaai25}
\usepackage{times}  
\usepackage{helvet}  
\usepackage{courier}  
\usepackage[hyphens]{url}  
\usepackage{graphicx} 
\urlstyle{rm} 
\usepackage{natbib}  
\usepackage{caption} 
\frenchspacing  
\setlength{\pdfpagewidth}{8.5in} 
\setlength{\pdfpageheight}{11in} 
%

%
%
\pdfinfo{
/TemplateVersion (2025.1)
}

\setcounter{secnumdepth}{2} 

\author {
    Qi Shi\textsuperscript{\rm 1},
    Pavel Naumov\textsuperscript{\rm 1}
}
\affiliations{
    \textsuperscript{\rm 1}University of Southampton, UK\\
    qi.shi@soton.ac.uk; p.naumov@soton.ac.uk
}

\urlstyle{same}

\usepackage{enumitem}
\usepackage{amsmath}
\usepackage{amssymb}
\usepackage{booktabs}
\usepackage{xcolor}
\usepackage[ruled,vlined,linesnumbered]{algorithm2e}
\makeatletter
\newcounter{pseudocode}

\makeatother

\newenvironment{proof}{\begin{trivlist}\item\noindent{\em Proof.}}{\hfill {\footnotesize$\square$}\end{trivlist}}

\newenvironment{proof-of-claim}{\begin{trivlist}\item\noindent{\em Proof of Claim.}}{\hfill {\tiny $\boxtimes$}\end{trivlist}}

\newtheorem{theorem}{Theorem}

\newtheorem{lemma}{Lemma}
\newtheorem{definition}{Definition}
\newtheorem{corollary}{Corollary}
\newtheorem{claim}{Claim}
\newtheorem{proposition}{Proposition}

\newlist{Sample}{enumerate}{1}
\setlist[Sample,1]{label={\em (\roman*)}, ref=(\em \roman*),left=4pt}

\usepackage[labelformat=simple]{subcaption} 

\newcommand{\nlhd}{\mbox{$\,\not\!\!\lhd\,$}}
\renewcommand{\hline}{\noindent\textcolor{red}{=========================================}}

\newcommand{\s}{\boldsymbol{s}}

\renewcommand{\epsilon}{\varepsilon}

\DeclareFontFamily{U}{mathb}{}
\DeclareFontSubstitution{U}{mathb}{m}{n}
\DeclareFontShape{U}{mathb}{m}{n}{
  <-5.5> mathb5
  <5.5-6.5> mathb6
  <6.5-7.5> mathb7
  <7.5-8.5> mathb8
  <8.5-9.5> mathb9
  <9.5-11> mathb10
  <11-> mathb12
}{}

\DeclareRobustCommand{\sqsubseteq}{\,\text{{\usefont{U}{mathb}{m}{n}\symbol{"84}}}\,}

\DeclareRobustCommand{\sqsupseteq}{\,\text{{\usefont{U}{mathb}{m}{n}\symbol{"85}}}\,}

\DeclareRobustCommand{\sqsupsetneq}{\,\text{{\usefont{U}{mathb}{m}{n}\symbol{"89}}}\,}


\title{Uncommon Belief in Rationality}

\begin{document}

\maketitle

\begin{abstract}
Common knowledge/belief in rationality is the traditional standard assumption in analysing interaction among agents. This paper proposes a graph-based language for capturing significantly more complicated structures of higher-order beliefs that agents might have about the rationality of the other agents. The two main contributions are a solution concept that captures the reasoning process based on a given belief structure and an efficient algorithm for compressing any belief structure into a unique minimal form.
\end{abstract}




\section{Introduction}\label{sec:introduction}

In the orthodox studies of game theory and game-modelled multiagent systems, the rationality of agents is usually assumed to be {\em common knowledge} \cite{aumann1976agreeing}.
Albeit called ``knowledge'', it does not have to be the case.
This is because, as defined in epistemology, knowledge is something that must be true and justifiable \cite{steup2024epistemology}.
However, from the perspective of each single agent, it is hard to verify that the other agents are indeed rational.
As discussed by \citet{lewis1969convention}, what really matters in the reasoning process is the agents' rationality and {\bf\em belief} about the other agents' rationality, the latter of which, unlike {\em knowledge}, is not necessarily true or justifiable.
Indeed, in the discussion of epistemic game theory \cite{dekel2015epistemic}, the assumption of {\bf\em rationality and common belief in rationality (RCBR)} serves as the foundation of the major solution concepts such as Nash equilibrium \cite{nash1950equilibrium}, correlated equilibrium \cite{aumann1987correlated,brandenburger1987rationalizability}, and rationalisability \cite{pearce1984rationalizable,bernheim1984rationalizable}.

As \citet{lewis1969convention} and \citet{schiffer1972meaning} interpret, RCBR consists of the rationality of all agents and a belief hierarchy that contains all finite sequences in the form that ``$a_1$ believes that $a_2$ believes \dots\ that $a_{i-1}$ believes that $a_i$ is rational'', where $a_1$, $a_2$, $\dots$, $a_{i-1}$, $a_i$ are (possibly duplicated) agents.
Observe that RCBR can be expressed with a complete digraph. 
For instance, Figure~\ref{fig:3AgentCommonBelief} illustrates the RCBR among agents $a, b, c$, where each node represents the rationality of an agent and each path\footnote{A path refers to a nonempty sequence of connected and possibly duplicated nodes. Hence, a single node forms a singleton path.} of at least two nodes corresponds to a sequence in the belief hierarchy of RCBR. 
Specifically, in Figure~\ref{fig:3AgentCommonBelief}, the node labelled with $a$ corresponds to the rationality of agent $a$; the path labelled with $(b,c,b,a)$ corresponds to the belief sequence ``agent $b$ believes that agent $c$ believes that agent $b$ believes that agent $a$ is rational''.
We denote this belief sequence by the tuple $(b,c,b,a)$ henceforth.
Notice that no self-loop exists due to the assumption that {\em agents do not have introspective beliefs} about their own rationality \cite{dekel2015epistemic}.

\begin{figure}
    \centering
    \begin{subfigure}[b]{0.11\textwidth}
        \begin{center}
        \scalebox{0.5}{\includegraphics{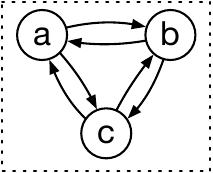}}
        \caption{RCBR}
       \label{fig:3AgentCommonBelief}
        \end{center}
    \end{subfigure}
        \begin{subfigure}[b]{0.05\textwidth}
        \begin{center}
        \scalebox{0.5}{\includegraphics{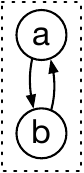}}
        \caption{}
       \label{fig:3Agent2CommonBelief}
        \end{center}
    \end{subfigure}
    \begin{subfigure}[b]{0.11\textwidth}
        \begin{center}
        \scalebox{0.5}{\includegraphics{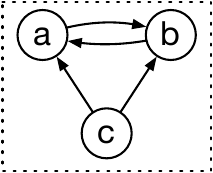}}
        \caption{}
        \label{fig:3AgentUncommonBelief}
        \end{center}
    \end{subfigure}
    \begin{subfigure}[b]{0.15\textwidth}
        \begin{center}
        \scalebox{0.5}{\includegraphics{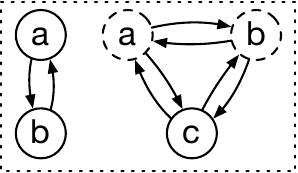}}
        \caption{}
        \label{fig:3AgentUncommonBelief-doxastic}
        \end{center}
    \end{subfigure}
    \caption{Different RBR graphs among agents $a,b,c$.}
    \label{fig:3AgentBelief}
\end{figure}

Yet, the RCBR assumption is too strong, especially in a system consisting of different types of agents.
For example, when two adults $a$ and $b$ interact with a child $c$, who is too young to possess rationality (\textit{i.e.} the ability to do mathematical optimisation), the RCBR assumption is satisfied only between $a$ and $b$. 
Then, the belief hierarchy can be captured by Figure~\ref{fig:3Agent2CommonBelief}.
Notice that, the irrational agent $c$ is not included in the graph because of the assumption that {\em irrational agents act arbitrarily and do not possess beliefs about the other agents' rationality}.\footnote{Even if irrational agents possess beliefs about the other agents' rationality, such beliefs do not affect the strategic behaviours in the system because irrational agents always act arbitrarily. Thus, it is safe to ignore irrational agents' beliefs about rationality when studying the strategic behaviours in a multiagent system.}
This assumption also implies that a belief hierarchy about rationality should be ``prefix-closed''. 
In other words, if agent $a$ believes that agent $b$ believes that agent $c$ is rational, then agent $a$ must believe that agent $b$ is rational, the latter of which further implies that agent $a$ must be rational.
This property makes it possible to illustrate a belief hierarchy with a directed graph.

Now we consider the situation that the child $c$, although young, is a genius. The adults $a$ and $b$ neglect the talent of $c$. But $c$ not only is rational and believes in the RCBR between $a$ and $b$, but also notices the arrogance of $a$ and $b$. 
In this case, there is no RCBR among agents $a,b,c$, but there is more than RCBR between agents $a$ and $b$.
We illustrate the {\bf\em rationality and beliefs in rationality (RBR)} in this system in Figure~\ref{fig:3AgentUncommonBelief}. 
We shall call it the ``RBR graph'' among agents $a$, $b$, $c$.
In this system, compared with the RCBR among $a,b,c$, there is no belief about the rationality of agent $c$. Meanwhile, compared with the RCBR between $a$ and $b$, there are agent $c$'s beliefs about the rationality and beliefs of $a$ and $b$.

Let us now consider another situation that, in the above case, the genius child $c$ fails to identify the arrogance of $a$ and $b$. Then, in $c$'s belief, the RCBR among $a,b,c$ still exists, which, however, is not the reality.
In other words, the agents $a$ and $b$ in $c$'s belief are not the real agents $a$ and $b$ in the system. We call such agents by {\bf\em doxastic agents}, who exist in beliefs but are not real.
In this case, we visualise the doxastic agents with dashed cycles as Figure~\ref{fig:3AgentUncommonBelief-doxastic} shows.
In this figure, the solid nodes labelled with $a$ and $b$ represent the real agents $a$ and $b$ between whom the RCBR exists, while the dashed nodes labelled with $a$ and $b$ represent the doxastic agents $a$ and $b$, with whom agent $c$ believes the RCBR exists.
In this case, only the solid nodes represent the rationality of agents; only the paths starting at solid nodes (\textit{i.e.} real agents) correspond to belief sequences in the system.

In this paper, we encode RBR systems with {\em directed labelled graphs}, as shown in Figure~\ref{fig:3AgentBelief}, based on which we study the agents' strategic behaviours.
Notably, the existing literature on agents' beliefs concentrates on modelling the Bayesian beliefs of agents (\textit{i.e.} agents' subjective probability distributions over the other agents' behaviours). 
On the contrary, the rationality of agents implies a dynamic ``best-response'' process \cite{savage1954foundation} that cannot be captured by static distributions.
Particularly, the {\em type structure} \cite{harsanyi1967games,brandenburger1993hierarchies} is used in epistemic game theory to implicitly model the hierarchy of Bayesian beliefs.
It is a recursive formalisation of strategic behaviours based on the RCBR assumption, rather than a formalisation of beliefs in rationality.
Meanwhile, the {\em influence diagram} \cite{koller2003multi,howard2005influence}, a variant of the Bayesian network \cite{pearl1988probabilistic}, is used in AI to model a decision system of agents.
Due to its strength in modelling the probabilistic uncertainty, the influence diagram is also used to model Bayesian beliefs of agents \cite{milch2000probabilistic}.
However, the acyclic nature of the influence graph makes it impossible to model RCBR as the RBR graph does.
More literature on modelling agents' beliefs can be seen in the review papers \cite{albrecht2018autonomous} and \cite{doshi2020recursively}.
As far as we know, no research on explicitly modelling uncommon RBR (\textit{i.e.} not RCBR) of agents exists in the literature.

{\bf Contribution and Outline}
We first discuss how RBR works in agents' strategic reasoning process in Section \ref{sec:rationality}.
Then, in section \ref{sec:terminologies}, we formally define {\em RBR graphs}, a graph-based language to capture uncommon RBR systems, and {\em (iterative) rationalisation}, the strategic reasoning process of agents, based on which we propose {\em doxastic rationalisability}, a solution concept in uncommon RBR systems.
After this, we discuss the equivalence of two RBR graphs in Section \ref{sec:equivalence RBR graphs}, based on which we design an algorithm that can compress any RBR graph to a minimal equivalent form in Section \ref{sec:minimisation RBR graph}.
Due to page limitation, we retain the formal definitions, theorems, and some informal discussions that capture intuition in the main text, while placing all supporting lemmas and tedious formal proofs in the appendix for reference.

\section{Rationality and Rationalisation}\label{sec:rationality}

As defined by \citet{savage1954foundation}, a rational agent, when faced with uncertainty, first forms a \textit{subjective} probability distribution over all possibilities, and then chooses a strategy that {\em best responds} (\textit{i.e.} maximises the expected utility) to the subjective probability distribution.
This is the commonly acknowledged definition of {\bf\em rationality} in economics, game theory, and multiagent systems.
{\bf\em Rationalisability} \cite{pearce1984rationalizable,bernheim1984rationalizable} is a solution concept to the question that ``what RCBR exactly implies''.
In this concept, a strategy is {\em rationalisable} if it best responds to some subjective probability distribution. 
Rational agents take only rationalisable strategies.
We call the process of finding rationalisable strategies {\bf\em rationalisation}.
It is a well-known result in game theory that, in a game with compact strategy sets and continuous utility functions, a (mixed) strategy is rationalisable if and only if it is not {\em strongly dominated}\footnote{A strategy is strongly dominated if it is always worse than another strategy. Coordinately, a strategy is weakly dominated if it is never better and sometimes worse than another strategy. They are standard terminologies in game theory.} \cite{gale1950solutions,pearce1984rationalizable}.
In this situation, rationalisation exactly means the elimination of strongly dominated strategies.
In other cases, the elimination of strongly dominated strategies implies\footnote{In the sense that a strongly dominated strategy is never rationalisable \cite{pearce1984rationalizable}.} rationalisation but not the other way around.
For example, \citet{borgers1993pure} finds that, if the preference of agents on outcomes is a total order rather than defined as utility functions, then rationalisation is a concept weaker than ``eliminating strongly dominated strategies'' but stronger than ``eliminating weakly dominated strategies''.

Since our purpose in this paper is not to discuss the essence of rationality, we simplify the definition of rationalisation to {\bf\em the elimination of strongly dominated strategies in pure strategy space}.
We say that a rational agent chooses only pure strategies that are {\em not} strongly dominated by any pure strategy (see Section~\ref{sec:terminologies} for formal definitions). 
The benefit is threefold: first, it avoids the computational intractability in dealing with mixed strategy space and ``best response'' optimisation; second, it allows us to discuss a more general game frame where the preference of agents is just partial order; third, the simplified definition is more restrictive (but not too much) than those in the literature, so the technical results of this paper can potentially be extended to more general definitions using similar proof techniques but at the expense of mathematical complexity.
Henceforth, by saying a strategy is dominated, we mean it is strongly dominated.

To see how rationality interacts with belief, let us consider the following simplified ``guess $2/3$ of the average'' game \cite{moulin1986game,nagel1995unraveling}:
\begin{quote}
\textit{Each agent chooses an \textbf{integer} in the interval $[1,10]$. The one whose choice is the closest to $2/3$ of the average of the \textbf{other} agents' choices wins.}
\end{quote}
Informally, we say that $2/3$ of the average of the other agents' choices is the {\em target} which every agent in this game aims to approach.
Suppose the agents $a,b,c$ as described in Section~\ref{sec:introduction} play the above game.
We analyse the potential choices of each agent in each of the four RBR systems illustrated with the RBR graphs in Figure~\ref{fig:3AgentBelief}.

We first consider the orthodox case that RCBR exists among agents $a,b,c$, as depicted in Figure~\ref{fig:3AgentCommonBelief}.
According to the description of the game, agent $a$ knows that the choices of agents $b$ and $c$, denoted by $\chi_b$ and $\chi_c$ henceforth, lie in the interval $[1,10]$.
Then, the target $t_a$ of agent $a$ lies in the interval $[\frac{2}{3}\cdot\frac{1+1}{2},\frac{2}{3}\cdot\frac{10+10}{2}]=[\frac{2}{3},{6\frac{2}{3}}]$.
Thus, $7$ is always closer to the target than any integer greater than it (\textit{i.e.} $8,9,10$ are dominated by $7$), while every integer in the interval $[1,7]$ might be the closest to the target.
Hence, the rationalisable choice $\chi_a$ of agent $a$ should be in the interval $[1,7]$.
Note that, the above analysis works for agents $b$ and $c$ in a symmetric way.
Therefore, the rationalisable choices are $\chi_a,\chi_b,\chi_c\in[1,7]$ after one rationalisation.
We mark this observation in column $1^{st}$ of lines (1-3) in Table~\ref{tab:rationalisation}.

Note that, the RCBR assumption implies the belief sequences $(a,b)$ and $(a,c)$, which represent that ``agent $a$ believes that agent $b$ is rational'' and ``agent $a$ believes that agent $c$ is rational'', respectively.
Then, agent $a$, after the above analysis, believes the choices $\chi_b,\chi_c\in[1,7]$.
Thus, a preciser target of agent $a$ becomes $t_a\in[\frac{2}{3},{4\frac{2}{3}}]$.
Hence, the rationalisable choice of agent $a$ is $\chi_a\in[1,5]$ after the second rationalisation. 
Similarly, $\chi_b\in[1,5]$ follows from the belief sequences $(b,a)$ and $(b,c)$ and $\chi_c\in[1,5]$ follows from the belief sequences $(c,a)$ and $(c,a)$.
We mark this observation in column $2^{nd}$ of lines (1-3) in Table~\ref{tab:rationalisation}.
Observe that, the RCBR assumption also implies the belief sequences $(a,b,c)$, $(a,b,a)$, $(a,c,b)$, and $(a,c,a)$.
Then, agent $a$, after the above analysis, believes the choices $\chi_b,\chi_c\in[1,5]$.
Hence, $\chi_a\in[1,3]$ after the third rationalisation, similar for $\chi_b$ and $\chi_c$.
We mark this observation in column $3^{rd}$ of lines (1-3) in Table~\ref{tab:rationalisation}.
Following the same process, after the fourth rationalisation, $\chi_a,\chi_b,\chi_c\in[1,2]$ and, after the fifth rationalisation, $\chi_a,\chi_b,\chi_c=1$.
Then, more rationalisation will not eliminate more strategies.
We mark these observations in the corresponding columns of lines (1-3) in Table~\ref{tab:rationalisation}.
Consequently, given the RCBR assumption, every agent should choose $1$.
The above analysis follows the standard approach in epistemic game theory \cite[Example 3.7]{perea2012epistemic}.

\begin{table}[htb]
\centering
\footnotesize
\renewcommand\arraystretch{1.05}
\setlength{\tabcolsep}{5pt}
\begin{tabular}{ccccccccc}
& & $1^{st}$ & $2^{nd}$ & $3^{rd}$ & $4^{th}$ & $5^{th}$ & $6^{th}$ & $\dots$ \\
\cmidrule[.5pt]{2-9}
(1) & $a$ & $[1,7]$ & $[1,5]$ & $[1,3]$ & $[1,2]$ & $1$ & $1$ & $\dots$ \\
(2) & $b$ & $[1,7]$ & $[1,5]$ & $[1,3]$ & $[1,2]$ & $1$ & $1$ & $\dots$ \\
(3) & $c$ & $[1,7]$ & $[1,5]$ & $[1,3]$ & $[1,2]$ & $1$ & $1$ & $\dots$ \\ 
\cmidrule[.5pt]{2-9}
(4) & $a$ & $[1,7]$ & $[1,6]$ & $[1,5]$ & $[1,5]$ & $\dots$  \\
(5) & $b$ & $[1,7]$ & $[1,6]$ & $[1,5]$ & $[1,5]$ & $\dots$  \\
(6) & $c$ & \multicolumn{2}{l}{$[1,10] \hspace{3mm}\dots$}  \\
\cmidrule[.5pt]{2-9}
(7) & $c$ & $[1,7]$ & $[1,5]$ & $[1,4]$ & $[1,3]$ & $[1,3]$ & $\dots$ \\
\end{tabular}
\caption{Rationalisations on the RBR graphs in Figure~\ref{fig:3AgentBelief}.}
\label{tab:rationalisation}
\end{table}

Now we look at the non-trivial cases where RCBR does not exist.
Suppose the RBR among agents $a,b,c$ is as Figure~\ref{fig:3Agent2CommonBelief} shows.
Since agent $c$ is irrational, her choice is always $\chi_c\in[1,10]$, as depicted in line (6) of Table~\ref{tab:rationalisation}.
For agents $a$ and $b$, the first rationalisation is identical to the case with RCBR.
That is $\chi_a,\chi_b\in[1,7]$, as depicted in column $1^{st}$ of lines (4-5) in Table~\ref{tab:rationalisation}.
Next, note that the belief sequence $(a,b)$ is in the RBR system but $(a,c)$ is not.
Then, after the above analysis, agent $a$ believes that $\chi_b\in[1,7]$ and $\chi_c\in[1,10]$.
Thus, the target of agent $a$ is $t_a\in[\frac{2}{3},5\frac{2}{3}]$.
Hence, the rationalisable choice of agent $a$ is $\chi_a\in[1,6]$ after the second rationalisation.
Symmetrically, $\chi_b\in[1,6]$ follows from the belief sequence $(b,a)$ after the second rationalisation.
These results are marked in column $2^{nd}$ of lines (4-5) in Table~\ref{tab:rationalisation}.
Then, due to the belief sequences $(a,b,a)$ and $(b,a,b)$, the above analysis implies that the preciser targets of agents $a$ and $b$ are $t_a,t_b\in[\frac{2}{3},5\frac{1}{3}]$.
Thus, $\chi_a,\chi_b\in[1,5]$ after the third rationalisation, as marked in column $3^{rd}$ of lines (4-5) in Table~\ref{tab:rationalisation}.
It can be verified that more rationalisation will not eliminate more strategies.
Consequently, the choices are $\chi_a,\chi_b\in[1,5]$ and $\chi_c\in[1,10]$ in the RBR system depicted in Figure~\ref{fig:3Agent2CommonBelief}.

As the above two cases show, in a given game, RBR implies an {\em iterative} rationalisation process until a stable state where no more strategy can be eliminated is reached.
In essence, the iterative process relies on longer and longer belief sequences.
However, since we depict an RBR system with a digraph, if there is an edge from a node labelled with $a$ to a node labelled with $b$, then, for each belief of agent $b$, agent $a$ believes that agent $b$ holds this belief.
For instance, in Figure~\ref{fig:3AgentCommonBelief}, for the agent $b$'s belief sequence $(b,c,a)$, agent $a$ holds the belief sequence $(a,b,c,a)$ and agent $c$ holds the belief sequence $(c,b,c,a)$.
This property allows us to {\em do iterative rationalisation without explicitly considering the belief sequences.}
Instead, if an edge from a node labelled with agent $a$ to a node labelled with agent $b$ exists in an RBR-graph, then agent $a$ does the $i^{th}$ rationalisation based on the result of agent $b$'s $(i-1)^{th}$ rationalisation.
In this sense, even if agent $b$ is a doxastic agent, she is treated the same way as a real agent.
For convenience of expression, we also refer to the belief sequences of doxastic agents as belief sequences in the RBR system.
This is in line with the model of {\em iterated strategic thinking} \cite{binmore1988modeling}, which formalises the intuition that ``the natural way of looking at game situations \dots\ is not based on circular concepts, but rather on a step-by-step reasoning procedure'' \cite{selten1998features}.

It is interesting to observe that, in the RBR system depicted in Figure~\ref{fig:3AgentUncommonBelief}, agents $a$ and $b$ hold the same belief hierarchy as in Figure~\ref{fig:3Agent2CommonBelief}.
In other words, agents $a$ and $b$ cannot distinguish the RBR systems in Figure~\ref{fig:3Agent2CommonBelief} and Figure~\ref{fig:3AgentUncommonBelief}.
As a result, the iterative rationalisation process of agents $a$ and $b$ runs identically in Figure~\ref{fig:3Agent2CommonBelief} and Figure~\ref{fig:3AgentUncommonBelief}, as lines (4-5) in Table~\ref{tab:rationalisation} show.
Contrarily, agent $c$ in Figure~\ref{fig:3AgentUncommonBelief},  different from agent $c$ in Figure~\ref{fig:3Agent2CommonBelief}, is rational and holds a belief hierarchy.
In the RBR system depicted in Figure~\ref{fig:3AgentUncommonBelief}, the first rationalisation of agent $c$, which relies only on the rationality of $c$, works in the same way as the case in Figure~\ref{fig:3AgentCommonBelief}.
From the second rationalisation, agent $c$ does the $i^{th}$ rationalisation based on the results of the $(i-1)^{th}$ rationalisation.
We record the process in line (7) of Table~\ref{tab:rationalisation}.
For example, in the fourth rationalisation, based on the analysis of the third rationalisation, agent $c$ believes that the choices $\chi_a,\chi_b\in[1,5]$, as marked in column $3^{nd}$ of lines (4-5) in Table~\ref{tab:rationalisation}. 
Thus, the target $t_c\in[\frac{2}{3},3\frac{1}{3}]$. 
Hence, the rationalisable choice is $\chi_c\in[1,3]$, as depicted in column $4^{th}$ of line (7) in Table~\ref{tab:rationalisation}. 
Also observe that, in the RBR system depicted in Figure~\ref{fig:3AgentUncommonBelief-doxastic}, agents $a$ and $b$ hold the same belief as in Figure~\ref{fig:3Agent2CommonBelief}, while agent $c$ holds the same belief as in Figure~\ref{fig:3AgentCommonBelief}.
As a result, the iterative rationalisation process in Figure~\ref{fig:3AgentUncommonBelief-doxastic} is as lines (3-5) of Table~\ref{tab:rationalisation} show.

It is worth mentioning that, our RBR-graph can be used to model {\em bounded rationality} \cite{simon1955behavioral,simon1957models} of agents.
In particular, the {\em cognitive hierarchy model} \cite{camerer2004cognitive,chong2016generalized} is a mathematical model of bounded rationality that limits the length of belief sequences in a belief hierarchy.
The model assumes every agent believes herself to be smarter than everyone else and thus performs deeper reasoning.
Particularly, an $i$-step reasoner believes that every other agent is a $j$-step reasoner with some probability, where $j<i$, and a $0$-step reasoner is an irrational agent.
Because of this assumption, the statement that ``an agent cannot reason too deeply'' is reduced to the statement that ``the agent believes that the other agents cannot reason too deeply''.
The former is the essence of bounded rationality, while the latter makes it possible to depict a bounded rationality RBR system with an acyclic RBR-graph.
Moreover, the iterative rationalisation process in bounded rationality RBR graphs is exactly the step-by-step reasoning process in the cognitive hierarchy model. 
However, the cognitive hierarchy model works implicitly in a probabilistic approach and is not used to model the explicit RBR systems as the RBR graphs do.

\section{Terminologies and Solution Concept}\label{sec:terminologies}

In preparation for the study of the iterative rationalisation process in games among agents with uncommon RBR, we next formalise the concepts informally introduced in the previous sections.
Throughout this paper, unless specified otherwise, we assume a fixed nonempty set $\mathcal{A}$ of agents.
We start with a general definition of (strategic) games that uses partial orders~\cite{osborne1994course}.
\begin{definition}\label{df:game}
A \textbf{game} is a tuple $(\Delta,\preceq)$ such that
\begin{enumerate}
    \item $\Delta=\{\Delta_a\}_{a\in\mathcal{A}}$, where $\Delta_a\ne\varnothing$ is a finite strategy space for each agent $a\in\mathcal{A}$; \label{dfitem:game strategy space}
    \item $\preceq\;=\{\preceq_a\}_{a\in\mathcal{A}}$, where $\preceq_a$ is a partial order on the Cartesian product $\prod_{b\in\mathcal{A}}\Delta_b$.\label{dfitem:game preference}
\end{enumerate}
\end{definition}
\noindent An element $s_a\in\Delta_a$ is called a \textit{strategy} of agent $a$. 
An {\em outcome} is a tuple $\s\in\prod_{a\in\mathcal{A}}\Delta_a$ consisting of a strategy for each agent $a\in\mathcal{A}$.
Binary relation $\preceq_a$ shows agent $a$'s preference over the outcomes. 
For two outcomes $\s$ and $\s'$, if $\s\preceq_a\s'$ and $\s'\npreceq_a\s$, then we write $\s\prec_a\s\,'$ and say that agent $a$ \textit{strictly prefers} outcome $\s\,'$ to $\s$.
For example, in our simplified ``2/3 game'' in Section~\ref{sec:rationality}, the strategy space of each agent is all integers in the interval $[1,10]$; an outcome is a collection of every agent's choice.
An agent strictly prefers the outcomes where her choice is closer to her target.

Note that, the commonly used definition of games where preference is defined via utility functions is a special case of Definition~\ref{df:game} in which preference $\preceq_a$ is a total order for each agent $a\in\mathcal{A}$.
In particular, $\s\preceq_a\s'$ if $u_a(\s)\leq u_a(\s')$, where $u_a(\s)$ and $u_a(\s')$ are utilities of agent $a$ toward outcome $\s$ and $\s'$, respectively.

\begin{definition}\label{df:reasoning scene}
For any agent $a\in\mathcal{A}$, a \textbf{reasoning scene} $\Theta_a$ in the game $(\Delta,\preceq)$ is a Cartesian product $\prod_{b\in\mathcal{A}\setminus\{a\}}\Theta_a^b$ such that $\varnothing\subsetneq \Theta_a^b\subseteq\Delta_{b}$ for each agent $b \neq a$.
\end{definition}

A reasoning scene describes a static context in which a rational agent rationalises.
It captures the uncertainty of an agent toward the other agents' strategies.
In other words, given that every other agent $b$ may choose a strategy from set $\Theta_a^b$, agent $a$ reasons about which strategies of her own are rationalisable.
Recall that, when agents $a,b,c$ with RBR in Figure~\ref{fig:3Agent2CommonBelief} play the simplified ``2/3 game'', in the third rationalisation, agent $a$ believes $\chi_b\in[1,6]$ and $\chi_c\in[1,10]$, as column $2^{nd}$ of lines (5-6) in Table~\ref{tab:rationalisation} shows.  
In this situation, we say that agent $a$ rationalises in the reasoning scene $\Theta_a$ such that $\Theta_a^b$ consists of all integers in the interval $[1,6]$ and $\Theta_a^c$ consists of all integers in the interval $[1,10]$.
A tuple $\s_{-a}\in\Theta_a$ is a combination of all agents' strategies except agent $a$ and thus $(\s_{-a},s_a)$ is an outcome.

The next definition formalises the notion of dominance as discussed in Section~\ref{sec:rationality}.

\begin{definition}\label{df:dominated strategy}
For a given reasoning scene $\Theta_a$ of agent $a$ and any strategies $s_a,s'_a\in\Delta_a$, strategy $s'_a$ \textbf{dominates} strategy $s_a$ if $(\s_{-a},s_a) \prec_a (\s_{-a},s_a')$ for each tuple $\s_{-a} \in \Theta_a$.
\end{definition}

We write $s_a\lhd_{\Theta_a} s'_a$ if strategy $s_a$ is dominated by strategy $s'_a$ in the reasoning scene $\Theta_a$.
Note that, dominance relation is asymmetric (\textit{i.e.} if $s_a\lhd_{\Theta_a}s'_a$, then $s'_a\nlhd_{\Theta_a}s_a$).

The next definition formalises the result of rationalisation in a given reasoning scene.
It is in line with our simplified definition of rationalisation (\textit{i.e.} the elimination of dominated strategies) as discussed in Section~\ref{sec:rationality}.

\begin{definition}\label{df:rational response}
In reasoning scene $\Theta_a$, the \textbf{rational response} of agent $a$ is a set of strategies
\begin{equation}\notag
\Re_a(\Theta_a):=\{s_a\in\Delta_a\,|\,\neg\exists s'_a\in\Delta_a (s_a\lhd_{\Theta_a} s'_a)\}.
\end{equation}
\end{definition}

In other words, the set $\Re_a(\Theta_a)$ consists of all rationalisable (\textit{i.e.} not dominated) strategies of agent $a$ in the reasoning scene $\Theta_a$. 
Note that, $\Re_a(\Theta_a)\neq\varnothing$ for any agent $a$ and any reasoning scene $\Theta_a$ due to the asymmetry of dominance relation $\lhd_{\Theta_a}$.
Next, we formally define RBR graphs.
In the rest of the paper,
notation $nEm$ is short for $(n,m)\in E$, notation $\ell_n$ is short for $\ell(n)$, and notation $\pi_a$ is short for $\pi(a)$. 

\begin{definition}\label{df:RBR graph}
An \textbf{RBR graph} is a tuple $(N,E,\ell,\pi)$ where:
\begin{enumerate}
\item $(N,E)$ is a finite directed graph with set $N$ of the nodes and set $E\subseteq N \times N$ of the directed edges;\label{dfitem:RBR graph frame}
\item $\ell: N\to\mathcal{A}$ is a labelling function such that for each node $n,m_1,m_2\in N$,\label{dfitem:RBR graph labelling function}
\begin{enumerate}
    \item if $nEm_1$, then $\ell_{n}\neq \ell_{m_1}$;\label{dfitem:RBR graph labelling function 1} 
    \item if $nEm_1$, $nEm_2$, and $\ell_{m_1}=\ell_{m_2}$, then $m_1=m_2$; \label{dfitem:different child label}
\end{enumerate}    
\item $\pi:\mathcal{A}\to N$ is a partial designating function such that for each agent $a$, if $\pi_a$ is defined, then $\ell_{\pi_a}=a$;\label{dfitem:RBR graph designating function} 
\item for each node $n\in N$, there is an agent $a\in\mathcal{A}$ and a path from node $\pi_a$ to $n$.\label{dfitem:RBR graph no irrelevant dummy node}
\end{enumerate}
\end{definition}

An RBR graph defined above represents an RBR system among all agents in set $\mathcal{A}$.
Note that the nodes in RBR graphs are not agents but just labelled by agents. This is because multiple nodes may represent the same agent when doxastic agents exist, as shown in Figure~\ref{fig:3AgentUncommonBelief-doxastic}.
In particular, item~\ref{dfitem:RBR graph frame} defines the finite digraph structure $(N,E)$ of an RBR graph.
Each node in set $N$ represents either a real agent or a doxastic agent.
A sequence of nodes connected by edges in set $E$ forms a path corresponding to a belief sequence in the RBR system.
Item~\ref{dfitem:RBR graph labelling function} defines the labelling function $\ell$ such that $\ell_n$ is the agent that node $n$ represents. 
Item~\ref{dfitem:RBR graph labelling function 1} formalises the assumption that agents do not have introspective beliefs. 
Item~\ref{dfitem:different child label} captures the intuition that an agent has only one identity in another agent's belief, thus preventing belief conflicts.
Item~\ref{dfitem:RBR graph designating function} defines the designating function $\pi$ on the set of all rational agents such that $\pi_a$ is the node representing the real agent $a$ (captured by the solid nodes in Figure~\ref{fig:3AgentBelief}). 
Note that, each node $n$ {\em not} in the {\em image} of function $\pi$ (captured by the dashed nodes in Figure~\ref{fig:3AgentBelief}) represents a doxastic agent $\ell_n$.
Item~\ref{dfitem:RBR graph no irrelevant dummy node} requires that each node in an RBR graph should be reachable from (\textit{i.e.} relevant to) a real agent so that every object satisfying Definition~\ref{df:RBR graph} represents an RBR system.

In an RBR graph $(N,E,\ell,\pi)$, a path $p=(n_1,\dots,n_k)$ where $k\geq 1$ is a sequence of $k$ nodes such that $n_iEn_{i+1}$ for each integer $i<k$.
We call the sequence $\sigma=(\ell_{n_1},\dots,\ell_{n_k})$ of agents the {\bf\em belief sequence} of path $p$ and read it as ``agent $\ell_{n_1}$ believes that \dots\ believes that agent $\ell_{n_k}$ is rational''.
For a finite sequence $\sigma$ of (possibly duplicated) agents, denote by $|\sigma|$ the length of sequence $\sigma$.
For any agent $a\in\mathcal{A}$ and any sequence $\sigma$ of agents, $a\!::\!\sigma$ is the sequence obtained by attaching agent $a$ at the beginning of sequence $\sigma$.
A sequence $\sigma$ is called an {\bf\em alternating sequence} if every two consecutive agents in $\sigma$ are not equal.

\begin{definition}\label{df:path set}
For each node $n$ in an RBR graph $(N,E,\ell,\pi)$, each integer $i\geq 1$, and each integer $j\geq 0$,
\begin{equation}\label{eq:path set main}
\Pi_n^i:=
\begin{cases}
\{\ell_n\}, &\text{if } i=1;\\
\{\ell_n\!::\!\sigma\mid nEm,\sigma\in \Pi_{m}^{i-1}\}, &\text{if }  i\geq 2;
\end{cases}
\end{equation}
\begin{equation}\label{eq:accumulated path set}
\Psi_n^j:=\bigcup_{0<i\leq j}\Pi_n^i;
\end{equation}
\begin{equation}\label{eq:whole path set}
\Psi_n^*:=\bigcup_{i>0}\Pi_n^i.
\end{equation}
\end{definition}

Informally, for a node $n$ in an RBR graph, set $\Pi_n^i$ consists of the belief sequences of all paths starting at node $n$ and of length $i$; set $\Psi_n^j$ consists of the belief sequences of all paths starting at node $n$ and of length at most $j$; set $\Psi_n^*$ consists of the belief sequences of all paths starting at node $n$.
Intuitively, $\Psi_n^*$ denotes the belief hierarchy of the (real or doxastic) agent represented by node $n$.

In the rest of this section, we consider solution concepts of games among agents with (possibly) uncommon RBR.

\begin{definition}\label{df:solution}
A \textbf{solution} $S$ of the game $(\Delta,\preceq)$ on the RBR graph $(N,E,\ell,\pi)$ is a family of sets $\{S_n\}_{n\in N}$ such that $\varnothing\subsetneq S_n\subseteq\Delta_{\ell_n}$ for each node $n\in N$.
\end{definition}

Informally, solution $S$ describes a type of uncertainty in the choice of each (real or doxastic) agent in an RBR system: the (real or doxastic) agent represented by node $n$ chooses only strategies in set $S_n$.
We denote the solution $\{S_n\}_{n\in N}$ by $S$ if it causes no ambiguity.
Specifically, let
\begin{equation}\label{eq:solution Delta}
S^{\Delta}:=\{\Delta_{\ell_n}\}_{n\in N}
\end{equation}
be the solution corresponding to the whole strategy space.

Note that a solution does not have to be ``reasonable''.
For instance, consider the ``2/3 game'' in Section~\ref{sec:rationality} and the RBR graph depicted in Figure~\ref{fig:3AgentUncommonBelief-doxastic}. A solution $S$ could be such that $S_n$ is the set of all {\em prime} integers in the interval $[1,10]$ for all five nodes $n$ in the RBR graph, which is obviously unreasonable.
Next, we consider the rationalisation of solutions.
To do this, we have the next auxiliary definition.

\begin{definition}\label{df:belief scene}
For any solution $S$ of the game $(\Delta,\preceq)$ on the RBR graph $(N,E,\ell,\pi)$, the \textbf{belief scene} $\Tilde{\Theta}_n(S)$ of any node $n\in N$ is the reasoning scene of agent $\ell_n$ such that for each agent $b\neq\ell_n$,
\begin{equation}\label{eq:RBR graph reasoning scene 1}
\Tilde{\Theta}_{n}^b(S):=
\begin{cases}
S_{n'},\!\! & \text{if } \exists n'\in N (nEn' \text{ and } \ell_{n'}=b); \\
\Delta_b,\!\! &  \text{otherwise.}
\end{cases}
\end{equation}
\end{definition}

Specifically, for any solution $S$ and any node $n$ in the RBR graph, consider the (real or doxastic) agent $\ell_n$ denoted by node $n$. 
For each agent $b\neq \ell_n$, if agent $\ell_n$ believes $b$ is rational, then there must be a node $n'$ labelled with $b$ such that $nEn'$ in the belief graph.
Moreover, node $n'$ captures the agent $b$ in agent $\ell_n$'s belief.
In this sense, given the solution $S$, agent $\ell_n$ believes that agent $b$ chooses a strategy from set $S_n$.
On the other hand, if agent $\ell_n$ believes $b$ is irrational, then no node $n'$ exists such that $\ell_{n'}=b$ and $nEn'$ and, in agent $\ell_n$'s belief, agent $b$ choose any strategy from set $\Delta_b$.
Hence, given solution $S$, the (real or doxastic) agent $\ell_n$ denoted by node $n$ believes that she rationalises in the reasoning scene in statement~\eqref{eq:RBR graph reasoning scene 1}.
We refer to such a reasoning scene as the belief scene of node $n$.
Then, the rationalisation of solution $S$ is such that every (real or doxastic) agent in an RBR graph rationalises (\textit{i.e.} rational response) in her belief scene, as formally defined below.

\begin{definition}\label{df:rationalisation on solution}
The \textbf{rationalisation} of any solution $S$ of the game $(\Delta,\preceq)$ on the RBR graph $(N,E,\ell,\pi)$ is the solution $\mathbb{R}(S)=\{\mathbb{R}(S)_n\}_{n\in N}$ such that $\mathbb{R}(S)_n:=\Re_{\ell_n}(\tilde{\Theta}_n(S))$ for each node $n\in N$.
\end{definition}

Note that, for any solution $S$, the rationalisation $\mathbb{R}(S)$ is a solution of the same game on the same RBR graph.
It captures one turn of the iterative rationalisation process as discussed in Section~\ref{sec:rationality}, which is formally defined below.

\begin{definition}\label{df:ith rationalisation}
The \textbf{$i^{th}$ rationalisation} on solution $S$ is
\begin{equation}\notag
\mathbb{R}^i(S):=\begin{cases}
    S, & i=0;\\
    \mathbb{R}(\mathbb{R}^{i-1}(S)), & i\geq 1.
\end{cases}
\end{equation}
\end{definition}

Recall that, the $i^{th}$ column of Table~\ref{tab:rationalisation} shows the result of the $i^{th}$ rationalisation for our ``2/3 game'' in the RBR systems denoted in Figure~\ref{fig:3AgentBelief}.
As shown there, the iterative rationalisation process may lead to a stable state. We call it {\em stable solution} and define it as follows.

\begin{definition} \label{df:stable solution}
A \textbf{stable solution} $S$ is such that $\mathbb{R}(S)=S$.
\end{definition}

Note that, without extra assumption (\textit{e.g.} the inaccessibility of some strategies), the iterative rationalisation should start at the solution $S^{\Delta}$ where every agent in an RBR graph chooses from the whole strategy space.
Moreover, without extra assumption (\textit{e.g.} the limitation of agents' mental capacity), the iterative rationalisation would continue forever because no agent has the incentive to stop it.
The definition below captures this idea.

\begin{definition}\label{df:rational solution}
The \textbf{rational solution} $\mathbb{S}$ is $\lim_{i\to\infty}\mathbb{R}^{i}(S^{\Delta})$.\footnote{Technically, $\mathbb{S}=\{\mathbb{S}_n\}_{n\in N}$ such that $\mathbb{S}_n=\lim_{i\to\infty}\mathbb{R}_n^{i}(S^{\Delta})$, where $\mathbb{R}_n^{i}(S^{\Delta})$ is a set of strategies for each integer $i$ and the limit of a sequence of sets is defined in the standard way~\cite[Section 1.3]{resnick1998probability}. In particular, if a sequence of sets stabilises after some element, then the limit is equal to the stable value.}
\end{definition}

However, as the following theorem shows, this process does not have to continue forever because a stable state will be achieved after finite iterations of rationalisation.

\begin{theorem}\label{th:rational solution}
There is an integer $i\geq 0$ such that $\mathbb{S}=\mathbb{R}^j(S^{\Delta})$ for each integer $j\geq i$.
\end{theorem}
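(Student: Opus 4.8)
The plan is to show that the sequence of solutions $\mathbb{R}^{0}(S^{\Delta}),\mathbb{R}^{1}(S^{\Delta}),\mathbb{R}^{2}(S^{\Delta}),\dots$ is componentwise non-increasing and then to exploit the finiteness of the strategy spaces. The crucial ingredient is a \emph{monotonicity lemma} for the rationalisation operator: if $S$ and $S'$ are solutions of the same game on the same RBR graph with $S_{n}\subseteq S'_{n}$ for every node $n$, then $\mathbb{R}(S)_{n}\subseteq\mathbb{R}(S')_{n}$ for every node $n$. I would prove this by unfolding the definitions. First, in statement~\eqref{eq:RBR graph reasoning scene 1} the selected case depends only on the (common) RBR graph, so the hypothesis $S_{n}\subseteq S'_{n}$ for all $n$ gives $\tilde{\Theta}_{n}^{b}(S)\subseteq\tilde{\Theta}_{n}^{b}(S')$ for every agent $b\neq\ell_{n}$ (in the first case because $S_{n'}\subseteq S'_{n'}$, in the second case because both sets equal $\Delta_{b}$), and hence $\tilde{\Theta}_{n}(S)\subseteq\tilde{\Theta}_{n}(S')$ as reasoning scenes. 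Second, since Definition~\ref{df:dominated strategy} quantifies over all tuples in the reasoning scene, passing to a smaller scene can only create additional dominance pairs, so by Definition~\ref{df:rational response} we get $\Re_{\ell_{n}}(\tilde{\Theta}_{n}(S))\subseteq\Re_{\ell_{n}}(\tilde{\Theta}_{n}(S'))$. Combining these two observations with Definition~\ref{df:rationalisation on solution} proves the lemma.

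With the lemma available, I would show by induction on $i$ that $\mathbb{R}^{i+1}(S^{\Delta})_{n}\subseteq\mathbb{R}^{i}(S^{\Delta})_{n}$ for every node $n$ and every integer $i\geq 0$. The base case is immediate from Definition~\ref{df:rational response} and~\eqref{eq:solution Delta}: $\mathbb{R}(S^{\Delta})_{n}=\Re_{\ell_{n}}(\tilde{\Theta}_{n}(S^{\Delta}))\subseteq\Delta_{\ell_{n}}=S^{\Delta}_{n}$. For the inductive step, apply the monotonicity lemma to the inclusion $\mathbb{R}^{i}(S^{\Delta})\subseteq\mathbb{R}^{i-1}(S^{\Delta})$.

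The remainder is a stabilisation argument. Set $c_{i}:=\sum_{n\in N}|\mathbb{R}^{i}(S^{\Delta})_{n}|$; since $N$ is finite and every $\Delta_{a}$ is finite, each $c_{i}$ is a non-negative integer (positive whenever $N\neq\varnothing$, as every component of a solution is nonempty). The chain of inclusions established above gives $c_{i+1}\leq c_{i}$ for all $i$, so $\{c_{i}\}_{i\geq 0}$ is a non-increasing sequence of non-negative integers and is therefore eventually constant: there is an integer $i\geq 0$ with $c_{j}=c_{i}$ for all $j\geq i$. Whenever $c_{j+1}=c_{j}$, the componentwise inclusion $\mathbb{R}^{j+1}(S^{\Delta})_{n}\subseteq\mathbb{R}^{j}(S^{\Delta})_{n}$ together with the equality of the total cardinalities forces $\mathbb{R}^{j+1}(S^{\Delta})_{n}=\mathbb{R}^{j}(S^{\Delta})_{n}$ for every node $n$; iterating from $i$ upward yields $\mathbb{R}^{j}(S^{\Delta})=\mathbb{R}^{i}(S^{\Delta})$ for every $j\geq i$. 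Since an eventually constant sequence of sets has that stable value as its limit, $\mathbb{S}=\lim_{k\to\infty}\mathbb{R}^{k}(S^{\Delta})=\mathbb{R}^{i}(S^{\Delta})$, which is exactly the claim.

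I expect the one genuinely delicate point to be the monotonicity lemma, and inside it the fact that restricting a reasoning scene can only \emph{enlarge} the rational response (the universal quantifier in the definition of dominance makes the direction of this inclusion easy to get backwards). Everything downstream of the lemma is routine bookkeeping about a non-increasing sequence of integers, so I would state and prove the monotonicity lemma as a separate item in the appendix and keep the argument in the main text brief.
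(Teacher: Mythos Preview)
Your proof is correct and follows essentially the same route as the paper: a monotonicity lemma for $\mathbb{R}$ (the paper splits it into Lemmas~\ref{lm:dominance scale}--\ref{lm:rationalisation monotonicity}), the resulting non-increasing chain (Lemma~\ref{lm:chain}), and a finiteness/stabilisation argument (Lemma~\ref{lm:rational solution}); your total-cardinality counter $c_i$ is just a concrete packaging of the paper's appeal to finiteness. One slip to fix in your closing paragraph: restricting a reasoning scene \emph{shrinks} the rational response, not enlarges it---your proof body has the direction right, but the meta-comment inverts it.
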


This theorem shows the rational solution is well-defined.
The intuition behind it is that, with the initial solution $S^{\Delta}$, the iterative rationalisation process eliminates more and more but not all strategies (Lemma~\ref{lm:chain} in Appendix \ref{sec:app rational solution}). 
Since the strategy space is finite (item~\ref{dfitem:game strategy space} of Definition~\ref{df:game}), the elimination process has to reach a stable solution in finite steps and stays there forever (Lemma~\ref{lm:rational solution} in Appendix \ref{sec:app rational solution}). Such a stable solution is the rational solution by Definition~\ref{df:rational solution}.

In a sense, rational solution is a solution concept in uncommon RBR systems.
It predicts the strategic behaviours of all agents, both real and doxastic ones, in a game.
Indeed, what matters are the real agents.
The next definition extracts the elements of the real agents in a rational solution.

\begin{definition}\label{df:doxastic rationalisability}
The \textbf{doxastic rationalisability} of the game $(\Delta,\preceq)$ on the RBR graph $(N,E,\ell,\pi)$ is a family of sets $\mathfrak{R}=\{\mathfrak{R}_a\}_{a\in\mathcal{A}}$ such that
\begin{equation}\notag
\mathfrak{R}_a:=\begin{cases}
    \mathbb{S}_{\pi_a}, &\text{if $\pi_a$ is defined;}\\
    \Delta_a, &\text{otherwise;}
\end{cases}
\end{equation}
where $\mathbb{S}$ is the rational solution of the same game on the same RBR graph.
\end{definition}

Doxastic rationalisability, the proposed solution concept, is the exact extension of {\em rationalisability} \cite{pearce1984rationalizable,bernheim1984rationalizable} into uncommon RBR systems.
In other words, without any other assumption than RBR among the agents, doxastic rationalisability is the unique reasonable prediction\footnote{In the sense that (1) every strategy not in the doxastic rationalisability is believed to be dominated by another strategy, so no agent would like to choose it; (2) more assumptions/beliefs are required to eliminate a strategy in the doxastic rationalisability.} of the agents' strategic behaviours in a game.

\section{Equivalence in RBR Graphs}\label{sec:equivalence RBR graphs}

Intuitively, if there is no other assumption than uncommon RBR, then an agent's strategic behaviour is only affected by her own belief.
In this sense, if a (real or doxastic) agent always has the same strategic behaviour in two RBR systems, then we say that the agent has {\em equivalent} beliefs in these RBR systems.
Note that, a (real or doxastic) agent is denoted by a node in RBR graphs. 
For simplicity, we say that two nodes are {\em doxastically equivalent} if the agents denoted by them have the same strategic behaviour in every game.

Formally, we consider two nodes $n$ and $n'$ in two (possibly equal) arbitrary RBR graphs $B=(N,E,\ell,\pi)$ and $B'=(N',E',\ell',\pi')$, respectively.
For an arbitrary game $G=(\Delta,\preceq)$, denote by $\mathbb{S}(G)=\{\mathbb{S}(G)_m\}_{m\in N}$ and $\mathbb{S}'(G)=\{\mathbb{S}'(G)_{m'}\}_{m'\in N'}$ the rational solutions of the game $G$ on the RBR graphs $B$ and $B'$, respectively.
Then, doxastic equivalence between nodes is defined as follows.

\begin{definition}\label{df:doxastic equivalent nodes}
The nodes $n$ and $n'$ are \textbf{doxastically equivalent} if $\mathbb{S}(G)_{n}=\mathbb{S}'(G)_{n'}$ for each game $G$.
\end{definition}

Recall that, as discussed in Section~\ref{sec:introduction}, agent $a$ in Figure~\ref{fig:3Agent2CommonBelief} and in Figure~\ref{fig:3AgentUncommonBelief} have the same belief: either of them believes that there is RCBR with agent $b$ and agent $c$ is irrational.
The same goes with agent $b$ in Figure~\ref{fig:3Agent2CommonBelief} and in Figure~\ref{fig:3AgentUncommonBelief}, and agent $c$ in Figure~\ref{fig:3AgentCommonBelief} and in Figure~\ref{fig:3AgentUncommonBelief-doxastic}.
Note that, by ``the same belief'' we mean that the belief hierarchy (\textit{i.e.} the set of belief sequences) is the same.
In this sense, by saying that the agents denoted by nodes $n$ and $n'$ have the same belief, we mean $\Psi_n^*=\Psi_{n'}^*$.

We find that beliefs are equivalent if and only if they are the same, as formally stated in the following theorem.

\begin{theorem}\label{th:doxastic equivalent nodes = indistinguishable}
The nodes $n$ and $n'$ are doxastically equivalent if and only if $\Psi_n^*=\Psi_{n'}^*$.
\end{theorem}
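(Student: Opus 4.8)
The plan is to treat the two directions separately: the forward implication is a propagation-of-depth argument, while the converse requires constructing, for any mismatch between the two hierarchies, a game that witnesses it.

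\emph{The ``if'' direction.} I would first prove, by induction on $i\ge 0$, that for every game $G$ and every node $m$ of every RBR graph the set $\mathbb{R}^i(S^\Delta)_m$ is determined by $G$ and the finite set $\Psi_m^{i+1}$ (equivalently: two nodes, in possibly different RBR graphs, with the same $\Psi^{i+1}$ have the same $i$-th rationalisation of $S^\Delta$). The base case is immediate since $\mathbb{R}^0(S^\Delta)_m=\Delta_{\ell_m}$ and $\ell_m$ is the unique length-one element of $\Psi_m^1$. For the step, $\mathbb{R}^i(S^\Delta)_m=\Re_{\ell_m}(\tilde{\Theta}_m(\mathbb{R}^{i-1}(S^\Delta)))$, and each component $\tilde{\Theta}_m^b(\mathbb{R}^{i-1}(S^\Delta))$ equals $\mathbb{R}^{i-1}(S^\Delta)_{m'}$ for the child $m'$ of $m$ labelled $b$ (unique by item~\ref{dfitem:different child label} of Definition~\ref{df:RBR graph}), or $\Delta_b$ if there is no such child; deleting the leading $\ell_m$ from the members of $\Psi_m^{i+1}$ and grouping them by their second agent recovers, for each agent $b$, both whether $m$ has a $b$-labelled child and, if so, the set $\Psi_{m'}^{i}$, which by the induction hypothesis determines $\mathbb{R}^{i-1}(S^\Delta)_{m'}$. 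Given the lemma, $\Psi_n^*=\Psi_{n'}^*$ implies $\Psi_n^{i+1}=\Psi_{n'}^{i+1}$ for all $i$, hence $\mathbb{R}^i(S^\Delta)_n=\mathbb{R}^i(S^\Delta)_{n'}$ for all $i$ and every game; since both rationalisation sequences stabilise by Theorem~\ref{th:rational solution}, we get $\mathbb{S}(G)_n=\mathbb{S}(G)_{n'}$ for every $G$, which is doxastic equivalence.

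\emph{The ``only if'' direction.} I argue the contrapositive. Assume $\Psi_n^*\ne\Psi_{n'}^*$, fix a sequence $\sigma=(a_1,\dots,a_k)$ in the symmetric difference with, say, $\sigma\in\Psi_n^*\setminus\Psi_{n'}^*$ (so $\ell_n=a_1$), and build a game $G_\sigma$ distinguishing $n$ from $n'$. A key constraint is that $\Re_a$ is monotone in the reasoning scene, so a node's rational solution can never \emph{gain} a strategy by acquiring a new child; hence the game must record ``the chain $\sigma$ reaches $m$'' as the \emph{absence} of a marker rather than its presence. Put $P_a=\{\,j:a_j=a\,\}$ and $\sigma^{(j)}=(a_j,\dots,a_k)$; let $G_\sigma$ give each agent $a$ the strategy space $\{\checkmark_a\}\cup\{\,m_j:j\in P_a\,\}$ (agents not occurring in $\sigma$ thus get the singleton $\{\checkmark_a\}$) and let $\preceq_a$ be the partial order under which, for every fixed profile of the other agents, $m_j$ lies strictly below $\checkmark_a$ exactly when $j=k$, or $j<k$ and agent $a_{j+1}$ does not play strategy $m_{j+1}$, with no other strict comparisons. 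Then $\checkmark_a$ is never dominated, $m_k$ is always dominated, and for $j<k$ one has $m_j\in\Re_a(\Theta)$ iff $m_{j+1}\in\Theta^{a_{j+1}}$; iterating $\mathbb{R}$ from $S^\Delta$ propagates this one level per step and, by Theorem~\ref{th:rational solution}, stabilises at $\mathbb{S}(G_\sigma)_m=\{\checkmark_{\ell_m}\}\cup\{\,m_j:j\in P_{\ell_m},\ \sigma^{(j)}\notin\Psi_m^*\,\}$. Since $\sigma\in\Psi_n^*$ but $\sigma\notin\Psi_{n'}^*$, the marker $m_1$ is in $\mathbb{S}(G_\sigma)_{n'}$ but not in $\mathbb{S}(G_\sigma)_n$ when $\ell_{n'}=a_1$, while if $\ell_{n'}\ne a_1$ the solutions differ already because $\checkmark_{a_1}\in\mathbb{S}(G_\sigma)_n$ yet $\checkmark_{a_1}\notin\Delta_{\ell_{n'}}$; either way $n$ and $n'$ are not doxastically equivalent.

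I expect the main obstacle to be the game construction in the converse: the obvious attempt to make a marker survive exactly when $\sigma$ is present is blocked by the scene-monotonicity of $\Re_a$, and the construction additionally has to cope with $\sigma$ repeating agents, which is why the markers $m_j$ are indexed by positions of $\sigma$ rather than by agents. Checking that $G_\sigma$ really has the stated rational solution is an induction on $k$ that is routine but tedious, and I would place it, together with the monotonicity lemma, in the appendix.
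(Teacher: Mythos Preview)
Your proposal is correct.  The ``if'' direction is essentially the paper's argument: the paper proves (Lemma~\ref{lm:indistinguishable nodes are doxastic equivalent}) that $\Psi_n^i=\Psi_{n'}^i$ implies $\mathbb{R}^i(S^\Delta)_n=\mathbb{R}^i(S^\Delta)_{n'}$ by the same child-by-child induction you describe, then concludes via Theorem~\ref{th:rational solution}.  Your indexing is off by one from the paper's (you claim $\mathbb{R}^i$ is determined by $\Psi^{i+1}$, the paper by $\Psi^i$ for $i\ge 1$), but that is harmless.

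For the ``only if'' direction the two constructions are genuinely different.  The paper builds, for each depth $k$, a single universal game $G_k$ whose strategy space for agent $a$ is \emph{all} alternating sequences of length $\le k$ starting with $a$, together with a quitting move $\bot_a$; it then shows $\mathbb{S}(G_k)_n=\Delta_{\ell_n}\setminus\Psi_n^k$, so that the rational solution encodes the entire depth-$k$ hierarchy at once.  You instead fix one witness sequence $\sigma$ and build a targeted game $G_\sigma$ whose markers track only the suffixes of $\sigma$.  Both constructions respect the scene-monotonicity obstruction you identify, by encoding ``path present'' as ``strategy eliminated'' rather than ``strategy retained''.  Your game is more economical (strategy spaces of size at most $k{+}1$ versus exponential in $k$), and it exploits the partial-order generality of Definition~\ref{df:game}, whereas the paper's $G_k$ is utility-based; on the other hand the paper's lemma $\mathbb{S}(G_k)_n=\Delta_{\ell_n}\setminus\Psi_n^k$ is a reusable characterisation of the whole level set, not just a single membership test.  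One point worth making explicit in your write-up: the condition ``agent $a_{j+1}$ does not play $m_{j+1}$'' is well-posed only because $\sigma$, being a belief sequence of a path, is alternating, so $a_{j+1}\neq a_j=a$ and the condition refers to an opponent's coordinate.
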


For the ``only if'' part of Theorem~\ref{th:doxastic equivalent nodes = indistinguishable}, note that $\Psi_n^*\neq\Psi_{n'}^*$ implies the existence of an integer $k$ such that $\Psi_n^{k}\neq\Psi_{n'}^k$ by Definition~\ref{df:path set}.
Then, we show the existence of a parameterised game $G_k$ such that $\mathbb{S}(G_k)_{n}\neq\mathbb{S}'(G_k)_{n'}$ (Definition~\ref{df:sequence game} and Lemma~\ref{lm:sequence game rational solution} in Appendix \ref{sec:app doxastic equivalence necessary condition}).
For the ``if'' part of Theorem~\ref{th:doxastic equivalent nodes = indistinguishable}, we prove by induction that, for each integer $i\geq 1$, after the $i^{th}$ rationalisation, $\mathbb{R}_B^i(S^\Delta)_n=\mathbb{R}_{B'}^i(S^\Delta)_{n'}$ in every game (Lemma~\ref{lm:indistinguishable nodes are doxastic equivalent} in Appendix \ref{sec:app doxastic equivalence sufficient condition}).
Then, the ``if'' part statement of Theorem~\ref{th:doxastic equivalent nodes = indistinguishable} follows from Theorem~\ref{th:rational solution}.

Now, we consider the equivalence of RBR systems.
Recall that an RBR system is a collection of the belief hierarchies of all real agents. 
The real agents are whom we care about.
In this sense, we say that two RBR systems are equivalent if no real agents can distinguish them.
In other words, every real agent should always have the same strategic behaviour in two equivalent RBR systems.
Note that we use RBR graphs to denote RBR systems.
Formally, we consider the equivalence of two arbitrary RBR graphs $B$ and $B'$. For any game $G$, denote by $\mathfrak{R}(G)=\{\mathfrak{R}(G)_a\}_{a\in\mathcal{A}}$ and $\mathfrak{R}'(G)=\{\mathfrak{R}'(G)_a\}_{a\in\mathcal{A}}$ the doxastic rationalisabilities of the game $G$ on the RBR graphs $B$ and $B'$, respectively.

\begin{definition}\label{df:equivalent}
The RBR graphs $B$ and $B'$ are \textbf{equivalent} if $\mathfrak{R}(G)_a=\mathfrak{R}'(G)_a$ for each agent $a\in\mathcal{A}$ and each game $G$.
\end{definition}

The next theorem shows the necessary and sufficient condition for two RBR systems to be equivalent.
That is, for each agent $a$, either $a$ is irrational in both systems, or $a$ is rational and has equivalent beliefs in both systems. 
In the RBR graphs, the former means that agent $a$ is not in the domain of definition of the designating functions.
The latter is formally expressed with the doxastic equivalence between the nodes denoting agent $a$.
Formal proofs of the theorem and its corollary below can be found in Appendix~\ref{sec:app RBR graph equivalence}.

\begin{theorem}\label{th:RBR graph equivalent}
RBR graphs $(N,E,\ell,\pi)$ and $(N',E',\ell',\pi')$ are equivalent if and only if, for each agent $a\in\mathcal{A}$, \textbf{either} both $\pi_a$ and $\pi'_a$ are not defined, \textbf{or} $\pi_a$ and $\pi'_a$ are both defined and doxastically equivalent.
\end{theorem}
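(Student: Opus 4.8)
The plan is to reduce both directions to Definition~\ref{df:doxastic rationalisability} and Definition~\ref{df:equivalent}, so that the statement about doxastic rationalisabilities becomes a statement about the designating functions $\pi,\pi'$ together with doxastic equivalence of the designated nodes.

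For the ``if'' direction, I would fix an arbitrary game $G$ and agent $a\in\mathcal{A}$ and split on the hypothesis. If neither $\pi_a$ nor $\pi'_a$ is defined, then $\mathfrak{R}(G)_a=\Delta_a=\mathfrak{R}'(G)_a$ directly from Definition~\ref{df:doxastic rationalisability}. If both are defined and doxastically equivalent, then $\mathfrak{R}(G)_a=\mathbb{S}(G)_{\pi_a}=\mathbb{S}'(G)_{\pi'_a}=\mathfrak{R}'(G)_a$, the middle equality being Definition~\ref{df:doxastic equivalent nodes} applied to $G$. Since $G$ and $a$ are arbitrary, the two RBR graphs are equivalent; note that no appeal to Theorem~\ref{th:doxastic equivalent nodes = indistinguishable} is needed here.

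For the ``only if'' direction, I would argue the contrapositive. If the right-hand side fails, then, after possibly interchanging the two graphs, there is an agent $a$ for which either (i) $\pi_a$ and $\pi'_a$ are both defined but not doxastically equivalent, or (ii) $\pi_a$ is defined and $\pi'_a$ is not. In case (i), Definition~\ref{df:doxastic equivalent nodes} supplies a game $G$ with $\mathbb{S}(G)_{\pi_a}\ne\mathbb{S}'(G)_{\pi'_a}$, that is $\mathfrak{R}(G)_a\ne\mathfrak{R}'(G)_a$, so the graphs are not equivalent. In case (ii) we have $\mathfrak{R}'(G)_a=\Delta_a$ for every game $G$, so it suffices to produce a single game $G$ with $\mathbb{S}(G)_{\pi_a}\subsetneq\Delta_a$. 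For this I would use a ``witnessing game'' on the full agent set $\mathcal{A}$: let $\Delta_a=\{0,1\}$, let $\Delta_b$ be a singleton for every $b\ne a$, and take $\preceq_a$ to be the partial order whose only strict comparisons are $(\s_{-a},0)\prec_a(\s_{-a},1)$, with every other $\preceq_b$ the identity order. Then agent $a$ has a unique reasoning scene, in which strategy $0$ is dominated by $1$, so the rational response of agent $a$ there is $\{1\}$; hence $\mathbb{R}(S)_{\pi_a}=\{1\}$ for every solution $S$, the sequence $\mathbb{R}^i(S^\Delta)_{\pi_a}$ equals $\{1\}$ from $i=1$ on, and therefore $\mathbb{S}(G)_{\pi_a}=\{1\}\subsetneq\{0,1\}=\Delta_a=\mathfrak{R}'(G)_a$ by Definition~\ref{df:rational solution} (Theorem~\ref{th:rational solution} could also be cited). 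Thus the graphs are not equivalent.

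The only genuinely nontrivial point is case (ii): one must verify that the witnessing game satisfies Definition~\ref{df:game} (finite nonempty strategy spaces, and $\preceq_a$ a bona fide partial order --- reflexive, antisymmetric, transitive), and that the elimination of strategy $0$ is forced regardless of the structure of the RBR graph, so that $\mathbb{S}(G)_{\pi_a}$ is a proper subset of $\Delta_a$ no matter what that graph looks like. The rest is routine unfolding of the definitions, and essentially the same construction also yields the accompanying corollary.
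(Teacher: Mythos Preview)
Your proposal is correct and follows essentially the same approach as the paper's proof: the ``if'' direction unfolds Definitions~\ref{df:doxastic rationalisability} and~\ref{df:doxastic equivalent nodes}, and the ``only if'' direction is argued by contrapositive, handling the two failure modes separately and exhibiting a witnessing game in the asymmetric-definition case. The only cosmetic difference is the choice of witnessing game: the paper gives every agent the strategy set $\{0,1\}$ with $\s\preceq_b\s'$ iff $s_b\le s'_b$, whereas you make the other agents' strategy spaces singletons; both constructions force $\mathbb{S}(G)_{\pi_a}=\{1\}\subsetneq\Delta_a$ for the same reason.
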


The next corollary follows directly from Theorem~\ref{th:RBR graph equivalent} and Theorem~\ref{th:doxastic equivalent nodes = indistinguishable}.
It shows that the core of the equivalence of two RBR systems is the belief hierarchy (\textit{i.e.} the set of all belief sequences, $\Psi_{\pi_a}^*$) of each real agent.
This property is used in the next section for minimising an RBR graph.

\begin{corollary}\label{cr:RBR graph equivalent}
RBR graphs $(N,E,\ell,\pi)$ and $(N',E',\ell',\pi')$ are equivalent if and only if $\pi$ and $\pi'$ have the same domain $\mathcal{D}$ of definition, \textbf{and} $\Psi_{\pi_a}^*=\Psi_{\pi'_a}^*$ for each agent $a\in\mathcal{D}$.
\end{corollary}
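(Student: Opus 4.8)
\textbf{Proof proposal for Corollary~\ref{cr:RBR graph equivalent}.}
The plan is to obtain the corollary as a direct logical reformulation of Theorem~\ref{th:RBR graph equivalent} once Theorem~\ref{th:doxastic equivalent nodes = indistinguishable} is substituted into it. First I would unpack the characterisation given by Theorem~\ref{th:RBR graph equivalent}: the two RBR graphs are equivalent precisely when, for every agent $a\in\mathcal{A}$, \emph{either} both $\pi_a$ and $\pi'_a$ are undefined \emph{or} both are defined and doxastically equivalent. I would then note that the ``either $\dots$ or $\dots$'' condition, read across all agents $a$, forces the set of agents on which $\pi$ is defined to coincide with the set on which $\pi'$ is defined; call this common set $\mathcal{D}$. (Conversely, if $\pi$ and $\pi'$ do not share a domain, some agent $a$ has $\pi_a$ defined while $\pi'_a$ is not, or vice versa, violating the condition.) So the first conjunct of the corollary is exactly the ``same domain'' half of Theorem~\ref{th:RBR graph equivalent}'s criterion.

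Next I would handle the agents in $\mathcal{D}$. For such an agent $a$, both $\pi_a$ and $\pi'_a$ are defined, and Theorem~\ref{th:RBR graph equivalent} demands that these two nodes be doxastically equivalent. Applying Theorem~\ref{th:doxastic equivalent nodes = indistinguishable} to the nodes $n=\pi_a$ and $n'=\pi'_a$, doxastic equivalence of $\pi_a$ and $\pi'_a$ is equivalent to $\Psi_{\pi_a}^*=\Psi_{\pi'_a}^*$. Quantifying this over all $a\in\mathcal{D}$ gives the second conjunct of the corollary. Since no condition is imposed on agents outside $\mathcal{D}$ (both designating functions being undefined there already satisfies the disjunction), the translation is lossless in both directions.

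Finally I would assemble the two observations: the RBR graphs are equivalent $\iff$ (for every $a$, both $\pi_a,\pi'_a$ undefined, or both defined and doxastically equivalent) $\iff$ ($\pi$ and $\pi'$ have a common domain $\mathcal{D}$, and $\Psi_{\pi_a}^*=\Psi_{\pi'_a}^*$ for each $a\in\mathcal{D}$), which is the statement of the corollary. There is no genuine obstacle in this argument; the only point requiring care is the bookkeeping of the per-agent disjunction — making sure that ``same domain of definition'' really is equivalent to ``for each $a$, $\pi_a$ defined $\iff$ $\pi'_a$ defined'', and that the condition $\Psi_{\pi_a}^*=\Psi_{\pi'_a}^*$ is only needed, and only asserted, for $a\in\mathcal{D}$.
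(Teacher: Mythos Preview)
Your proposal is correct and follows exactly the same approach as the paper, which simply notes that the corollary ``follows directly from Theorem~\ref{th:RBR graph equivalent} and Theorem~\ref{th:doxastic equivalent nodes = indistinguishable}''. Your writeup merely spells out the per-agent bookkeeping that the paper leaves implicit.
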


\section{Minimisation of RBR Graphs}\label{sec:minimisation RBR graph}

So far we have assumed that the RBR systems/graphs are given.
However, in most situations, this is not the case.
In behaviour economics, researchers study how to {\em elicit} the belief of a single agent \cite{schotter2014belief,charness2021experimental,danz2022belief}.
In application scenarios, it is probably the same: we elicit the belief hierarchy of each agent in a system and combine their beliefs as a whole.
For instance, to get the RBR system depicted in Figure~\ref{fig:3AgentUncommonBelief}, we first know that (1) agent $a$ believes RCBR exists between herself and agent $b$; (2) agent $b$ believes RCBR exists between herself and agent $a$; (3) agent $c$ believes that RCBR exists agents $a$ and $b$.
Then, we depict each agent's belief hierarchy with a graph and combine all of them as a whole, as shown in Figure~\ref{fig:3AgentUncommonBelief-collection}.
In other words, an RBR system is a collection of the belief hierarchy of each (real) agent in it.
Recall that, in Definition~\ref{df:RBR graph}, we never require an RBR graph to be a connected graph.

\begin{figure}[bht]
\begin{center}
\scalebox{0.5}{\includegraphics{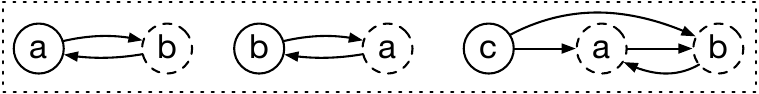}}
\caption{The collection of individual beliefs in Figure~\ref{fig:3AgentUncommonBelief}.
}
\label{fig:3AgentUncommonBelief-collection}
\end{center}
\end{figure}

It is easily observable and verifiable using Corollary~\ref{cr:RBR graph equivalent} that the RBR systems denoted in Figure~\ref{fig:3AgentUncommonBelief} and Figure~\ref{fig:3AgentUncommonBelief-collection} are indeed equivalent.
However, the RBR graph in Figure~\ref{fig:3AgentUncommonBelief-collection} has more than twice the number of nodes than that in Figure~\ref{fig:3AgentUncommonBelief}.
Note that, by Definition~\ref{df:rationalisation on solution}, the time complexity of rationalisation is proportional to the number of nodes in an RBR graph.
Minimising an RBR graph is to find an equivalent RBR graph with the fewest nodes.
On the one hand, it reduces the time complexity of computing the doxastic rationalisability.
On the other hand, it helps to find the most condensed expression of an RBR system.
Formally, we have the next definition for ``the most condensed expression''.

\begin{definition}\label{df:canonical RBR graph}
An RBR graph $(N,E,\ell,\pi)$ is \textbf{canonical} if $\Psi^*_{n}\neq\Psi^*_{n'}$ for all distinct nodes $n,n'\in N$.
\end{definition}

Intuitively, $\Psi^*_{n}$ denotes the belief hierarchy of the agent denoted by node $n$.
Then, an RBR graph is canonical if the nodes represent agents with different belief hierarchies.
It is proved in Appendix~\ref{sec:app canonical RBR graph} that {\em a canonical RBR graph is not equivalent to any RBR graph with fewer nodes} (Lemma~\ref{lm:canonical is minimal}) and {\em two equivalent canonical RBR graphs must be isomorphic} (Definition~\ref{df:RBR graph isomorphic} and Theorem~\ref{th:equivalent canonical graph are isomorphic}).
Due to the above two properties, to minimise an RBR graph, we only need to compute an equivalent canonical RBR graph.

Technically, $\Psi^*_{n}$ is the set of ``labelling sequences'' of all paths starting at node $n$. 
This reveals a similarity between RBR graphs and automata.
Inspired by Myhill–Nerode theorem \cite{myhill1957finite,nerode1958linear} and Hopcroft's algorithm \cite{hopcroft1971n}, we design Algorithm~\ref{alg:compute CF} that works based on {\em partition refinement} and outputs an equivalent canonical RBR graph of the input RBR graph, which is unique up to isomorphism.
A detailed explanation and formal proof of its correctness can be found in Appendices~\ref{sec:app partition sequence} and \ref{sec:app minimisation algorithm}.
The time complexity of Algorithm~\ref{alg:compute CF} is $O(|\mathcal{A}|\!\cdot\!|N|^2\!\cdot\!log|N|)$, where $|\mathcal{A}|$ is the number of agents and $|N|$ is the numbers of nodes in the input RBR graph.
See Appendix~\ref{sec:app complexity} for the complexity analysis.

\SetKwComment{Comment}{//}{}
\begin{algorithm}[hbt]
\footnotesize
\caption{Minimise an RBR graph}
\label{alg:compute CF}
\KwIn{RBR graph $(N,E,\ell,\pi)$}
\KwOut{RBR graph $(N', E',\ell',\pi')$}

$\mathbb{P}\leftarrow \big\{\{n'\in N\,|\,\ell_{n'}=\ell_n\}\,\big|\,n\in N\big\}$\label{algline:initialise P}{\scriptsize\Comment*[r]{$\Psi_n^1$ equivalence}\small}
$stable\leftarrow false$\label{algline:initialise flag}\;
\While({\scriptsize\Comment*[f]{$\Psi_n^i$ equiv. $\to\Psi_n^{i+1}$ equiv.}\small}){not $stable$\label{algline:while start}}{
    $stable\leftarrow true$\;
    $\mathbb{P}'\leftarrow\varnothing$\label{algline:initialise P'}\;
    \For{each set $P\in\mathbb{P}$\label{algline:partition for loop start}}{
    \For{each node $n\in P$\label{algline:type for loop start}}{
        $type(n)\leftarrow\{P'\in\mathbb{P}\,|\,nEn',n'\in P'\}$\label{algline:computing type}\;
    }
    $\mathbb{Q}\!\leftarrow\!\big\{\{n'\!\in\! P\mid type(n')\!=\!type(n)\}\mid n\!\in\! P\big\}$\label{algline:partition each P}\;
    $\mathbb{P}'\leftarrow\mathbb{P}'\cup\mathbb{Q}$\label{algline:update P'}\;
    \If{$|\mathbb{Q}|>1$\label{algline:flag update if}}{
        $stable\leftarrow false$\label{algline:partition for loop ends}\;
    }
    }
    $\mathbb{P}\leftarrow\mathbb{P}'$\label{algline:while end}\;
}
$N'\leftarrow\mathbb{P}$\label{algline:output node set}{\scriptsize\Comment*[r]{equivalent classes as nodes}\small}
$E'\leftarrow\{(P,Q)\mid (n,m)\in E,n\in P, m\in Q\}$\label{algline:output edge set}\;
\For{each node $P\in N'$\label{algline:output label for loop}}{
    pick an arbitrary node $n\in P$ and $\ell'_P\leftarrow\ell_{n}$ \label{algline:output label assign}\;
}
\For{each agent $a\in\mathcal{A}$\label{algline:output designation for loop}}{
    $\pi'_a\leftarrow P: \pi_a\in P$ if $\pi_a$ is defined\label{algline:output designation assign}\;
}
\Return{$(N', E',\ell',\pi')$}\label{algline:CF algorithm return}\;
\end{algorithm}

\section{Concluding Discussion}

Generally speaking, the RBR graph proposed in this paper is a {\em syntactic} presentation of an RBR system.
It is in line with our linguistic intuition about beliefs (\textit{i.e.} the correspondence between the label sequence of a path and a belief sequence in the hierarchy).
In a sense, doxastic rationalisability, the solution concept we propose, is a {\em semantic} interpretation of an RBR system in games.
From this perspective, Theorem~\ref{th:doxastic equivalent nodes = indistinguishable} and Corollary~\ref{cr:RBR graph equivalent} reveal the correlation between syntax and semantics of our graph-based language, based on which, we design an efficient algorithm that computes the most condensed syntactic expression of an RBR system.


\clearpage

\section*{Acknowledgements}
The research is funded by the China Scholarship Council (CSC No.202206070014).

\bibliography{this}

\clearpage

\appendix

\begin{center}
{\LARGE \bf Technical Appendix\footnote{This appendix is NOT a part of the AAAI-25 proceedings.}}

\end{center}
\vspace{2mm}


To better present the formal proofs and discussions, we start this appendix with a set of additional notations, definitions, and properties used in the other parts of the appendix.

\section{Additional Notations and Properties}

In addition to Definition~\ref{df:reasoning scene}, for two reasoning scenes $\Theta_a$ and $\bar{\Theta}_a$ of agent $a$ in one game, we say that $\Theta_a$ is a \textit{sup-scene} of $\bar{\Theta}_a$ and $\bar{\Theta}_a$ is a \textit{sub-scene} of $\Theta_a$ if $\Theta_a\sqsupseteq\bar{\Theta}_a$, which means $\Theta_a^b\supseteq \bar{\Theta}_a^{b}$ for each agent $b\neq a$.
Then, the lemma below follows from Definition~\ref{df:dominated strategy}.
\begin{lemma}\label{lm:dominance scale}
If $s_a\lhd_{\Theta_a}s'_a$ and $\Theta_a\sqsupseteq \Theta'_a$, then $s_a\lhd_{\Theta'_a}s'_a$.
\end{lemma}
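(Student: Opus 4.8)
The statement unwinds directly from the definitions of dominance (Definition~\ref{df:dominated strategy}) and of the sup-/sub-scene relation $\sqsupseteq$ introduced just above. The only thing to observe is that $\Theta_a\sqsupseteq\Theta'_a$, which by definition means $\Theta_a^b\supseteq\Theta_a'^b$ for every agent $b\neq a$, entails the set-theoretic inclusion $\Theta'_a\subseteq\Theta_a$ between the two Cartesian products $\Theta'_a=\prod_{b\neq a}\Theta_a'^b$ and $\Theta_a=\prod_{b\neq a}\Theta_a^b$. Once we have this inclusion, the result is immediate: the defining condition of $s_a\lhd_{\Theta_a}s'_a$ is a universally quantified statement over all tuples in $\Theta_a$, so it restricts to any subset.

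First I would note that if $\s_{-a}\in\Theta'_a$, then for each agent $b\neq a$ the $b$-component of $\s_{-a}$ lies in $\Theta_a'^b\subseteq\Theta_a^b$, hence $\s_{-a}\in\prod_{b\neq a}\Theta_a^b=\Theta_a$; this establishes $\Theta'_a\subseteq\Theta_a$. Next, assuming $s_a\lhd_{\Theta_a}s'_a$, Definition~\ref{df:dominated strategy} gives $(\s_{-a},s_a)\prec_a(\s_{-a},s'_a)$ for every $\s_{-a}\in\Theta_a$. Restricting this to the subset $\Theta'_a$, we get $(\s_{-a},s_a)\prec_a(\s_{-a},s'_a)$ for every $\s_{-a}\in\Theta'_a$, which is exactly the definition of $s_a\lhd_{\Theta'_a}s'_a$. (Note that $\Theta'_a$ is itself a legitimate reasoning scene in the sense of Definition~\ref{df:reasoning scene}, since each $\Theta_a'^b$ is by assumption a nonempty subset of $\Delta_b$, so the relation $\lhd_{\Theta'_a}$ is well-defined.)

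There is essentially no obstacle here: the lemma is a monotonicity statement whose proof is a one-line restriction argument, and the only point requiring a moment's care is the passage from componentwise containment of the factor sets to containment of the Cartesian products, which is routine. I would keep the write-up to a couple of sentences.
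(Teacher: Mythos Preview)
Your proof is correct and matches the paper's approach: the paper does not even spell out an argument, simply stating that the lemma ``follows from Definition~\ref{df:dominated strategy}'', and your unfolding of the definitions (componentwise containment $\Rightarrow$ containment of Cartesian products $\Rightarrow$ restriction of the universal condition) is exactly the routine verification behind that remark.
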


The next lemma shows the monotonicity of rationalisation (Definition~\ref{df:rational response}) on the ``$\sqsupseteq$'' order of two reasoning scenes.
Intuitively, it says that an agent is more certain about which strategies are rationalisable if she is more certain about the opponents' strategic behaviours. 

\begin{lemma}\label{lm:rational response monotonicity}
If $\Theta_a\sqsupseteq \Theta'_a$, then 
$\Re_a(\Theta_a)\supseteq\Re_a(\Theta'_a)$.
\end{lemma}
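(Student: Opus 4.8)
The plan is to prove Lemma~\ref{lm:rational response monotonicity} by a direct chain of implications, working from the definition of rational response (Definition~\ref{df:rational response}) and leveraging Lemma~\ref{lm:dominance scale}. Concretely, I would fix an arbitrary strategy $s_a\in\Re_a(\Theta'_a)$ and show $s_a\in\Re_a(\Theta_a)$. By Definition~\ref{df:rational response}, membership in $\Re_a(\Theta_a)$ means that $s_a$ is not dominated in the scene $\Theta_a$, i.e.\ there is no $s'_a\in\Delta_a$ with $s_a\lhd_{\Theta_a}s'_a$. So the natural move is to argue by contraposition: suppose $s_a$ \emph{were} dominated in $\Theta_a$, say $s_a\lhd_{\Theta_a}s'_a$ for some $s'_a$; then, since $\Theta_a\sqsupseteq\Theta'_a$, Lemma~\ref{lm:dominance scale} immediately gives $s_a\lhd_{\Theta'_a}s'_a$, contradicting $s_a\in\Re_a(\Theta'_a)$.

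So the key steps, in order, are: (1) unfold the definition of $\Re_a(\Theta'_a)$ to get ``no $s'_a$ dominates $s_a$ in $\Theta'_a$''; (2) assume for contradiction that some $s'_a$ dominates $s_a$ in $\Theta_a$; (3) apply Lemma~\ref{lm:dominance scale} to transport this domination down to the sub-scene $\Theta'_a$; (4) derive the contradiction and conclude $s_a\in\Re_a(\Theta_a)$; (5) since $s_a$ was arbitrary, conclude $\Re_a(\Theta'_a)\subseteq\Re_a(\Theta_a)$, which is exactly the claimed inclusion.

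Honestly, I do not expect any real obstacle here — the lemma is essentially a one-line consequence of Lemma~\ref{lm:dominance scale} once the definitions are unwound, and the only thing to be careful about is the direction of the inclusions and the direction of the $\sqsupseteq$ relation (a \emph{larger} reasoning scene has \emph{more} uncertainty, hence \emph{more} strategies survive, so the rational response shrinks as the scene shrinks). The ``hard part'', to the extent there is one, is purely bookkeeping: making sure that ``$\Theta_a$ is a sup-scene of $\Theta'_a$'' is used in the form required by Lemma~\ref{lm:dominance scale}, namely $\Theta_a\sqsupseteq\Theta'_a$, and that the quantifier over dominating strategies $s'_a$ ranges over the full $\Delta_a$ in both scenes (which it does, since $\Delta_a$ does not depend on the reasoning scene). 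No induction, no case analysis, and no appeal to finiteness of the strategy spaces is needed.
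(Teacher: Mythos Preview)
Your proposal is correct and essentially identical to the paper's proof: both argue the contrapositive inclusion (if $s_a\notin\Re_a(\Theta_a)$ then some $s'_a$ dominates it in $\Theta_a$, whence by Lemma~\ref{lm:dominance scale} it dominates in $\Theta'_a$, so $s_a\notin\Re_a(\Theta'_a)$). The only cosmetic difference is that you phrase it as a contradiction after fixing $s_a\in\Re_a(\Theta'_a)$, while the paper states the contrapositive directly.
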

\begin{proof}
Consider any strategy $s_a\!\in\!\Delta_a$ such that $s_a\!\notin\!\Re_a(\Theta_a)$. Then, there is a strategy $s'_a\in\Delta_a$ such that $s_a\lhd_{\Theta_a}s'_a$ by Definition~\ref{df:rational response}. Thus, $s_a\lhd_{\Theta'_a}s'_a$ by the assumption $\Theta_a\sqsupseteq \Theta'_a$ and Lemma~\ref{lm:dominance scale}. Hence, $s_a\notin\Re_a(\Theta'_a)$ by Definition~\ref{df:rational response}. 
\end{proof}

Next, we define two notations that capture the local properties of RBR graphs.
For an arbitrary node $n\in N$ in an RBR graph $(N,E,\ell,\pi)$, let
\begin{equation}\label{eq:1 step neighbour set}
Adj_n:=\{m\in N\,|\,nEm\}
\end{equation}
be the set of all adjacent nodes from node $n$ and 
\begin{equation}\label{eq:1 step agent set}
\mathcal{B}_n:=\{\ell_m\mid m\in Adj_n\}
\end{equation}
be the set of labels of the adjacent nodes of $n$.
Informally, in the RBR system denoted by the RBR graph $(N,E,\ell,\pi)$, the (real or doxastic) agent $\ell_n$ denoted by node $n$ believes that the agents in set $\mathcal{B}_n$ are rational and the others are irrational. The agents who are rational in agent $\ell_n$'s belief are denoted by the nodes in set $Adj_n$ in the RBR graph.

Note that $\ell_{m_1}\neq\ell_{m_2}$ for distinct nodes $m_1,m_2\in Adj_n$ by statement~\eqref{eq:1 step neighbour set} and item~\ref{dfitem:different child label} of Definition~\ref{df:RBR graph}. 
Thus, by statement~\eqref{eq:1 step agent set}, function $\ell$ forms a {\em bijection} from the set $Adj_n$ to the set $\mathcal{B}_n$. 
For this reason, we have the next definition.
\begin{definition}\label{df:function tau}
For any node $n$ in RBR graph $(N,E,\ell,\pi)$, the bijection $\beta_n:\mathcal{B}_n\to Adj_n$ is such that $\beta_n(b)$ is the node $m\in Adj_n$ where $\ell_m=b$.
\end{definition}

In other words, $\beta_n$ is the reversed function of $\ell$ in the adjacent area (\textit{i.e. $Adj_n$}) of node $n$.
Note that, by statement~\eqref{eq:1 step neighbour set}, statement~\eqref{eq:path set main} can be reformulated as follows:

\begin{equation}\label{eq:path set}
\Pi_n^i=
\begin{cases}
\{\ell_n\}, &\text{if } i=1;\\
\{\ell_n\!::\!\sigma\mid m\in Adj_n,\sigma\in \Pi_{m}^{i-1}\}, &\text{if }  i\geq 2.
\end{cases}
\end{equation}

Moreover, by statements~\eqref{eq:1 step neighbour set} and \eqref{eq:1 step agent set}, statement~\eqref{eq:RBR graph reasoning scene 1} can be reformulated as follows:
\begin{equation}\label{eq:RBR graph reasoning scene}
\Tilde{\Theta}_{n}^b(S)=
\begin{cases}
S_{\beta_n(b)}, & \text{if } b\in\mathcal{B}_n; \\
\Delta_b, & \text{otherwise}.
\end{cases}
\end{equation}

Consider an arbitrary nonempty sequence $\sigma$ of agents. If $\sigma=a\!::\!\rho$, then by $hd(\sigma)$ we mean agent $a$ and by $tl(\sigma)$ we mean (possibly empty) sequence $\rho$.
Then, the next three lemmas follow from statement~\eqref{eq:path set}.

\begin{lemma}\label{lm:March-1-b}
$hd(\sigma)=\ell_n$ for each sequence $\sigma\in\Pi_n^i$.
\end{lemma}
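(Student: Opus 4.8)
The plan is to prove Lemma~\ref{lm:March-1-b} by induction on the length $i$ of the path, using the recursive characterisation of $\Pi_n^i$ given in statement~\eqref{eq:path set}. The claim is that every belief sequence $\sigma\in\Pi_n^i$ begins with the agent $\ell_n$ that labels node $n$.

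For the base case $i=1$, statement~\eqref{eq:path set} gives $\Pi_n^1=\{\ell_n\}$, so the only element is the one-term sequence $(\ell_n)$, whose head is $\ell_n$ by the definition of $hd$. This is immediate.

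For the inductive step, assume $i\geq 2$ and that the claim holds for $i-1$ at every node. Take any $\sigma\in\Pi_n^i$. By statement~\eqref{eq:path set}, $\sigma=\ell_n\!::\!\rho$ for some node $m\in Adj_n$ and some $\rho\in\Pi_m^{i-1}$. Then $hd(\sigma)=hd(\ell_n\!::\!\rho)=\ell_n$ directly from the definition of $hd$ on a sequence formed by prepending $\ell_n$; the inductive hypothesis on $\rho$ is not even needed for this particular statement, since the head is pinned down by the outermost constructor. Hence $hd(\sigma)=\ell_n$, completing the induction.

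There is essentially no obstacle here — the statement is a structural observation that falls out of the way $\Pi_n^i$ is built, and the induction is a formality. The only thing worth being careful about is the notational bookkeeping: one should invoke the reformulated recursion~\eqref{eq:path set} rather than the original~\eqref{eq:path set main} (they are equivalent, as noted just before the lemma), and one should explicitly split on whether $i=1$ or $i\geq 2$ so that the correct branch of the piecewise definition is applied. I would write it as a two-line proof: handle $i=1$ from the first branch, and for $i\geq 2$ observe that any $\sigma\in\Pi_n^i$ has the form $\ell_n\!::\!\sigma'$ by the second branch, so $hd(\sigma)=\ell_n$.
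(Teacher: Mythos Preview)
Your proposal is correct and matches the paper's approach: the paper simply states that the lemma follows from statement~\eqref{eq:path set}, and your case split on $i=1$ versus $i\geq 2$ is exactly the unpacking of that remark. As you rightly observe, the inductive hypothesis is not actually needed, so a bare case analysis suffices.
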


\begin{lemma}\label{lm:March-1-a}
$|\sigma|=i$ for each sequence $\sigma\in\Pi_n^i$.
\end{lemma}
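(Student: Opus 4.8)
The plan is to prove Lemma~\ref{lm:March-1-a} by induction on the integer $i \geq 1$, with the induction hypothesis quantified over \emph{all} nodes of the RBR graph at once. Quantifying over all nodes is the one point that needs a little care: the recursive clause of \eqref{eq:path set} builds a member of $\Pi_n^i$ from a member of $\Pi_m^{i-1}$ for a neighbour $m$ of $n$, so the hypothesis must already be available at $m$, not merely at $n$.

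For the base case $i = 1$, I would simply observe that $\Pi_n^1 = \{\ell_n\}$ by the first clause of \eqref{eq:path set}, so the sole element of $\Pi_n^1$ is the one-term sequence $(\ell_n)$, which has length $1 = i$.

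For the inductive step, I would assume $i \geq 2$ and that $|\rho| = i-1$ holds for every node $m$ and every $\rho \in \Pi_m^{i-1}$. Fixing a node $n$ and a sequence $\sigma \in \Pi_n^i$, the recursive clause of \eqref{eq:path set} yields a neighbour $m \in Adj_n$ and a sequence $\rho \in \Pi_m^{i-1}$ with $\sigma = \ell_n \!::\! \rho$. Since prepending one agent increases the length by exactly one, $|\sigma| = 1 + |\rho| = 1 + (i-1) = i$ by the induction hypothesis. The sub-case where $\Pi_n^i$ is empty (e.g. when $Adj_n = \varnothing$) is handled vacuously.

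The argument is entirely routine and there is no genuine obstacle; the only thing to get right is the quantifier structure of the induction, as noted above. If desired, the same reasoning also re-proves Lemma~\ref{lm:March-1-b} in the base case and transports $hd(\ell_n\!::\!\rho) = \ell_n$ through the inductive step, but that is a separate statement.
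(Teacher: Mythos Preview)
Your proposal is correct and is exactly the natural unfolding of what the paper intends: the paper does not give a detailed proof but simply states that the lemma follows from statement~\eqref{eq:path set}, and your induction on $i$ (quantified over all nodes) is the standard way to make that explicit.
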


\begin{lemma}\label{lm:March-1-c}
For any sequence $\sigma\in\Pi_n^i$, there is a path $p$ starting at node $n$ in the RBR graph such that $\sigma$ is the belief sequence of path $p$.
\end{lemma}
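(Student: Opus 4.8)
The final statement to prove is Lemma~\ref{lm:March-1-c}: for any sequence $\sigma\in\Pi_n^i$, there is a path $p$ starting at node $n$ in the RBR graph such that $\sigma$ is the belief sequence of path $p$.

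The plan is to proceed by induction on $i$, using the recursive characterization of $\Pi_n^i$ given in statement~\eqref{eq:path set}. First I would handle the base case $i=1$: here $\Pi_n^1=\{\ell_n\}$, so the only sequence $\sigma$ is $(\ell_n)$, and the single-node path $p=(n)$ starts at $n$ and has belief sequence $(\ell_n)=\sigma$ by the definition of belief sequence of a path. This is immediate.

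For the inductive step, I would assume the claim holds for $i-1$ (where $i\geq 2$) and take any $\sigma\in\Pi_n^i$. By the reformulated recurrence~\eqref{eq:path set}, there exist a node $m\in Adj_n$ and a sequence $\rho\in\Pi_m^{i-1}$ such that $\sigma=\ell_n\!::\!\rho$. By the induction hypothesis applied to $m$ and $\rho$, there is a path $q=(m_1,\dots,m_{i-1})$ starting at $m$ (so $m_1=m$) in the RBR graph whose belief sequence is $\rho$, i.e. $\rho=(\ell_{m_1},\dots,\ell_{m_{i-1}})$. Since $m\in Adj_n$ means $nEm$ by statement~\eqref{eq:1 step neighbour set}, the sequence $p:=(n,m_1,\dots,m_{i-1})$ is a path in the RBR graph (the edge from $n$ to $m_1=m$ exists, and the remaining edges exist because $q$ is a path), and it starts at $n$. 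Its belief sequence is $(\ell_n,\ell_{m_1},\dots,\ell_{m_{i-1}})=\ell_n\!::\!\rho=\sigma$, which completes the step.

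There is no real obstacle here; the lemma is essentially an unwinding of Definition~\ref{df:path set} together with the observation (already recorded in statement~\eqref{eq:path set}) that the successor relation in the recurrence for $\Pi_n^i$ is exactly the edge relation $E$ restricted to $Adj_n$. The only point requiring a small amount of care is making sure the prepended node $n$ genuinely forms a path with the inductively obtained path $q$, which follows directly from $nEm$; the rest is bookkeeping on belief sequences.
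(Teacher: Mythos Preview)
Your proof is correct and is exactly the natural unwinding the paper has in mind: the paper does not spell out a proof but simply remarks that the lemma ``follows from statement~\eqref{eq:path set},'' and your induction on $i$ using that recurrence is precisely how one makes this explicit.
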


Note that, by item~\ref{dfitem:RBR graph labelling function 1} of Definition~\ref{df:RBR graph}, the belief sequence of any path in an RBR graph is an alternating sequence. 
Then, the next lemma follows from Lemma~\ref{lm:March-1-c}.

\begin{lemma}\label{lm:May-29-a}
Each sequence $\sigma\in\Pi_n^i$ is an alternating sequence.
\end{lemma}

The next two lemmas follow from statements~\eqref{eq:accumulated path set}, \eqref{eq:whole path set} and Lemma~\ref{lm:March-1-a}.
\begin{lemma}\label{lm:equal path set}
$\Psi_{n}^j=\Psi_{n'}^j$ if and only if $\Pi_{n}^i=\Pi_{n'}^i$ for each integer $i$ such that $0<i\leq j$.
\end{lemma}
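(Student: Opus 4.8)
The plan is to exploit the fact, recorded in Lemma~\ref{lm:March-1-a}, that the family $\{\Pi_n^i\}_{i\geq 1}$ is graded by sequence length: every member of $\Pi_n^i$ has length exactly $i$. Consequently the sets $\Pi_n^i$ for distinct values of $i$ are pairwise disjoint, so the union in statement~\eqref{eq:accumulated path set} is a disjoint one, and each slice $\Pi_n^i$ can be recovered from $\Psi_n^j$ (for $i\leq j$) simply by filtering on length. This reduces the biconditional to an elementary set-theoretic manipulation.

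\textbf{The ``if'' direction.} Assume $\Pi_n^i=\Pi_{n'}^i$ for every integer $i$ with $0<i\leq j$. Taking the union of both sides over all such $i$ and applying statement~\eqref{eq:accumulated path set} twice yields $\Psi_n^j=\bigcup_{0<i\leq j}\Pi_n^i=\bigcup_{0<i\leq j}\Pi_{n'}^i=\Psi_{n'}^j$. This direction requires nothing beyond the definition.

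\textbf{The ``only if'' direction.} Assume $\Psi_n^j=\Psi_{n'}^j$ and fix an arbitrary integer $i$ with $0<i\leq j$. I would first establish the identity $\Pi_n^i=\{\sigma\in\Psi_n^j\mid |\sigma|=i\}$: the inclusion ``$\subseteq$'' is immediate since $\Pi_n^i\subseteq\Psi_n^j$ by statement~\eqref{eq:accumulated path set} and $|\sigma|=i$ for $\sigma\in\Pi_n^i$ by Lemma~\ref{lm:March-1-a}; for ``$\supseteq$'', any $\sigma\in\Psi_n^j$ with $|\sigma|=i$ lies in $\Pi_n^{i'}$ for some $i'$ with $0<i'\leq j$, whence $|\sigma|=i'$ by Lemma~\ref{lm:March-1-a}, forcing $i'=i$ and $\sigma\in\Pi_n^i$. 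The same identity holds with $n'$ in place of $n$. Since $\Psi_n^j=\Psi_{n'}^j$ by hypothesis, filtering both sides by the condition $|\sigma|=i$ gives $\Pi_n^i=\Pi_{n'}^i$, as required.

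\textbf{Main obstacle.} There is essentially none: the only substantive ingredient is the length-grading from Lemma~\ref{lm:March-1-a}, and once that is invoked the argument is a short disjoint-union bookkeeping step. The only point meriting care is making explicit that distinct $\Pi_n^i$ are disjoint, so that ``filtering by length'' is well defined and invertible; this is exactly what Lemma~\ref{lm:March-1-a} supplies.
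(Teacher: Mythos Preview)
Your proposal is correct and matches the paper's approach exactly: the paper simply remarks that the lemma follows from statement~\eqref{eq:accumulated path set} and Lemma~\ref{lm:March-1-a}, which is precisely the length-grading argument you spell out.
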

\begin{lemma}\label{lm:equal whole path set}
$\Psi_{n}^*=\Psi_{n'}^*$ if and only if $\Pi_{n}^i=\Pi_{n'}^i$ for each integer $i\geq 1$.
\end{lemma}


\section{Rational Solution and Theorem~\ref{th:rational solution}}\label{sec:app rational solution}

We first consider two arbitrary solutions $S=\{S_n\}_{n\in N}$ and $S'=\{S'_n\}_{n\in N}$ of the same game on the same RBR graph. 
If $S_n\supseteq S'_n$ for each node $n\in N$, then we say that $S$ is a \textit{sup-solution} of $S'$ and $S'$ is a \textit{sub-solution} of $S$ and write $S\sqsupseteq S'$ and $S'\sqsubseteq S$, respectively.
Specifically, we write $S\sqsupsetneq S'$ if $S\sqsupseteq S'$ and there is a node $n$ such that $S_n\supsetneq S'_n$.
The \textit{union} of $S$ and $S'$ is the solution $\{S_n\cup S'_n\}_{n\in N}$ of the same game on the same RBR graph.
Then, the next lemma follows from Definition~\ref{df:belief scene} and statement~\eqref{eq:RBR graph reasoning scene}.
\begin{lemma}\label{lm:monotonicity belief scene}
For two solutions $S$ and $S'$ of the same game on the same RBR graph, if $S\sqsupseteq S'$, then the belief scenes $\tilde{\Theta}_n(S)\sqsupseteq\tilde{\Theta}_n(S')$ for each node $n$ in the RBR graph.
\end{lemma}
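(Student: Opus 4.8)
The plan is to unfold both belief scenes pointwise and compare them agent by agent, using the reformulation in statement~\eqref{eq:RBR graph reasoning scene} (which in turn relies on the bijection $\beta_n$ of Definition~\ref{df:function tau}). First I would fix an arbitrary node $n$ in the RBR graph. By Definition~\ref{df:belief scene}, both $\tilde{\Theta}_n(S)$ and $\tilde{\Theta}_n(S')$ are reasoning scenes of the same agent $\ell_n$, so to establish $\tilde{\Theta}_n(S)\sqsupseteq\tilde{\Theta}_n(S')$ it is enough to verify the defining condition of the $\sqsupseteq$ order on reasoning scenes, namely $\tilde{\Theta}_n^b(S)\supseteq\tilde{\Theta}_n^b(S')$ for every agent $b\neq\ell_n$.

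Then I would split on whether $b\in\mathcal{B}_n$. If $b\in\mathcal{B}_n$, statement~\eqref{eq:RBR graph reasoning scene} gives $\tilde{\Theta}_n^b(S)=S_{\beta_n(b)}$ and $\tilde{\Theta}_n^b(S')=S'_{\beta_n(b)}$; since $S\sqsupseteq S'$ means $S_m\supseteq S'_m$ for every node $m\in N$, instantiating $m:=\beta_n(b)$ yields the required inclusion. If $b\notin\mathcal{B}_n$, then both sides of the inclusion equal $\Delta_b$ by statement~\eqref{eq:RBR graph reasoning scene}, so the inclusion holds trivially. As $n$ and $b$ were arbitrary, this proves the lemma.

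I do not expect any genuine obstacle: the statement is an immediate consequence of the purely pointwise definition of the belief scene together with the pointwise definition of the $\sqsupseteq$ order on solutions. The only mild care needed is to argue via the reformulated statement~\eqref{eq:RBR graph reasoning scene} rather than the original case split of Definition~\ref{df:belief scene} over the existence of an outgoing edge to a node labelled $b$, so that the matched cases for $S$ and $S'$ refer to the \emph{same} neighbouring node $\beta_n(b)$ and the inclusion $S_{\beta_n(b)}\supseteq S'_{\beta_n(b)}$ applies directly.
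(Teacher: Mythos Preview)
Your proposal is correct and follows essentially the same approach as the paper, which simply notes that the lemma follows from Definition~\ref{df:belief scene} and statement~\eqref{eq:RBR graph reasoning scene}. Your argument is just a careful unpacking of that one-line justification.
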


Recall that $\tilde{\Theta}_n(S)$ is belief scene of node $n$ on solution $S$ by Definition~\ref{df:belief scene}.
Then, by Lemma~\ref{lm:monotonicity belief scene}, if $S$ is a sup-solution of $S'$, then for each node $n$ in the RBR graph, the belief scene of $n$ on solution $S$ is a sup-scene of that on solution $S'$. 
Based on this property, we have the next lemma. It shows a kind of ``monotonicity'' in rationalisation on solutions.

\begin{lemma}\label{lm:rationalisation monotonicity}
For any solutions $S$ and $S'$ of the same game on the same RBR graph, if $S\sqsupseteq S'$, then $\mathbb{R}(S)\sqsupseteq\mathbb{R}(S')$.
\end{lemma}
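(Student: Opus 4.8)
The plan is to unfold both sides through the relevant definitions and reduce the claim to the monotonicity of the rational response (Lemma~\ref{lm:rational response monotonicity}), which has already been established. By Definition~\ref{df:rationalisation on solution}, for any solution $T$ and any node $n$ we have $\mathbb{R}(T)_n = \Re_{\ell_n}(\tilde{\Theta}_n(T))$. So the goal $\mathbb{R}(S)\sqsupseteq\mathbb{R}(S')$ amounts, by the definition of the ``$\sqsupseteq$'' order on solutions, to showing $\mathbb{R}(S)_n \supseteq \mathbb{R}(S')_n$, i.e. $\Re_{\ell_n}(\tilde{\Theta}_n(S)) \supseteq \Re_{\ell_n}(\tilde{\Theta}_n(S'))$, for each node $n$ in the RBR graph.

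First I would fix an arbitrary node $n$ and invoke the hypothesis $S\sqsupseteq S'$ together with Lemma~\ref{lm:monotonicity belief scene} to conclude that the belief scenes satisfy $\tilde{\Theta}_n(S) \sqsupseteq \tilde{\Theta}_n(S')$. Then I would apply Lemma~\ref{lm:rational response monotonicity} with $a = \ell_n$, $\Theta_a = \tilde{\Theta}_n(S)$, and $\Theta'_a = \tilde{\Theta}_n(S')$, which directly yields $\Re_{\ell_n}(\tilde{\Theta}_n(S)) \supseteq \Re_{\ell_n}(\tilde{\Theta}_n(S'))$. Rewriting both sides via Definition~\ref{df:rationalisation on solution} gives $\mathbb{R}(S)_n \supseteq \mathbb{R}(S')_n$. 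Since $n$ was arbitrary, this establishes $\mathbb{R}(S)\sqsupseteq\mathbb{R}(S')$ by definition of the sup-solution order.

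There is essentially no obstacle here: the lemma is a one-line composition of two previously proved monotonicity results (scene monotonicity and rational-response monotonicity) with the two defining equations for rationalisation on solutions. The only thing to be careful about is that $\tilde{\Theta}_n(S)$ is a reasoning scene \emph{of agent $\ell_n$} in both cases, so that Lemma~\ref{lm:rational response monotonicity} genuinely applies with a single fixed agent; this is immediate from Definition~\ref{df:belief scene}. No induction, case analysis, or calculation is needed.
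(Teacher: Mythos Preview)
Your proposal is correct and matches the paper's own proof essentially line for line: the paper applies Lemma~\ref{lm:monotonicity belief scene} to get $\tilde{\Theta}_n(S)\sqsupseteq\tilde{\Theta}_n(S')$, then Lemma~\ref{lm:rational response monotonicity} to get the containment of rational responses, and concludes by Definition~\ref{df:rationalisation on solution}.
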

\begin{proof}
Note that, $\Tilde{\Theta}_n(S)\sqsupseteq\Tilde{\Theta}_n(S')$ for any node $n$ in the RBR graph by Lemma~\ref{lm:monotonicity belief scene} and the assumption $S\sqsupseteq S'$. 
Then, $\Re_{\ell_n}(\Tilde{\Theta}_n(S))\supseteq\Re_{\ell_n}(\Tilde{\Theta}_n(S'))$ by Lemma~\ref{lm:rational response monotonicity}.
Therefore, $\mathbb{R}(S)\sqsupseteq\mathbb{R}(S')$ by Definition~\ref{df:rationalisation on solution}.
\end{proof}

The next lemma follows from Lemma~\ref{lm:rationalisation monotonicity} and Definition~\ref{df:ith rationalisation} by mathematical induction on integer $i$. 
\begin{lemma}\label{lm:i order rationalisation monotonicity}
For any solutions $S$ and $S'$ of the same game on the same RBR graph, if $S\sqsupseteq S'$, then $\mathbb{R}^i(S)\sqsupseteq\mathbb{R}^i(S')$ for each integer $i\geq 0$.
\end{lemma}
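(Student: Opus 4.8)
The plan is to prove the statement by mathematical induction on the integer $i$, using Lemma~\ref{lm:rationalisation monotonicity} as the single substantive tool and Definition~\ref{df:ith rationalisation} to unwind the iterated rationalisation. Throughout, $S$ and $S'$ are fixed solutions of the same game on the same RBR graph with $S\sqsupseteq S'$, and the goal is to establish $\mathbb{R}^i(S)\sqsupseteq\mathbb{R}^i(S')$ for every $i\geq 0$.

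For the base case $i=0$, I would simply invoke Definition~\ref{df:ith rationalisation}, which gives $\mathbb{R}^0(S)=S$ and $\mathbb{R}^0(S')=S'$. The hypothesis $S\sqsupseteq S'$ then reads directly as $\mathbb{R}^0(S)\sqsupseteq\mathbb{R}^0(S')$, so the base case requires no work beyond rewriting.

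For the inductive step, assume as the induction hypothesis that $\mathbb{R}^i(S)\sqsupseteq\mathbb{R}^i(S')$ for some $i\geq 0$, and aim at $\mathbb{R}^{i+1}(S)\sqsupseteq\mathbb{R}^{i+1}(S')$. First I would note that $\mathbb{R}^i(S)$ and $\mathbb{R}^i(S')$ are themselves solutions of the same game on the same RBR graph (as observed immediately after Definition~\ref{df:rationalisation on solution}, the rationalisation of a solution is again a solution of the same game on the same graph, and this is preserved under iteration). This observation is what licenses applying Lemma~\ref{lm:rationalisation monotonicity} with $\mathbb{R}^i(S)$ and $\mathbb{R}^i(S')$ playing the roles of $S$ and $S'$. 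Feeding the induction hypothesis $\mathbb{R}^i(S)\sqsupseteq\mathbb{R}^i(S')$ into that lemma yields $\mathbb{R}(\mathbb{R}^i(S))\sqsupseteq\mathbb{R}(\mathbb{R}^i(S'))$, which by the $i\geq 1$ clause of Definition~\ref{df:ith rationalisation} is exactly $\mathbb{R}^{i+1}(S)\sqsupseteq\mathbb{R}^{i+1}(S')$, closing the induction.

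I do not expect any genuine obstacle here: the argument is a textbook induction whose inductive step is a single application of Lemma~\ref{lm:rationalisation monotonicity}. The only point that deserves an explicit sentence, rather than tacit reliance, is the verification that the iterated objects $\mathbb{R}^i(S)$ and $\mathbb{R}^i(S')$ are valid solutions on the same game and graph, since Lemma~\ref{lm:rationalisation monotonicity} is phrased only for such pairs; once that is recorded, the monotonicity propagates automatically from one iteration to the next.
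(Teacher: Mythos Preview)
Your proposal is correct and matches the paper's own approach exactly: the paper simply states that the lemma follows from Lemma~\ref{lm:rationalisation monotonicity} and Definition~\ref{df:ith rationalisation} by mathematical induction on $i$, which is precisely the argument you have written out.
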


Next, we consider a property of stable solutions.
The next lemma follows from Definition~\ref{df:ith rationalisation} and Definition~\ref{df:stable solution}.
It shows that the iterative rationalisation process on a stable solution always stays in the same stable solution.
\begin{lemma}\label{lm:stable sequence}
$\mathbb{R}^{i}(S)=S$ for each stable solution $S$ and each integer $i\geq 0$. 
\end{lemma}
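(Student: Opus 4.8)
The plan is to prove the statement by straightforward induction on the integer $i$, using only Definition~\ref{df:ith rationalisation} (the recursive definition of $\mathbb{R}^i$) and Definition~\ref{df:stable solution} (the fixed-point characterisation of a stable solution).

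For the base case $i=0$, I would simply invoke Definition~\ref{df:ith rationalisation}, which gives $\mathbb{R}^0(S)=S$ directly, with no appeal to stability needed. For the inductive step, I would fix $i\geq 1$ and assume as the induction hypothesis that $\mathbb{R}^{i-1}(S)=S$. Then by Definition~\ref{df:ith rationalisation} we have $\mathbb{R}^i(S)=\mathbb{R}(\mathbb{R}^{i-1}(S))$, which equals $\mathbb{R}(S)$ by the induction hypothesis, and this in turn equals $S$ because $S$ is a stable solution (Definition~\ref{df:stable solution}). Chaining these equalities yields $\mathbb{R}^i(S)=S$, completing the induction.

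I expect there to be no real obstacle here: the argument is a routine two-line induction, and the only thing to be careful about is keeping the two ingredients distinct — the recursive unfolding of $\mathbb{R}^i$ on one side and the single-step fixed-point property $\mathbb{R}(S)=S$ on the other — so that the induction hypothesis is applied to $\mathbb{R}^{i-1}$ before stability is used on the resulting $\mathbb{R}(S)$. If a more self-contained phrasing is preferred, one could alternatively note that $S$ is a fixed point of $\mathbb{R}$ and that $\mathbb{R}^i$ is the $i$-fold composition of $\mathbb{R}$, so any fixed point of $\mathbb{R}$ is automatically a fixed point of every iterate; but the explicit induction is cleaner and matches the surrounding style.
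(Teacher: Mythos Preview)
Your proposal is correct and matches the paper's approach exactly: the paper states that the lemma follows from Definition~\ref{df:ith rationalisation} and Definition~\ref{df:stable solution} without spelling out the induction, and your argument is precisely the routine unfolding of those two definitions.
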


Note that, by Definition~\ref{df:rational solution}, the rational solution is the limit of the iterative rationalisation on (the whole-strategy-space) solution $S^\Delta$. 
Next, we show that the rational solution is well-defined (\textit{i.e.} the limit must exist).

\begin{lemma}\label{lm:chain}
$\mathbb{R}^0(S^{\Delta})\sqsupseteq\mathbb{R}^1(S^{\Delta})
\sqsupseteq\mathbb{R}^2(S^{\Delta})\sqsupseteq\dots$.
\end{lemma}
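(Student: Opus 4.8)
The plan is to reduce the whole decreasing chain to a single base inequality plus the monotonicity already established in Lemma~\ref{lm:i order rationalisation monotonicity}. The key point is that $S^\Delta$ assigns to each node the \emph{entire} strategy space, so it sits above \emph{every} solution of the same game on the same RBR graph; in particular it sits above $\mathbb{R}(S^\Delta)$. Propagating this one fact through iterated rationalisation yields the chain.

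First I would prove the base case $\mathbb{R}^0(S^{\Delta})\sqsupseteq\mathbb{R}^1(S^{\Delta})$. By Definition~\ref{df:ith rationalisation}, $\mathbb{R}^0(S^{\Delta})=S^{\Delta}$, and by statement~\eqref{eq:solution Delta} its $n$-th component is $\Delta_{\ell_n}$. On the other hand, by Definition~\ref{df:rationalisation on solution}, $\mathbb{R}^1(S^{\Delta})_n=\mathbb{R}(S^{\Delta})_n=\Re_{\ell_n}(\tilde{\Theta}_n(S^{\Delta}))$, and by Definition~\ref{df:rational response} this set is a subset of $\Delta_{\ell_n}$. Hence $\mathbb{R}^0(S^{\Delta})_n=\Delta_{\ell_n}\supseteq\mathbb{R}^1(S^{\Delta})_n$ for every node $n$, which is exactly $\mathbb{R}^0(S^{\Delta})\sqsupseteq\mathbb{R}^1(S^{\Delta})$ by the definition of $\sqsupseteq$.

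Next I would apply Lemma~\ref{lm:i order rationalisation monotonicity} with $S:=\mathbb{R}^0(S^{\Delta})=S^{\Delta}$ and $S':=\mathbb{R}^1(S^{\Delta})=\mathbb{R}(S^{\Delta})$. Since $S\sqsupseteq S'$ by the base case, the lemma gives $\mathbb{R}^i(S^{\Delta})\sqsupseteq\mathbb{R}^i(\mathbb{R}(S^{\Delta}))$ for every integer $i\geq 0$. By Definition~\ref{df:ith rationalisation}, $\mathbb{R}^i(\mathbb{R}(S^{\Delta}))=\mathbb{R}^{i+1}(S^{\Delta})$, so $\mathbb{R}^i(S^{\Delta})\sqsupseteq\mathbb{R}^{i+1}(S^{\Delta})$ for all $i\geq 0$, which is precisely the claimed chain.

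I do not expect a genuine obstacle here; the only care needed is the bookkeeping in the base step (using that $\Re_{\ell_n}$ always returns a subset of $\Delta_{\ell_n}$, so no appeal to the structure of the reasoning scene or the RBR graph is required) and invoking the already-proved monotonicity lemma rather than re-deriving it by induction.
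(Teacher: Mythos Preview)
Your proposal is correct and follows essentially the same approach as the paper: establish the base inequality $S^{\Delta}\sqsupseteq\mathbb{R}(S^{\Delta})$ (the paper justifies this in one line by noting that $\mathbb{R}(S^{\Delta})$ is a solution, while you unpack it via Definition~\ref{df:rational response}), and then propagate it using Lemma~\ref{lm:i order rationalisation monotonicity} together with Definition~\ref{df:ith rationalisation}.
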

\begin{proof}
Note that $S^{\Delta}\sqsupseteq\mathbb{R}(S^{\Delta})$ because $\mathbb{R}(S^{\Delta})$ is a solution. 
Then, $\mathbb{R}^i(S^{\Delta})\sqsupseteq\mathbb{R}^{i+1}(S^{\Delta})$ for each $i\geq 0$ by Definition~\ref{df:ith rationalisation} and Lemma~\ref{lm:i order rationalisation monotonicity}.
\end{proof}

The above lemma shows that the iterative rationalisation process based on solution $S^{\Delta}$ produces a ``non-expanding'' sequence of solutions. 
The next lemma shows that the iterative rationalisation process reaches a stable solution after a finite number of iterations and stays in this stable solution forever.
The intuition behind this is that, in the ``non-expanding'' sequence, solutions cannot ``shrink'' forever because of the finiteness property.
Once ``shrinking'' stops, a stable solution is reached and the iterative rationalisation process is ``trapped'' there.

\begin{lemma}\label{lm:rational solution}
There exists an integer $i\geq 0$ such that $\mathbb{R}^j(S^{\Delta})$ is equal to $\mathbb{R}^i(S^{\Delta})$ for each integer $j\geq i$.
\end{lemma}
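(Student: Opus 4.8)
The plan is to exploit the two structural facts already in place: the sequence $\mathbb{R}^0(S^\Delta)\sqsupseteq\mathbb{R}^1(S^\Delta)\sqsupseteq\cdots$ is non-expanding (Lemma~\ref{lm:chain}), and each strategy space $\Delta_{\ell_n}$ is finite (item~\ref{dfitem:game strategy space} of Definition~\ref{df:game}), as is the node set $N$ (item~\ref{dfitem:RBR graph frame} of Definition~\ref{df:RBR graph}). First I would observe that by Lemma~\ref{lm:chain} we have $\mathbb{R}^{i+1}(S^\Delta)_n\subseteq\mathbb{R}^i(S^\Delta)_n\subseteq\Delta_{\ell_n}$ for every node $n$ and every $i\geq 0$. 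Hence for each fixed $n$, the sequence of sets $\big(\mathbb{R}^i(S^\Delta)_n\big)_{i\geq 0}$ is a decreasing chain of subsets of the finite set $\Delta_{\ell_n}$, so it can strictly decrease at most $|\Delta_{\ell_n}|$ times; therefore there is an integer $i_n\geq 0$ with $\mathbb{R}^j(S^\Delta)_n=\mathbb{R}^{i_n}(S^\Delta)_n$ for all $j\geq i_n$.

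Next I would set $i:=\max_{n\in N} i_n$, which exists and is finite since $N$ is finite. For any $j\geq i$ and any node $n$ we have $j\geq i\geq i_n$, so $\mathbb{R}^j(S^\Delta)_n=\mathbb{R}^{i_n}(S^\Delta)_n=\mathbb{R}^i(S^\Delta)_n$, the last equality because $i\geq i_n$. Since this holds for every node $n$, the two families of sets coincide, i.e.\ $\mathbb{R}^j(S^\Delta)=\mathbb{R}^i(S^\Delta)$ for each $j\geq i$, which is exactly the claim.

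There is no genuine obstacle here; the only point requiring a modicum of care is that stabilisation happens node-by-node at possibly different indices $i_n$, and one must take the maximum over the finite node set to get a single uniform index. One could alternatively phrase the whole argument in one stroke using the ``total size'' potential $\sum_{n\in N}|\mathbb{R}^i(S^\Delta)_n|$: by Lemma~\ref{lm:chain} this is a non-increasing sequence of non-negative integers, hence eventually constant, and once it is constant the inclusions $\mathbb{R}^{j+1}(S^\Delta)_n\subseteq\mathbb{R}^j(S^\Delta)_n$ must all be equalities, giving $\mathbb{R}^{j+1}(S^\Delta)=\mathbb{R}^j(S^\Delta)$ and then, by a trivial induction, $\mathbb{R}^j(S^\Delta)=\mathbb{R}^i(S^\Delta)$ for all $j\geq i$. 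Either formulation works; I would present the potential-function version as it is slightly cleaner, and it makes transparent why finiteness of both $N$ and each $\Delta_a$ is needed.
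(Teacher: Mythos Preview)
Your proof is correct. It differs from the paper's in one structural respect worth noting: the paper first uses Lemma~\ref{lm:chain} together with finiteness to locate a single index $i$ with $\mathbb{R}^{i}(S^\Delta)=\mathbb{R}^{i+1}(S^\Delta)$, then observes that this makes $\mathbb{R}^{i}(S^\Delta)$ a stable solution (Definition~\ref{df:stable solution}) and invokes Lemma~\ref{lm:stable sequence} to conclude $\mathbb{R}^{j}(S^\Delta)=\mathbb{R}^{i}(S^\Delta)$ for all $j\geq i$. Your argument instead stabilises node by node and takes the maximum (or, in your alternative, uses the total-size potential and a trivial induction), thereby bypassing the stable-solution machinery entirely. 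What the paper's route buys is a cleaner connection to the surrounding narrative about stable solutions; what your route buys is self-containment, since you do not need Definition~\ref{df:stable solution} or Lemma~\ref{lm:stable sequence} at all. Your potential-function version is in fact very close to the paper's first step; the only substantive difference is that you finish with a direct induction rather than appealing to Lemma~\ref{lm:stable sequence}.
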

\begin{proof}
Note that $\mathbb{R}^{0}(S^{\Delta})_n=S_n^{\Delta}=\Delta_{\ell_n}$ for each node $n$ in an RBR graph by Definition~\ref{df:ith rationalisation} and statement~\eqref{eq:solution Delta}.
Then, $\mathbb{R}^{0}(S^{\Delta})_n$ is a finite set of strategies for each node $n$ by item~\ref{dfitem:game strategy space} of Definition~\ref{df:game}.
Thus, by Lemma~\ref{lm:chain} and the fact that the number of nodes is also finite, there must be an integer $i\geq 0$ such that $\mathbb{R}^{i}(S^{\Delta})=\mathbb{R}^{i+1}(S^{\Delta})$.
This means $\mathbb{R}^{i}(S^{\Delta})$ is a stable solution by  Definition~\ref{df:stable solution}.
Hence, the statement of this lemma follows from Lemma~\ref{lm:stable sequence}.
\end{proof}

Lemma~\ref{lm:rational solution} shows the existence of an integer $i$ such that the sequence of solutions in Lemma~\ref{lm:chain} stabilises after the $i^{th}$ element. By Definition~\ref{df:rational solution}, the rational solution is equal to this stable element. Hence, we can conclude Theorem \ref{th:rational solution} in the main text.

\noindent\textbf{Theorem \ref{th:rational solution}} \textit{There is an integer $i\geq 0$ such that $\mathbb{S}=\mathbb{R}^j(S^{\Delta})$ for each integer $j\geq i$.}

\section{Doxastic Equivalence Between Nodes}\label{sec:app doxastic equivalence}

Theorem \ref{th:doxastic equivalent nodes = indistinguishable} in the main text states the necessary and sufficient condition for the doxastic equivalence of two nodes.
We formally prove the necessity in Subsection~\ref{sec:app doxastic equivalence necessary condition} and the sufficiency in Subsection~\ref{sec:app doxastic equivalence sufficient condition}.

\subsection{Necessity}\label{sec:app doxastic equivalence necessary condition}

The necessity of the condition $\Psi_n^*=\Psi_{n'}^*$ for doxastic equivalence of nodes $n$ and $n'$ is formally stated as the ``only if'' part of Theorem~\ref{th:doxastic equivalent nodes = indistinguishable}.
In this subsection, for two nodes $n$ and $n'$ such that $\Psi_n^*\neq\Psi_{n'}^*$, we construct a game $G_k$ that distinguishes them (\textit{i.e.} $\mathbb{S}(G_k)_n\neq\mathbb{S}'(G_k)_{n'}$).

We first introduce some notations used in the definition of game $G_k$.
For each agent $a\in\mathcal{A}$, let $\Sigma^i_a$ be the set of all \textit{nonempty alternating} sequences of agents of length at most $i$ and starting with agent $a$. Formally,
\begin{equation}\label{eq:Sigma^k_a}
\hspace{-2mm}
\Sigma^i_a:=
\begin{cases}
\varnothing, &\text{if }i=0;\\
\{a\}\cup\!\!\bigcup\limits_{b\in\mathcal{A}\setminus\{a\}}\!\!\big\{a\!::\!\sigma\mid\sigma\in\Sigma_b^{i-1}\big\}, &\text{if }i\geq 1.
\end{cases}
\end{equation}

The next lemma is true because any sequence of length at most $i-1$ is also a sequence of length at most $i$. It can be formally proved by induction on $i$ using statement~\eqref{eq:Sigma^k_a}.
\begin{lemma}\label{lm:Mar-5-1}
$\Sigma_a^{i-1}\subseteq\Sigma_a^{i}$ for any agent $a\in\mathcal{A}$ and any integer $i\geq 1$.
\end{lemma}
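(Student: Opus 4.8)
The plan is to prove $\Sigma_a^{i-1}\subseteq\Sigma_a^i$ by induction on $i\geq 1$, following exactly the structure of the recursive definition in~\eqref{eq:Sigma^k_a}. The key observation is that the recursion defining $\Sigma_a^i$ from the sets $\Sigma_b^{i-1}$ is monotone in the superscript, so inclusion at level $i-1$ propagates to level $i$.

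For the base case $i=1$, I would simply note that $\Sigma_a^0=\varnothing$ by the first clause of~\eqref{eq:Sigma^k_a}, and $\varnothing\subseteq\Sigma_a^1$ trivially, so the claim holds. For the inductive step, fix $i\geq 2$ and assume $\Sigma_b^{i-2}\subseteq\Sigma_b^{i-1}$ for every agent $b\in\mathcal{A}$ (the induction hypothesis applied at level $i-1$ for all agents). Now take an arbitrary $\sigma\in\Sigma_a^{i-1}$; by the second clause of~\eqref{eq:Sigma^k_a}, either $\sigma=a$, in which case $\sigma=a\in\Sigma_a^i$ directly, or $\sigma=a\!::\!\rho$ for some $b\in\mathcal{A}\setminus\{a\}$ and some $\rho\in\Sigma_b^{i-2}$. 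In the latter case, the induction hypothesis gives $\rho\in\Sigma_b^{i-1}$, and then $a\!::\!\rho\in\{a\!::\!\tau\mid\tau\in\Sigma_b^{i-1}\}\subseteq\Sigma_a^i$ by~\eqref{eq:Sigma^k_a}. Either way $\sigma\in\Sigma_a^i$, which establishes $\Sigma_a^{i-1}\subseteq\Sigma_a^i$ and completes the induction.

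This argument is entirely routine; there is no real obstacle. The only point requiring a modicum of care is getting the index bookkeeping right—specifically, that proving the step at level $i$ requires the hypothesis at level $i-1$ for all agents simultaneously, which is why the induction should be phrased as a statement quantified over all $a\in\mathcal{A}$ at each level rather than fixing a single agent throughout. One could alternatively prove directly that every nonempty alternating sequence of length at most $i-1$ starting with $a$ is in particular a nonempty alternating sequence of length at most $i$ starting with $a$, reading off the informal description, but the inductive unfolding of~\eqref{eq:Sigma^k_a} is cleaner and self-contained.
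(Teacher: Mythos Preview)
Your proof is correct and follows exactly the approach the paper indicates: induction on $i$ using the recursive definition~\eqref{eq:Sigma^k_a}. The paper in fact gives no detailed proof at all, only the remark that any sequence of length at most $i-1$ is also one of length at most $i$ and that the formal argument is a routine induction on $i$ via~\eqref{eq:Sigma^k_a}; your write-up simply carries this out.
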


Recall that, by Lemma~\ref{lm:March-1-b}, Lemma~\ref{lm:March-1-a}, Lemma~\ref{lm:May-29-a} and statement~\eqref{eq:accumulated path set}, each sequence $\sigma\in\Psi_n^i$ is an alternating sequence of length at most $i$ and starting with agent $\ell_n$.
Then, the next two lemmas are straightforward.

\begin{lemma}\label{lm:Mar-5-2}
$\Psi_n^i\subseteq\Sigma_{\ell_n}^i$ for any integer $i\geq 0$ and any node $n$ in an RBR graph.
\end{lemma}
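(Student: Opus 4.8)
The plan is to prove Lemma~\ref{lm:Mar-5-2}, namely $\Psi_n^i\subseteq\Sigma_{\ell_n}^i$ for any integer $i\geq 0$ and any node $n$ in an RBR graph, by induction on $i$, directly mirroring the recursive structure of the two definitions in statements~\eqref{eq:path set} and \eqref{eq:Sigma^k_a}. As the remark just before the lemma notes, the ingredients are already in place: every $\sigma\in\Psi_n^i$ is alternating (Lemma~\ref{lm:May-29-a}), has length at most $i$ (Lemma~\ref{lm:March-1-a} together with the definition of $\Psi_n^i$ as a union of the $\Pi_n^{i'}$ with $i'\leq i$), and starts with $\ell_n$ (Lemma~\ref{lm:March-1-b}). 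So in principle the claim is almost immediate from unwinding definitions; the induction is really just to handle the recursive form of $\Sigma_{\ell_n}^i$ cleanly.

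First I would dispose of the base case $i=0$: here $\Psi_n^0=\bigcup_{0<i'\leq 0}\Pi_n^{i'}=\varnothing$ by statement~\eqref{eq:accumulated path set}, and $\Sigma_{\ell_n}^0=\varnothing$ by statement~\eqref{eq:Sigma^k_a}, so the inclusion holds trivially. For the inductive step, assume $i\geq 1$ and that $\Psi_m^{i-1}\subseteq\Sigma_{\ell_m}^{i-1}$ for every node $m$. Take any $\sigma\in\Psi_n^i$. By statement~\eqref{eq:accumulated path set}, $\sigma\in\Pi_n^{i'}$ for some $1\leq i'\leq i$. If $i'=1$ then $\sigma=\ell_n\in\Sigma_{\ell_n}^1\subseteq\Sigma_{\ell_n}^i$ by statement~\eqref{eq:Sigma^k_a} and repeated application of Lemma~\ref{lm:Mar-5-1}. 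If $i'\geq 2$, then by the reformulated statement~\eqref{eq:path set} we have $\sigma=\ell_n\!::\!\rho$ for some $m\in Adj_n$ and some $\rho\in\Pi_m^{i'-1}\subseteq\Psi_m^{i-1}$. By the induction hypothesis, $\rho\in\Sigma_{\ell_m}^{i-1}$. Since $m\in Adj_n$ we have $nEm$, so $\ell_m\neq\ell_n$ by item~\ref{dfitem:RBR graph labelling function 1} of Definition~\ref{df:RBR graph}; thus $\ell_m\in\mathcal{A}\setminus\{\ell_n\}$ and $\rho\in\Sigma_{\ell_m}^{i-1}$ places $\ell_n\!::\!\rho$ in the union term of statement~\eqref{eq:Sigma^k_a} defining $\Sigma_{\ell_n}^i$. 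Hence $\sigma\in\Sigma_{\ell_n}^i$, completing the induction.

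I do not anticipate a real obstacle here; the only point requiring a little care is matching the index bookkeeping between $\Psi_n^i$ (a union over lengths $\leq i$) and $\Sigma_{\ell_n}^i$ (also ``length at most $i$''), which is handled by invoking Lemma~\ref{lm:Mar-5-1} to absorb shorter sequences, and the use of item~\ref{dfitem:RBR graph labelling function 1} to guarantee $\ell_m\neq\ell_n$ so that the recursive clause of statement~\eqref{eq:Sigma^k_a} actually applies. Everything else is a direct substitution of one recursive definition into the other.
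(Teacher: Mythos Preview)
Your proof is correct. The paper does not give a formal proof of this lemma at all: it simply remarks, just before stating it, that by Lemmas~\ref{lm:March-1-b}, \ref{lm:March-1-a}, \ref{lm:May-29-a} and statement~\eqref{eq:accumulated path set} every $\sigma\in\Psi_n^i$ is an alternating sequence of length at most $i$ starting with $\ell_n$, and then declares the lemma ``straightforward'' from the informal description of $\Sigma_{\ell_n}^i$ as the set of all such sequences. Your induction on $i$ instead works directly from the recursive clause~\eqref{eq:Sigma^k_a}, which is slightly more laborious but has the merit of not relying on that informal characterisation of $\Sigma_{\ell_n}^i$ (which the paper never proves coincides with the recursive definition). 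In substance the two arguments are the same observation; yours just makes the matching of recursive structures explicit.
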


\begin{lemma}\label{lm:Mar-5-3}
$\Psi_{n}^i\cap\Sigma_{\ell_n}^j=\Psi_{n}^{\min(i,j)}$ for any node $n$ in an RBR graph and any integers $i,j\geq 0$.
\end{lemma}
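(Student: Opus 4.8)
The plan is to reduce the identity to a statement about the lengths of the sequences involved, so that intersecting with $\Sigma_{\ell_n}^j$ does nothing except impose the length bound $|\sigma|\le j$. First I would invoke Lemma~\ref{lm:Mar-5-2} to note that $\Psi_n^i\subseteq\Sigma_{\ell_n}^i$; hence every $\sigma\in\Psi_n^i$ is already an alternating sequence starting with $\ell_n$, and the only constraint in the definition~\eqref{eq:Sigma^k_a} of $\Sigma_{\ell_n}^j$ that is not automatically satisfied on $\Psi_n^i$ is the bound $|\sigma|\le j$. Therefore $\Psi_n^i\cap\Sigma_{\ell_n}^j=\{\sigma\in\Psi_n^i\mid |\sigma|\le j\}$.

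Next I would expand $\Psi_n^i=\bigcup_{0<k\le i}\Pi_n^k$ via~\eqref{eq:accumulated path set} and combine it with Lemma~\ref{lm:March-1-a}, which gives $|\sigma|=k$ for every $\sigma\in\Pi_n^k$. Then $\{\sigma\in\Psi_n^i\mid |\sigma|\le j\}=\bigcup_{0<k\le i,\; k\le j}\Pi_n^k=\bigcup_{0<k\le\min(i,j)}\Pi_n^k=\Psi_n^{\min(i,j)}$, which is exactly the claim. For a fully rigorous write-up I would split into the two inclusions. For ``$\supseteq$'': $\Psi_n^{\min(i,j)}\subseteq\Psi_n^i$ since it is a sub-union, and $\Psi_n^{\min(i,j)}\subseteq\Sigma_{\ell_n}^{\min(i,j)}\subseteq\Sigma_{\ell_n}^j$ by Lemma~\ref{lm:Mar-5-2} and iterated applications of Lemma~\ref{lm:Mar-5-1}; together these give $\Psi_n^{\min(i,j)}\subseteq\Psi_n^i\cap\Sigma_{\ell_n}^j$. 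For ``$\subseteq$'': given $\sigma$ in the intersection, pick $k$ with $0<k\le i$ and $\sigma\in\Pi_n^k$; Lemma~\ref{lm:March-1-a} yields $k=|\sigma|$, and membership of $\sigma$ in $\Sigma_{\ell_n}^j$ forces $|\sigma|\le j$, so $k\le\min(i,j)$ and hence $\sigma\in\Psi_n^{\min(i,j)}$.

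Finally I would dispatch the degenerate cases $i=0$ or $j=0$ separately: by~\eqref{eq:accumulated path set} one has $\Psi_n^0=\varnothing$ and by~\eqref{eq:Sigma^k_a} one has $\Sigma_{\ell_n}^0=\varnothing$, so the intersection is empty, and $\Psi_n^{\min(i,j)}=\Psi_n^0=\varnothing$ as well. I do not anticipate any genuine obstacle here; the only point requiring a moment of care is the opening observation that Lemma~\ref{lm:Mar-5-2} makes the alternation and first-letter conditions in $\Sigma_{\ell_n}^j$ redundant on $\Psi_n^i$, after which the argument is just bookkeeping with the defining unions and Lemma~\ref{lm:March-1-a}.
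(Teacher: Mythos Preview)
Your proposal is correct and follows exactly the route the paper indicates: the paper states that Lemma~\ref{lm:Mar-5-3} is ``straightforward'' from Lemmas~\ref{lm:March-1-b}, \ref{lm:March-1-a}, \ref{lm:May-29-a} and statement~\eqref{eq:accumulated path set}, i.e., from the fact that every $\sigma\in\Psi_n^i$ is an alternating sequence of length at most $i$ starting with $\ell_n$. Your write-up simply spells this out in detail via the length bookkeeping with $\Pi_n^k$, which is precisely the intended argument.
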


Using the notations above, we define a game $G_k$ parameterised by integer $k$ as follows.

\begin{definition}\label{df:sequence game}
Game $G_k$ where $k\geq 1$ is a tuple $(\Delta,\preceq)$ such that, for each agent $a\in\mathcal{A}$,
\begin{enumerate}
\item\label{dfitem:sequence game strategy space} $\Delta_a=\Sigma^k_a\cup\{\bot_a\}$;
\item\label{dfitem:sequence game preference} $\s\preceq_a\s'$ if and only if $u_a(\s)\leq u_a(\s')$ for all outcomes $\s,\s'\in\prod_{b\in\mathcal{A}}\Delta_b$, where
\begin{equation} \label{eq:sequence game utility function}
u_a(\s):=
\begin{cases}
0, & \text{if } s_a=\bot_a;\\
1, & \text{if } s_a\in\Sigma^k_a \text{ and } \exists b\in\mathcal{A} (tl(s_a)=s_b);\\
-1, & \text{otherwise}.
\end{cases}
\end{equation}
\end{enumerate}
\end{definition}

Informally, $G_k$ is a game among the agents in set $\mathcal{A}$.
In this game, each agent $a\in\mathcal{A}$ can either choose an alternating sequence starting with $a$ and of length at most $k$ (\textit{i.e.} $s_a\in\Sigma_a^k$) or choose to quit the competition (\textit{i.e.} $s_a=\bot_a$).
A collection of all agents' choices forms an outcome. 
Each agent $a$'s preference over all possible outcomes is defined based on a utility function $u_a$ which maps each outcome to a value in the set $\{-1,0,1\}$.
Intuitively, if agent $a$ quits the competition (\textit{i.e.} $s_a=\bot_a$), then she neither wins nor loses, which is captured by $u_a(\s)=0$.
If agent $a$ does not quit the competition, then she must choose a sequence from set $\Sigma_a^k$.
We say that agent $a$ \textit{overrides} agent $b$ in outcome $\s$ if $tl(s_a)=s_b$.
Then, an agent $a$ who does not quit the competition wins (captured by $u_a(\s)=1$) if she overrides another agent and loses (captured by $u_a(\s)=-1$) otherwise.

Note that the empty sequence is not in set $\Sigma_a^k$ for each agent $a\in\mathcal{A}$ and each integer $k\geq 0$.
Then, agent $a$ loses for sure by choosing the singleton sequence $(a)\in\Sigma_a^k$ since $tl(a)$ is an empty sequence.
In other words, the singleton sequences are not rationalisable.
Thus, {\em no rational agent would choose a singleton sequence}.

Consider the situation when agent $a$ chooses a sequence $s_a\in\Sigma_a^k$.
Observe that, for each agent $b\in\mathcal{A}$, every sequence in set $\Sigma_b^k$ starts with $b$ by statement~\eqref{eq:Sigma^k_a}.
Then, $tl(s_a)\in\Delta_b$ if and only if $b=hd(tl(s_a))$ by item~\ref{dfitem:sequence game strategy space} of Definition~\ref{df:sequence game}.
Let $b$ be the agent $hd(tl(s_a))$.
Then, by choosing sequence $s_a$, agent $a$ cannot override any agent other than $b$.
Meanwhile, if we consider a specific reasoning scene $\Theta_a$ of agent $a$, then in agent $a$'s mind, agent $b$ chooses from set $\Theta_a^b$.
Thus, agent $a$ cannot override agent $b$ if $tl(s_a)\notin\Theta_a^b$.
Hence, {\em any strategy $s_a\in\Sigma_a^k$ such that $tl(s_a)\notin\Theta_a^b$ is not rationalisable}.

The next lemma formally captures the above two observations.
It has a stronger statement that a competing strategy $s_a\in\Sigma_a^k$ is not rationalisable only in the above two situations.
It also considers the situation when an agent decides to quit the competition (\textit{i.e. $s_a=\bot_a$}).

\begin{lemma}\label{lm:sequence game best response}
For any reasoning scene $\Theta_a$ of agent $a$ in game $G_k$, if $\bot_b\in \Theta^b_a$ for each agent $b\in\mathcal{A}\setminus\{a\}$, then 
\begin{equation}\notag
\Delta_a\setminus\Re_a(\Theta_a)=\{a\}\cup\!\!\bigcup_{b\in\mathcal{A}\setminus\{a\}}\!\!\big\{a\!::\!\sigma\mid \sigma\in\Sigma^{k-1}_b\setminus\Theta_a^b\big\}.
\end{equation}
\end{lemma}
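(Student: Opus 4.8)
The plan is to use that in $G_k$ the preferences are induced by the utilities $u_a$ (item~\ref{dfitem:sequence game preference} of Definition~\ref{df:sequence game}), so that by Definition~\ref{df:dominated strategy} we have $s_a\lhd_{\Theta_a}s'_a$ exactly when $u_a(\s_{-a},s_a)<u_a(\s_{-a},s'_a)$ for every $\s_{-a}\in\Theta_a$, and then to read off the rationalisable strategies (Definition~\ref{df:rational response}) from the very rigid shape of $u_a$. The key preliminary observation, already isolated in the discussion preceding the lemma, is that for any $s_a\in\Sigma_a^k$ the number $u_a(\s_{-a},s_a)$ depends on $\s_{-a}$ through a single coordinate: if $s_a=(a)$ then $tl(s_a)$ is the empty sequence, which is not a strategy of any agent, so $u_a(\s_{-a},s_a)=-1$ irrespective of $\s_{-a}$; and if $s_a=a\!::\!\sigma$ with $b:=hd(\sigma)\ne a$, then $tl(s_a)=\sigma$ begins with $b$, hence $tl(s_a)=s_c$ can hold only for $c=b$, so $u_a(\s_{-a},s_a)=1$ when $s_b=\sigma$ and $u_a(\s_{-a},s_a)=-1$ otherwise. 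Besides, $u_a(\s_{-a},\bot_a)=0$ always and $u_a$ never exceeds $1$.

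Using this, I would run the following case analysis, recalling that $\Theta_a=\prod_{c\ne a}\Theta_a^c$ is a product of nonempty sets and that, by item~\ref{dfitem:sequence game preference} of Definition~\ref{df:sequence game}, strict preference under $\preceq_a$ means strict inequality of utilities. \emph{(i)} $\bot_a$ is never dominated: a dominator $s'_a$ would need $u_a(\s_{-a},s'_a)=1$ for all $\s_{-a}\in\Theta_a$, which is impossible for $s'_a\in\{\bot_a,(a)\}$, and for $s'_a=a\!::\!\sigma$ with $b=hd(\sigma)$ it is impossible because $\bot_b\in\Theta_a^b$ by hypothesis, so some $\s_{-a}\in\Theta_a$ has $s_b=\bot_b\ne\sigma$ and then $u_a(\s_{-a},s'_a)=-1$. \emph{(ii)} The singleton $(a)$ is dominated by $\bot_a$, since $u_a(\s_{-a},(a))=-1<0=u_a(\s_{-a},\bot_a)$ for every $\s_{-a}\in\Theta_a$. \emph{(iii)} If $s_a=a\!::\!\sigma$ with $b=hd(\sigma)$ and $\sigma\notin\Theta_a^b$, then every $\s_{-a}\in\Theta_a$ has $s_b\in\Theta_a^b$, hence $s_b\ne\sigma$, hence $u_a(\s_{-a},s_a)=-1<0=u_a(\s_{-a},\bot_a)$, so again $\bot_a$ dominates $s_a$. \emph{(iv)} If $s_a=a\!::\!\sigma$ with $b=hd(\sigma)$ and $\sigma\in\Theta_a^b$, pick $\s_{-a}\in\Theta_a$ with $s_b=\sigma$ (and arbitrary entries elsewhere); then $u_a(\s_{-a},s_a)=1$ is the maximal value of $u_a$, so no strategy can strictly beat $s_a$ at this $\s_{-a}$, and $s_a$ is not dominated.

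It remains to assemble these facts. By statement~\eqref{eq:Sigma^k_a} with $i=k$ (using $k\ge1$), $\Delta_a=\{\bot_a\}\cup\{(a)\}\cup\bigcup_{b\ne a}\{a\!::\!\sigma\mid\sigma\in\Sigma_b^{k-1}\}$, and every non-singleton element of $\Sigma_a^k$ is $a\!::\!\sigma$ for a unique nonempty $\sigma$ with $hd(\sigma)\ne a$ (alternation) and $|\sigma|\le k-1$ (length bound), so $\sigma\in\Sigma_{hd(\sigma)}^{k-1}$. Cases (i)--(iv) then say that a strategy fails to be rationalisable exactly when it is $(a)$ or of the form $a\!::\!\sigma$ with $\sigma\in\Sigma_b^{k-1}\setminus\Theta_a^b$ for some $b\ne a$, which is precisely the claimed right-hand side (here $\{a\}$ in the statement denotes the set $\{(a)\}$, as in statement~\eqref{eq:Sigma^k_a}). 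I do not anticipate a genuine obstacle; the points needing care are the bookkeeping that $a\!::\!\sigma$ sweeps out exactly $\Sigma_a^k\setminus\{(a)\}$ as $\sigma$ ranges over $\bigcup_{b\ne a}\Sigma_b^{k-1}$, and the single, essential use of the hypothesis $\bot_b\in\Theta_a^b$, which is exactly what keeps $\bot_a$ from being dominated (without it, a scene with $\Theta_a^b=\{\sigma\}$ would make $a\!::\!\sigma$ dominate $\bot_a$ and break the identity).
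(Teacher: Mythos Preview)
Your proposal is correct and follows essentially the same approach as the paper: both arguments split $\Delta_a$ into $\bot_a$, the singleton $(a)$, and the remaining $a\!::\!\sigma$, show that $(a)$ and those $a\!::\!\sigma$ with $\sigma\notin\Theta_a^b$ are dominated by $\bot_a$, and show the rest are undominated by exhibiting a profile at which they attain utility at least as high as any competitor. The only cosmetic difference is in the treatment of $\bot_a$: the paper picks the single all-$\bot$ profile (which lies in $\Theta_a$ by the hypothesis) and observes every other strategy scores $-1$ there, whereas you argue that any candidate dominator $s'_a=a\!::\!\sigma$ fails at the profile with $s_b=\bot_b$; both uses of the hypothesis $\bot_b\in\Theta_a^b$ are equivalent.
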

\begin{proof}
We use $LHS$ and $RHS$ to denote the left-hand side and the right-hand side of the above formula, respectively.

($LHS\subseteq RHS$): Consider any strategy $s_a\notin RHS$. By statement~\eqref{eq:Sigma^k_a} and item~\ref{dfitem:sequence game strategy space} of Definition~\ref{df:sequence game}, either $s_a=\bot_a$ or $tl(s_a)\in\Theta_a^b$ for some agent $b\in\mathcal{A}\setminus\{a\}$.

If $s_a=\bot_a$, then consider an outcome $\s$ such that $s_b=\bot_b$ for each agent $b\in\mathcal{A}$. 
Note that $u_a(\s)=0$ and $u_a(\s_{-a},s'_a)=-1$ for any strategy $s'_a\neq \bot_a$ by statement~\eqref{eq:sequence game utility function}. 
Then, $\s\nprec_a (\s_{-a},s'_a)$ for any strategy $s'_a\in\Delta_a$ by item~\ref{dfitem:sequence game preference} of  Definition~\ref{df:sequence game}. 
Thus, $\bot_a\nlhd_{\Theta_a}s'_a$ for any strategy $s'_a\in\Delta_a$ by Definition~\ref{df:dominated strategy}.
Hence, $\bot_a\in \Re_a(\Theta_a)$ by Definition~\ref{df:rational response}.
Therefore, $\bot_a\notin LHS$.

If $tl(s_a)\in\Theta_a^b$ for an agent $b\in\mathcal{A}\setminus\{a\}$, then $tl(s_a)\in\Sigma^k_b$ by item~\ref{dfitem:sequence game strategy space} of Definition~\ref{df:sequence game} because $tl(s_a)\neq\bot_b$.
Consider any tuple $\s_{-a}\in\Theta_a$ such that $s_b=tl(s_a)$.
Then, $u_a(\s_{-a},s_a)=1$ by statement~\eqref{eq:sequence game utility function}. 
Note that $1$ is the highest utility in game $G_k$.
Thus, $(\s_{-a},s_a)\nprec_a (\s_{-a},s'_a)$ for any strategy $s'_a\in\Delta_a$ by item~\ref{dfitem:sequence game preference} of  Definition~\ref{df:sequence game}. 
Then, $s_a\nlhd_{\Theta_a}s'_a$ for any strategy $s'_a\in\Delta_a$ by Definition~\ref{df:dominated strategy}. 
Hence, $s_a\in \Re_a(\Theta_a)$ by Definition~\ref{df:rational response}.
Therefore, $s_a\notin LHS$.

($LHS\supseteq RHS$): Consider any strategy $s_a\in RHS$. Then, $tl(s_a)\notin\Theta_a^b$ for each agent $b\in\mathcal{A}\setminus\{a\}$.
Note that $u_a(\s_{-a},s_a)=-1$ and $u_a(\s_{-a},\bot_a)=0$ for each tuple $\s_{-a}\in\Theta_a$ by statement~\eqref{eq:sequence game utility function}. 
Then, $(\s_{-a},s_a)\prec_a (\s_{-a},\bot_a)$ for each tuple $\s_{-a}\in\Theta_a$ by item~\ref{dfitem:sequence game preference} of  Definition~\ref{df:sequence game}. 
Thus, $s_a\lhd_{\Theta_a}\bot_a$ by Definition~\ref{df:dominated strategy}. 
Hence, $s_a\notin \Re_a(\Theta_a)$ by Definition~\ref{df:rational response}.
Therefore, $s_a\in LHS$.
\end{proof}

The above lemma shows which strategies should be eliminated in a rationalisation.
Using this result, the next lemma considers the result of iterative rationalisation of game $G_k$ on an arbitrary RBR graph.
It shows that the $i^{th}$ rationalisation result in game $G_k$ is related to the belief hierarchy (\textit{i.e.} $\Psi_n^i$) bounded by integer $i$ for each (real or doxastic) agent denoted by a node in the RBR graph.

\begin{lemma}\label{lm:i rationalisation on sequence game}
For the game $G_k=(\Delta,\preceq)$, any integer $i$ such that $i\geq 0$, and any node $n$ in the RBR graph $(N,E,\ell,\pi)$,
\begin{equation}\notag
\mathbb{R}^i(S^\Delta)_n=\Delta_{\ell_n}\setminus\Psi_n^i.
\end{equation}
\end{lemma}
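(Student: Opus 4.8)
The plan is to prove the identity $\mathbb{R}^i(S^\Delta)_n=\Delta_{\ell_n}\setminus\Psi_n^i$ by induction on $i$, using Lemma~\ref{lm:sequence game best response} at each step to compute the rationalisation. First I would check the base case $i=0$: by Definition~\ref{df:ith rationalisation} and statement~\eqref{eq:solution Delta}, $\mathbb{R}^0(S^\Delta)_n=S^\Delta_n=\Delta_{\ell_n}$, and by statement~\eqref{eq:accumulated path set} the set $\Psi_n^0$ is the empty union, hence $\Psi_n^0=\varnothing$, so both sides equal $\Delta_{\ell_n}$.

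For the inductive step, assume $\mathbb{R}^i(S^\Delta)_m=\Delta_{\ell_m}\setminus\Psi_m^i$ for every node $m$. I would first observe that the belief scene $\tilde\Theta_n(\mathbb{R}^i(S^\Delta))$ satisfies the hypothesis of Lemma~\ref{lm:sequence game best response}, i.e. $\bot_b\in\tilde\Theta_n^b(\mathbb{R}^i(S^\Delta))$ for each $b\neq\ell_n$: indeed, by statement~\eqref{eq:RBR graph reasoning scene} this scene is either $\mathbb{R}^i(S^\Delta)_{\beta_n(b)}=\Delta_b\setminus\Psi_{\beta_n(b)}^i$ or $\Delta_b$, and in either case it contains $\bot_b$ since $\bot_b\notin\Psi_{\beta_n(b)}^i$ (every element of a path set is a nonempty sequence of agents, by Lemma~\ref{lm:March-1-a}, not a $\bot$ symbol). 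Then Lemma~\ref{lm:sequence game best response} gives
\begin{equation}\notag
\Delta_{\ell_n}\setminus\mathbb{R}^{i+1}(S^\Delta)_n=\{\ell_n\}\cup\bigcup_{b\neq\ell_n}\big\{\ell_n\!::\!\sigma\mid\sigma\in\Sigma_b^{k-1}\setminus\tilde\Theta_n^b(\mathbb{R}^i(S^\Delta))\big\}.
\end{equation}
It then remains to show the right-hand side equals $\Psi_n^{i+1}$, which I would do by a case split on whether $b\in\mathcal{B}_n$. If $b=\ell_m$ for the (unique) $m\in Adj_n$, then $\tilde\Theta_n^b=\Delta_b\setminus\Psi_m^i$, so $\Sigma_b^{k-1}\setminus\tilde\Theta_n^b=\Sigma_b^{k-1}\cap\Psi_m^i$; I would argue this equals $\Psi_m^{\min(i,k-1)}$ using Lemma~\ref{lm:Mar-5-3} (noting $\Psi_m^i\subseteq\Sigma_{\ell_m}^i$ by Lemma~\ref{lm:Mar-5-2}). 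If $b\notin\mathcal{B}_n$, then $\tilde\Theta_n^b=\Delta_b$, so $\Sigma_b^{k-1}\setminus\tilde\Theta_n^b=\varnothing$ and that agent contributes nothing; this matches the fact that $\Pi_n^{i+1}$ only prepends $\ell_n$ to sequences coming from actual neighbours, via the reformulation~\eqref{eq:path set}.

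Assembling the cases, the union becomes $\{\ell_n\}\cup\bigcup_{m\in Adj_n}\{\ell_n\!::\!\sigma\mid\sigma\in\Psi_m^{\min(i,k-1)}\}$; using statement~\eqref{eq:accumulated path set}, the reformulation~\eqref{eq:path set}, and Lemma~\ref{lm:equal path set}, I would unfold this into $\bigcup_{1<j\leq \min(i,k-1)+1}\Pi_n^j$ together with the singleton $\Pi_n^1=\{\ell_n\}$, i.e. $\Psi_n^{\min(i,k-1)+1}$. The remaining subtlety is the interaction with the global length bound $k$: since $\Delta_{\ell_n}=\Sigma_{\ell_n}^k\cup\{\bot_{\ell_n}\}$ only contains sequences of length at most $k$, for $i+1>k$ we have $\Psi_n^{i+1}\cap\Delta_{\ell_n}=\Psi_n^k$, and I expect the cleanest route is to first prove the lemma for the truncated version (replacing $\Psi_n^i$ by $\Psi_n^{\min(i,k)}$ implicitly, or simply noting $\Delta_{\ell_n}\setminus\Psi_n^i=\Delta_{\ell_n}\setminus\Psi_n^{\min(i,k)}$ always holds since $\Delta_{\ell_n}$ has no sequences longer than $k$). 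I expect the main obstacle to be exactly this bookkeeping with the $\min(i,k-1)+1$ versus $i+1$ discrepancy: one must check that prepending $\ell_n$ raises the effective bound from $k-1$ back to $k$ and that $\Psi_n^{i+1}$ and $\Psi_n^{\min(i,k-1)+1}$ agree after intersecting with $\Delta_{\ell_n}$, which is where Lemma~\ref{lm:Mar-5-3} and Lemma~\ref{lm:Mar-5-1} do the real work.
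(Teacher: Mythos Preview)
Your proposal is correct and follows essentially the same route as the paper's proof: induction on $i$, verifying the $\bot_b$ hypothesis so Lemma~\ref{lm:sequence game best response} applies, splitting on $b\in\mathcal{B}_n$ versus $b\notin\mathcal{B}_n$, using Lemma~\ref{lm:Mar-5-3} to get $\Psi_m^{\min(i,k-1)}$, and then handling the $\min$-bookkeeping via the observation $\Delta_{\ell_n}\setminus\Psi_n^i=\Delta_{\ell_n}\setminus\Psi_n^{\min(i,k)}$. The only cosmetic difference is that the paper names $\hat{i}=\min(i,k)$ explicitly and works from $i-1$ to $i$; your reference to Lemma~\ref{lm:equal path set} in the assembly step is unnecessary (statements~\eqref{eq:accumulated path set} and~\eqref{eq:path set} suffice), but the argument is otherwise identical.
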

\begin{proof}
We prove the statement of the lemma by induction on integer $i$.
For the base case where $i=0$, by Definition~\ref{df:ith rationalisation}, statements~\eqref{eq:solution Delta} and \eqref{eq:accumulated path set},
\begin{equation}\notag
\mathbb{R}^0(S^\Delta)_n=\Delta_{\ell_n}=\Delta_{\ell_n}\setminus\varnothing=\Delta_{\ell_n}\setminus\Psi_n^0.
\end{equation}

Next, we consider the cases where $i\geq 1$.
By Definition~\ref{df:ith rationalisation} and Definition~\ref{df:rationalisation on solution},
\begin{equation}\label{eq:Mar-4-4}
\mathbb{R}^i(S^\Delta)_n=\mathbb{R}(\mathbb{R}^{i-1}(S^\Delta))_n=\Re_{\ell_n}(\tilde{\Theta}_n(\mathbb{R}^{i-1}(S^\Delta))).
\end{equation}
Note that, by statement~\eqref{eq:RBR graph reasoning scene}, for each agent $b\neq\ell_n$,
\begin{equation}\label{eq:Mar-4-3}
\tilde{\Theta}^b_n(\mathbb{R}^{i-1}(S^\Delta))=
\begin{cases}
\mathbb{R}^{i-1}(S^\Delta)_{\beta_n(b)}, & \text{if } b\in\mathcal{B}_n;\\
\Delta_b, & \text{otherwise}.
\end{cases}
\end{equation}
Besides, by the induction hypothesis, for each agent $b\in\mathcal{B}_n$,
\begin{equation}\notag
\mathbb{R}^{i-1}(S^\Delta)_{\beta_n(b)}=\Delta_b\setminus\Psi_{\beta_n(b)}^{i-1}.
\end{equation}
Thus, by statement~\eqref{eq:Mar-4-3}, for each agent $b\neq\ell_n$,
\begin{equation}\label{eq:Mar-4-5}
\tilde{\Theta}^b_n(\mathbb{R}^{i-1}(S^\Delta))=
\begin{cases}
\Delta_b\setminus\Psi_{\beta_n(b)}^{i-1}, & \text{if } b\in\mathcal{B}_n;\\
\Delta_b, & \text{otherwise}.
\end{cases}
\end{equation}
Observe that, by Lemma~\ref{lm:Mar-5-1} and item~\ref{dfitem:sequence game strategy space} of Definition~\ref{df:sequence game},
\begin{equation}\label{eq:3-9-2}
\Sigma^{k-1}_b\subseteq\Sigma^{k}_b\subseteq\Delta_b  
\end{equation}
for each agent $b\in\mathcal{A}$.
Then,
\begin{equation}\label{eq:3-10-1}
\Sigma^{k-1}_b\setminus\Delta_b=\varnothing.
\end{equation}
Also note that, for any sets $A$, $B$, and $C$, if $A\subseteq B$, then $A\setminus(B\setminus C)=A\cap C$.
Then, by statement~\eqref{eq:3-9-2},
\begin{equation}\label{eq:Mar-4-7}
\Sigma^{k-1}_b\setminus\left(\Delta_b\setminus\Psi_{\beta_n(b)}^{i-1}\right)=\Sigma^{k-1}_b\cap\Psi_{\beta_n(b)}^{i-1}.
\end{equation}
Let 
\begin{equation}\label{eq:Mar-4-8}
\hat{i}:=\min(i,k).
\end{equation}
Then, $\min(i\!-\!1,k\!-\!1)=\hat{i}-1$.
Thus, $\Psi_{\beta_n(b)}^{i-1}\cap\Sigma^{k-1}_b=\Psi_{\beta_n(b)}^{\hat{i}-1}$ by Lemma~\ref{lm:Mar-5-3} and the fact that $\ell_{\beta_n(b)}=b$ by Definition~\ref{df:function tau}.
Thus, by statement~\eqref{eq:Mar-4-7},
\begin{equation}\notag
\Sigma^{k-1}_b\setminus\left(\Delta_b\setminus\Psi_{\beta_n(b)}^{i-1}\right)=\Psi_{\beta_n(b)}^{\hat{i}-1}.
\end{equation}
Hence, by statements~\eqref{eq:Mar-4-5} and \eqref{eq:3-10-1}, for each agent $b\neq\ell_n$,
\begin{equation}\label{eq:3-9-1}
\Sigma^{k-1}_b\setminus\tilde{\Theta}^b_n(\mathbb{R}^{i-1}(S^\Delta))=
\begin{cases}
\Psi_{\beta_n(b)}^{\hat{i}-1}, & \text{if } b\in\mathcal{B}_n;\\
\varnothing, & \text{otherwise}.
\end{cases}
\end{equation}
Note that, by statement~\eqref{eq:Mar-4-4},
\begin{equation}\notag
\Delta_{\ell_n}\setminus\mathbb{R}^i(S^\Delta)_n=\Delta_{\ell_n}\setminus\Re_{\ell_n}(\tilde{\Theta}_n(\mathbb{R}^{i-1}(S^\Delta))).
\end{equation}
Then, by Lemma~\ref{lm:sequence game best response}.
\begin{equation}\notag
\begin{aligned}
&\Delta_{\ell_n}\setminus\mathbb{R}^i(S^\Delta)_n\\
=&\{\ell_n\}\cup\bigcup_{b\in\mathcal{A}\setminus\{a\}}\{\ell_n\!::\!\sigma\mid\sigma\in\Sigma^{k-1}_b\setminus\tilde{\Theta}^b_n(\mathbb{R}^{i-1}(S^\Delta))\big\}.
\end{aligned}
\end{equation}
Thus, by statement~\eqref{eq:3-9-1},
\begin{equation}\notag
\Delta_{\ell_n}\setminus\mathbb{R}^i(S^\Delta)_n=\{\ell_n\}\cup\bigcup_{b\in\mathcal{B}_n}\big\{\ell_n\!::\!\sigma\mid\sigma\in\Psi_{\beta_n(b)}^{\hat{i}-1}\big\}.
\end{equation}
Then, by Definition~\ref{df:function tau},
\begin{equation}\notag
\Delta_{\ell_n}\setminus\mathbb{R}^i(S^\Delta)_n=\{\ell_n\}\cup\!\!\!\!\bigcup_{m\in Adj_n}\!\!\!\!\big\{\ell_n\!::\!\sigma\mid\sigma\in\Psi_{m}^{\hat{i}-1}\big\}.
\end{equation}
Hence, by Definition~\ref{df:path set} and statement~\eqref{eq:path set},
\begin{equation}\label{eq:Mar-4-10}
\begin{aligned}
&\Delta_{\ell_n}\setminus\mathbb{R}^i(S^\Delta)_n\\
&=\Pi_n^1\cup\bigcup_{m\in Adj_n}\big\{\ell_n\!::\!\sigma\mid\sigma\in\bigcup_{0<j\leq\hat{i}-1}\Pi_{m}^{j}\big\}\\
&=\Pi_n^1\cup\bigcup_{0<j\leq \hat{i}-1}\bigcup_{m\in Adj_n}\big\{\ell_n\!::\!\sigma\mid\sigma\in\Pi_{m}^{j}\big\}\\
&=\Pi_n^1\cup\bigcup_{0<j\leq \hat{i}-1}\big\{\ell_n\!::\!\sigma\mid m\in Adj_n, \sigma\in\Pi_{m}^{j}\big\}\\
&=\Pi_n^1\cup\bigcup_{0<j\leq \hat{i}-1}\Pi_n^{j+1}
=\Psi_n^{\hat{i}}.
\end{aligned}
\end{equation}
Note that, by statement~\eqref{eq:Mar-4-8} and Lemma~\ref{lm:Mar-5-3},
\begin{equation}\label{eq:Mar-4-9}
\Sigma_{\ell_n}^k\setminus\Psi_n^{\hat{i}}
=\Sigma_{\ell_n}^k\setminus(\Sigma_{\ell_n}^k\cap\Psi_n^i)
=\Sigma_{\ell_n}^k\setminus\Psi_n^i.
\end{equation}
Meanwhile, $\bot_{\ell_n}\notin\Psi_n^{\hat{i}}$.
Therefore, 
\begin{equation}\notag
\begin{aligned}
&\mathbb{R}^i(S^\Delta)_n=\Delta_{\ell_n}\setminus\Psi_n^{\hat{i}}=(\{\bot_{\ell_n}\}\cup\Sigma_{\ell_n}^k)\setminus\Psi_n^{\hat{i}}\\
&\!\!=\!\{\bot_{\ell_n}\}\cup(\Sigma_{\ell_n}^k\setminus\Psi_n^{\hat{i}})=\{\bot_{\ell_n}\}\cup(\Sigma_{\ell_n}^k\setminus\Psi_n^i)=\Delta_{\ell_n}\!\setminus\!\Psi_n^i
\end{aligned}
\end{equation}
by statements~\eqref{eq:Mar-4-10}, \eqref{eq:Mar-4-9}, and item~\ref{dfitem:sequence game strategy space} of Definition~\ref{df:sequence game}.
\end{proof}

Recall that, as stated in Theorem~\ref{th:rational solution}, the rational solution can be reached by a finite number of iterative rationalisations.
Using the result in the above lemma, next, we consider the rational solution of game $G_k$ on an arbitrary RBR graph.

\begin{lemma}\label{lm:sequence game rational solution}
$\mathbb{S}(G_k)_n=\Delta_{\ell_n}\setminus\Psi_n^k$ for the game $G_k$ and any node $n$ in the RBR graph $(N,E,\ell,\pi)$.
\end{lemma}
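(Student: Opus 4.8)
The plan is to derive this as a corollary of Lemma~\ref{lm:i rationalisation on sequence game} together with Theorem~\ref{th:rational solution}. First I would invoke Theorem~\ref{th:rational solution}: there exists an integer $i_0 \geq 0$ such that $\mathbb{S}(G_k) = \mathbb{R}^j(S^\Delta)$ for every $j \geq i_0$. In particular, picking any $j$ with $j \geq i_0$ and $j \geq k$, we get $\mathbb{S}(G_k)_n = \mathbb{R}^j(S^\Delta)_n$ for each node $n$.

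Next I would apply Lemma~\ref{lm:i rationalisation on sequence game} at this index $j$, which gives $\mathbb{R}^j(S^\Delta)_n = \Delta_{\ell_n} \setminus \Psi_n^j$. So it remains to show $\Delta_{\ell_n} \setminus \Psi_n^j = \Delta_{\ell_n} \setminus \Psi_n^k$, i.e. that $\Psi_n^j$ and $\Psi_n^k$ have the same intersection with $\Delta_{\ell_n}$. The key observation is that every sequence in $\Psi_n^*$ is an alternating sequence starting with $\ell_n$ (Lemmas~\ref{lm:March-1-b} and \ref{lm:May-29-a}), and that $\Delta_{\ell_n} = \Sigma_{\ell_n}^k \cup \{\bot_{\ell_n}\}$ by item~\ref{dfitem:sequence game strategy space} of Definition~\ref{df:sequence game}, where $\Sigma_{\ell_n}^k$ consists precisely of alternating sequences starting with $\ell_n$ of length at most $k$. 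Since $\bot_{\ell_n}$ is a fresh symbol and never lies in any $\Psi_n^i$, and by Lemma~\ref{lm:Mar-5-2} we have $\Psi_n^i \subseteq \Sigma_{\ell_n}^i$, Lemma~\ref{lm:Mar-5-3} yields $\Psi_n^j \cap \Sigma_{\ell_n}^k = \Psi_n^{\min(j,k)} = \Psi_n^k$ because $j \geq k$. Intersecting with $\Delta_{\ell_n}$ therefore produces the same set $\Psi_n^k$ whether we start from $\Psi_n^j$ or from $\Psi_n^k$ (the latter being already contained in $\Sigma_{\ell_n}^k$), so $\Delta_{\ell_n} \setminus \Psi_n^j = \Delta_{\ell_n} \setminus \Psi_n^k$.

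I do not expect any serious obstacle here: the statement is essentially the "stabilised" version of Lemma~\ref{lm:i rationalisation on sequence game}, and all the set-algebra tools ($\Psi_n^i \subseteq \Sigma_{\ell_n}^i$, $\Psi_n^i \cap \Sigma_{\ell_n}^j = \Psi_n^{\min(i,j)}$, and the disjointness of $\bot_{\ell_n}$ from path sets) are already available. The only point requiring a little care is making sure the index $j$ is chosen to be simultaneously large enough for Theorem~\ref{th:rational solution} to apply and at least $k$, which is harmless since both are finite lower bounds. I would phrase the proof as: fix $j = \max(i_0, k)$, chain the two displayed equalities, and then clean up via Lemmas~\ref{lm:Mar-5-2} and \ref{lm:Mar-5-3}.
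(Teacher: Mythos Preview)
Your proposal is correct and follows essentially the same route as the paper: both apply Lemma~\ref{lm:i rationalisation on sequence game} at a sufficiently large index and then use Lemma~\ref{lm:Mar-5-3} together with $\bot_{\ell_n}\notin\Psi_n^i$ to show $\Delta_{\ell_n}\setminus\Psi_n^i=\Delta_{\ell_n}\setminus\Psi_n^k$ once $i\ge k$. The only cosmetic difference is that the paper invokes Definition~\ref{df:rational solution} (the limit) directly after showing the sequence is eventually constant, whereas you pass through Theorem~\ref{th:rational solution} to pick a concrete index $j=\max(i_0,k)$; both are fine.
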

\begin{proof}
Note that $\bot_{\ell_n}\notin\Psi_n^k$ by Definition~\ref{df:path set}.
Then, by item~\ref{dfitem:sequence game strategy space} of Definition~\ref{df:sequence game},
\begin{equation}\notag
\begin{aligned}
&\Delta_{\ell_n}\setminus\Psi_n^i
=(\{\bot_{\ell_n}\}\cup\Sigma_{\ell_n}^k)\setminus\Psi_n^i\\
&=\{\bot_{\ell_n}\}\cup(\Sigma_{\ell_n}^k\setminus\Psi_n^i)
=\{\bot_{\ell_n}\}\cup(\Sigma_{\ell_n}^k\setminus(\Sigma_{\ell_n}^k\cap\Psi_n^i)).
\end{aligned}
\end{equation}
Note that, $\Sigma_{\ell_n}^k\cap\Psi_n^i=\Psi_n^k$ when $i\geq k$ by Lemma~\ref{lm:Mar-5-3}.
Thus,
\begin{equation}\notag
\begin{aligned}
&\Delta_{\ell_n}\setminus\Psi_n^i=\{\bot_{\ell_n}\}\cup(\Sigma_{\ell_n}^k\setminus\Psi_n^k)\\
&=(\{\bot_{\ell_n}\}\cup\Sigma_{\ell_n}^k)\setminus\Psi_n^k=\Delta_{\ell_n}\setminus\Psi_n^k
\end{aligned}
\end{equation}
when $i\geq k$. 
Therefore, the statement of the lemma follows from Lemma~\ref{lm:i rationalisation on sequence game} and Definition~\ref{df:rational solution}.
\end{proof}

The above lemma shows the exact form of a rational solution of game $G_k$ on an RBR graph, which is closely related to the structural feature of the RBR graph. 
Using this result, we next prove the ``only if'' part of Theorem~\ref{th:doxastic equivalent nodes = indistinguishable}.

\noindent\textbf{Theorem \ref{th:doxastic equivalent nodes = indistinguishable}} (``only if'' part) \textit{The nodes $n,n'$ are \textbf{not} doxastically equivalent if $\Psi_n^*\neq \Psi_{n'}^*$.}

\begin{proof}
By Lemma~\ref{lm:equal whole path set} and the assumption $\Psi_n^*\neq \Psi_{n'}^*$, there is a minimal integer $i\geq 1$ such that
\begin{equation}\label{eq:4-4-1}
\Pi_n^i\neq \Pi_{n'}^i
\end{equation}
Then, for each integer $j$ such that $1\leq j<i$,
\begin{equation}\label{eq:4-4-2}
\Pi_n^j=\Pi_{n'}^j.
\end{equation}
Consider the game $G_i$ by setting parameter $k$ to $i$ in game $G_k$.
Then, by Lemma~\ref{lm:sequence game rational solution},
\begin{equation}\label{eq:Mar-4-11}
\mathbb{S}(G_i)_n=\Delta_{\ell_n}\setminus\Psi_n^i \text{\hspace{1mm} and \hspace{1mm}} \mathbb{S}'(G_i)_{n'}=\Delta_{\ell'_{n'}}\setminus\Psi_{n'}^i.
\end{equation}

If $i=1$, then $\ell_n\neq\ell'_{n'}$ by statements~\eqref{eq:path set main} and \eqref{eq:4-4-1}. 
Meanwhile, by statement~\eqref{eq:Mar-4-11}, item~\ref{dfitem:sequence game strategy space} of Definition~\ref{df:sequence game}, statement~\eqref{eq:Sigma^k_a}, and Definition~\ref{df:path set},
\begin{equation}\notag
\begin{aligned}
\mathbb{S}(G_i)_n=&\Delta_{\ell_n}\setminus\Psi^1_n=(\{\bot_{\ell_n}\}\cup\Sigma^1_{\ell_n})\setminus\Psi_{n}^1\\
&=(\{\bot_{\ell_n}\}\cup\{\ell_n\})\setminus\{\ell_n\}=\{\bot_{\ell_n}\}
\end{aligned}
\end{equation}
and similarly $\mathbb{S}'(G_i)_{n'}\!=\!\{\bot_{\ell'_{n'}}\}$.
Thus, $\mathbb{S}(G_i)_n\neq\mathbb{S}'(G_i)_{n'}$.
Hence, the statement of the theorem is true by Definition~\ref{df:doxastic equivalent nodes}.

If $i\geq 2$, then $\Pi_n^1=\Pi_{n'}^1$ by statement~\eqref{eq:4-4-2}.
This implies $\ell_n=\ell'_{n'}$ by statement~\eqref{eq:path set}.
Hence, by item~\ref{dfitem:sequence game strategy space} of Definition~\ref{df:sequence game},
\begin{equation}\label{eq:Mar-4-12}
\Delta_{\ell_n}=\Delta_{\ell'_{n'}}.
\end{equation}
Moreover, by Lemma~\ref{lm:Mar-5-2} and by item~\ref{dfitem:sequence game strategy space} of Definition~\ref{df:sequence game},
\begin{equation}\label{eq:Mar-5-1}
\Psi_n^i\subseteq\Sigma^i_{\ell_n}\subsetneq\Delta_{\ell_n} \text{\hspace{1mm} and \hspace{1mm}} \Psi_{n'}^i\subseteq\Sigma^i_{\ell'_{n'}}\subsetneq\Delta_{\ell'_{n'}}.
\end{equation}
However, $\Psi_n^i\neq \Psi_{n'}^i$ by statements~\eqref{eq:accumulated path set} and \eqref{eq:4-4-1}.
Then, $\mathbb{S}(G_i)_n\neq\mathbb{S}'(G_i)_{n'}$ by statements~\eqref{eq:Mar-4-11}, \eqref{eq:Mar-4-12}, and \eqref{eq:Mar-5-1}.
Thus, the statement of the theorem follows from Definition~\ref{df:doxastic equivalent nodes}.
\end{proof}

\subsection{Sufficiency}\label{sec:app doxastic equivalence sufficient condition}

The sufficiency of the condition $\Psi_n^*=\Psi_{n'}^*$ for doxastic equivalence of nodes $n$ and $n'$ is formally stated as the ``if'' part of Theorem~\ref{th:doxastic equivalent nodes = indistinguishable}.
In this subsection, for two nodes $n$ and $n'$ such that $\Psi_n^*=\Psi_{n'}^*$, we show that the iterative rationalisation process on them is a bisimulation and thus the rational solution on them is the same for each game.

To formally prove the sufficiency, we need a few more notations.
We still consider the nodes $n$ and $n'$ in the RBR graphs $B=(N,E,\ell,\pi)$ and $B'=(N',E',\ell',\pi')$, respectively.
For an arbitrary game $G=(\Delta,\preceq)$, denote by
$\mathbb{R}_B^i(S^\Delta)$ and $\mathbb{R}_{B'}^i(S^\Delta)$ the results of the $i^{th}$ rationalisation of the solution $S^\Delta$ on the RBR graphs $B$ and $B'$, respectively.
The next lemma presents a simple observation. We list it here for use in the proofs later.

\begin{lemma}\label{lm:3-11-a}
$\ell_n=\ell'_{n'}$ if $\Psi_{n}^{i}=\Psi_{n'}^{i}$ for some integer $i\geq 1$.
\end{lemma}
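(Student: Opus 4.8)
The plan is to prove Lemma~\ref{lm:3-11-a} directly from the definitions of the path sets, using only the hypothesis $\Psi_n^i = \Psi_{n'}^i$ for some $i \geq 1$. The key observation is that $\Psi_n^i$ always contains the singleton sequence $(\ell_n)$, since $\Pi_n^1 = \{\ell_n\}$ by the base case of Definition~\ref{df:path set} and $\Pi_n^1 \subseteq \Psi_n^i$ by statement~\eqref{eq:accumulated path set} because $i \geq 1$. The same reasoning gives $(\ell'_{n'}) \in \Psi_{n'}^i$.

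First I would note that, by Lemma~\ref{lm:March-1-b}, every sequence $\sigma \in \Psi_n^i$ satisfies $hd(\sigma) = \ell_n$; in particular all sequences in $\Psi_n^i$ start with the same agent $\ell_n$, and symmetrically all sequences in $\Psi_{n'}^i$ start with $\ell'_{n'}$. Since $\Psi_n^i = \Psi_{n'}^i$ is nonempty (it contains the singleton $(\ell_n)$), pick any $\sigma$ in this common set: then $\ell_n = hd(\sigma) = \ell'_{n'}$, which is exactly the claim. Alternatively, and even more directly, one can use Lemma~\ref{lm:equal path set}: from $\Psi_n^i = \Psi_{n'}^i$ with $i \geq 1$ we get $\Pi_n^1 = \Pi_{n'}^1$, and since $\Pi_n^1 = \{\ell_n\}$ and $\Pi_{n'}^1 = \{\ell'_{n'}\}$ by the base case of Definition~\ref{df:path set}, the singleton sets $\{\ell_n\}$ and $\{\ell'_{n'}\}$ coincide, forcing $\ell_n = \ell'_{n'}$.

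There is no real obstacle here — this is a one-line unpacking of definitions, and I would write it using the second approach (via Lemma~\ref{lm:equal path set} and the base case of the $\Pi$ recursion) since it is the cleanest. The only thing to be careful about is invoking the hypothesis $i \geq 1$ so that the length-$1$ layer is actually included in the accumulated path set $\Psi_n^i$; without that the statement would be vacuous or false. The lemma is stated separately precisely because this tiny fact gets reused in several later proofs (e.g. about bisimulation between the iterative rationalisation processes on $B$ and $B'$), so isolating it avoids repetition.
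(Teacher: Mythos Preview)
Your proposal is correct, and the second approach you settle on---invoke Lemma~\ref{lm:equal path set} to obtain $\Pi_n^1=\Pi_{n'}^1$ from $\Psi_n^i=\Psi_{n'}^i$, then read off $\ell_n=\ell'_{n'}$ from the base case $\Pi_n^1=\{\ell_n\}$---is exactly the paper's proof. Your first alternative via Lemma~\ref{lm:March-1-b} also works but is slightly less direct.
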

\begin{proof}
Note that, $\Pi_n^1=\Pi_{n'}^1$ is always true by Lemma~\ref{lm:equal path set} and the assumption $\Psi_n^{i}=\Psi_{n'}^{i}$ of the lemma.
Then, the statement of the lemma follows from statement~\eqref{eq:path set}.
\end{proof}

Recall that set $\Psi_n^*$ is a structural feature of the RBR graph concerning node $n$.
The next two lemmas consider the ``diffusion'' of such a structural feature via the edges in an RBR graph.
Intuitively, set $\Psi_n^i$ illustrates a partial belief of the agent denoted by node $n$.
Specifically, the assumption $\Psi_n^i=\Psi_{n'}^i$ captures a ``belief similarity'' between the agents denoted by nodes $n$ and $n'$.
Then, Lemma~\ref{lm:Feb-21-a} and Lemma~\ref{lm:4-4-a} below consider the ``belief similarity'' between the agents denoted by the nodes $m\in Adj_n$ and $m'\in Adj_{n'}$ that are labelled with the same agent.

\begin{lemma}\label{lm:Feb-21-a}
For any integer $i\geq 1$, any node $m\in Adj_n$, and any node $m'\in Adj_{n'}$, if $\Psi_{n}^{i}=\Psi_{n'}^{i}$  and $\ell_m=\ell'_{m'}$, then $\Psi_{m}^{i-1}=\Psi_{m'}^{i-1}$.
\end{lemma}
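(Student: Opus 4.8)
The plan is to reduce the claim about the accumulated path sets $\Psi$ to a claim about the fixed-length path sets $\Pi$, and then to extract the needed information about the child node $m$ (resp.\ $m'$) from the hypothesis $\Psi_n^i = \Psi_{n'}^i$ by ``peeling off'' the common head label. First I would invoke Lemma~\ref{lm:equal path set} to observe that $\Psi_n^i = \Psi_{n'}^i$ is equivalent to $\Pi_n^j = \Pi_{n'}^j$ for every $j$ with $0 < j \le i$, and that the goal $\Psi_m^{i-1} = \Psi_{m'}^{i-1}$ is equivalent to $\Pi_m^j = \Pi_{m'}^j$ for every $j$ with $0 < j \le i-1$. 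So it suffices to fix an arbitrary integer $j$ with $1 \le j \le i-1$ and prove $\Pi_m^j = \Pi_{m'}^j$.

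For this, I would use the reformulation~\eqref{eq:path set} of $\Pi_n^{j+1}$, namely $\Pi_n^{j+1} = \{\ell_n \!::\! \sigma \mid k \in Adj_n, \sigma \in \Pi_k^{j}\}$, and the analogous identity for $n'$. Since $1 \le j \le i-1$, we have $2 \le j+1 \le i$, so the hypothesis gives $\Pi_n^{j+1} = \Pi_{n'}^{j+1}$. Also, by Lemma~\ref{lm:3-11-a} (applicable since $i \ge 1$ and $\Psi_n^i = \Psi_{n'}^i$), we have $\ell_n = \ell'_{n'}$; call this common agent $a$. Now the key step: for any sequence $\rho$ of agents, $a \!::\! \rho \in \Pi_n^{j+1}$ iff $\rho \in \Pi_k^{j}$ for some $k \in Adj_n$, and because of item~\ref{dfitem:different child label} of Definition~\ref{df:RBR graph} the child $k$ is uniquely determined by $hd(\rho) = \ell_k$ (each sequence in $\Pi_k^j$ starts with $\ell_k$ by Lemma~\ref{lm:March-1-b}). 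In particular, since $m \in Adj_n$, the sequences in $\Pi_n^{j+1}$ whose second element is $\ell_m$ are exactly $\{a \!::\! \sigma \mid \sigma \in \Pi_m^{j}\}$; symmetrically, using $\ell_m = \ell'_{m'}$, the sequences in $\Pi_{n'}^{j+1}$ whose second element is $\ell'_{m'} = \ell_m$ are exactly $\{a \!::\! \sigma \mid \sigma \in \Pi_{m'}^{j}\}$. From $\Pi_n^{j+1} = \Pi_{n'}^{j+1}$ these two sets coincide, and since $a \!::\! (\cdot)$ is injective we conclude $\Pi_m^{j} = \Pi_{m'}^{j}$, as desired. (One should also handle the degenerate possibility $i = 1$: then the range $0 < j \le i-1$ is empty, so $\Psi_m^{0} = \varnothing = \Psi_{m'}^{0}$ holds trivially.)

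The main obstacle is the bookkeeping in the ``peeling'' step: one must be careful that, within a single set $\Pi_n^{j+1}$, two distinct children $k_1, k_2 \in Adj_n$ contribute disjoint families of sequences, which is precisely what item~\ref{dfitem:different child label} (no two adjacent nodes share a label) together with Lemma~\ref{lm:March-1-b} (every sequence in $\Pi_k^j$ has head $\ell_k$) guarantees. Without that disjointness, equality of the union sets $\Pi_n^{j+1} = \Pi_{n'}^{j+1}$ would not let us isolate the contribution of $m$ and $m'$. Everything else is routine manipulation of the recursive definition and of Lemma~\ref{lm:equal path set}; I would present the argument at the level of the fixed $j$ and then quote Lemma~\ref{lm:equal path set} once at the end to package it back into the $\Psi$ form.
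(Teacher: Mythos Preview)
Your proposal is correct and rests on exactly the same key observation as the paper: item~\ref{dfitem:different child label} of Definition~\ref{df:RBR graph} together with Lemma~\ref{lm:March-1-b} ensures that distinct children of $n$ (resp.\ $n'$) contribute sequences with distinct heads, so the contribution of $m$ (resp.\ $m'$) can be isolated by filtering on the second element of the sequence.

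The packaging differs slightly. The paper argues by contradiction directly at the level of $\Psi$: it picks a witness $\rho\in\Psi_m^{i-1}\setminus\Psi_{m'}^{i-1}$, shows $\ell_n::\rho\in\Psi_n^i$, and then uses the uniqueness-of-child-label argument to show $\ell_{n'}::\rho\notin\Psi_{n'}^i$. You instead reduce to the fixed-length sets $\Pi$ via Lemma~\ref{lm:equal path set} and prove $\Pi_m^j=\Pi_{m'}^j$ directly for each $j$. Your route is arguably cleaner (no contradiction, no length bookkeeping with Lemma~\ref{lm:March-1-a}), while the paper's avoids the extra appeal to Lemma~\ref{lm:equal path set}; neither gains any real generality over the other.
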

\begin{proof}
We prove this lemma by contradiction.
Note that, by Lemma~\ref{lm:3-11-a} and the assumption $\Psi_n^{i}=\Psi_{n'}^{i}$,
\begin{equation}\label{eq:3-11-1}
\ell_n=\ell'_{n'}.
\end{equation}
Suppose that $\Psi_{m}^{i-1}\neq\Psi_{m'}^{i-1}$.
More specifically, without loss of generality, suppose there is a sequence $\rho$ of agents such that $\rho\in\Psi_{m}^{i-1}$ and $\rho\notin\Psi_{m'}^{i-1}$.
Then, by statement~\eqref{eq:accumulated path set}, there is an integer $j$ such that $0<j\leq i-1$
and
\begin{equation}\label{eq:Mar-1-3}
\rho\in\Pi_m^j.
\end{equation}
Thus, $\ell_n\!::\!\rho\in\Pi_n^{j+1}$ by statement~\eqref{eq:path set} and the assumption $m\in Adj_n$ of the lemma.
Hence, by statement~\eqref{eq:accumulated path set} and the fact that $0<j\leq i-1$,
\begin{equation}\label{eq:Mar-1-9}
\ell_n\!::\!\rho\in\Psi_n^{i}.
\end{equation}

On the other hand, by the assumption $\rho\notin\Psi_{m'}^{i-1}$ and statement~\eqref{eq:accumulated path set}, for each integer $k$ such that $0<k\leq i-1$,
\begin{equation}\label{eq:Mar-1-6}
\rho\notin\Pi_{m'}^{k}.
\end{equation}
Moreover, by the assumption $\ell_m=\ell_{m'}$, statement~\eqref{eq:Mar-1-3}, and Lemma~\ref{lm:March-1-b},
\begin{equation}\label{eq:Mar-1-4}
hd(\rho)=\ell_m=\ell_{m'}.
\end{equation}
Meanwhile, $\ell_{m'}\neq\ell_{m''}$ for each node $m''\in Adj_{n'}\setminus\{m'\}$ by item~\ref{dfitem:different child label} of Definition~\ref{df:RBR graph}.
Then, $hd(\rho)\neq\ell_{m''}$ for each node $m''\in Adj_{n'}\setminus\{m'\}$ by statement~\eqref{eq:Mar-1-4}.
Hence, by Lemma~\ref{lm:March-1-b},
\begin{equation}\label{eq:Mar-1-5}
\rho\notin\Pi_{m''}^{k}
\end{equation}
for each integer $k$ such that $0<k\leq i-1$ and each node $m''\in Adj_{n'}\setminus\{m'\}$.
In a word, $\rho\notin\Pi_{\bar{m}}^{k}$ for each integer $k$ such that $0<k\leq i-1$ and each node $\bar{m}\in Adj_{n'}$ by statements~\eqref{eq:Mar-1-6} and \eqref{eq:Mar-1-5}.
Then, by statement~\eqref{eq:path set}, for each integer $k$ such that $0<k\leq i-1$,
\begin{equation}\label{eq:Mar-1-7}
\ell_{n'}\!::\!\rho\notin\Pi_{n'}^{k+1}.
\end{equation}
Note that $|\rho|=j$ by statement~\eqref{eq:Mar-1-3} and Lemma~\ref{lm:March-1-a}. Then, $|\rho|\geq 1$ because $0<j\leq i-1$. Thus, $|\ell_{n'}\!::\!\rho|\geq 2$.
Hence, $\ell_{n'}\!::\!\rho\notin\Pi_{n'}^{1}$ by Lemma~\ref{lm:March-1-a}.
Then, $\ell_{n'}\!::\!\rho\notin\Pi_{n'}^{k}$ for each integer $k$ such that $0<k\leq i$ by statement~\eqref{eq:Mar-1-7}.
Thus, $\ell_{n'}\!::\!\rho\notin\Psi_{n'}^{i}$ by statement~\eqref{eq:accumulated path set}.
Therefore, $\Psi_{n}^{i}\neq\Psi_{n'}^{i}$ by statements~\eqref{eq:3-11-1} and \eqref{eq:Mar-1-9}, which contradicts the assumption $\Psi_{n}^{i}=\Psi_{n'}^{i}$ of the lemma.
\end{proof}

\begin{lemma}\label{lm:4-4-a}
For any nodes $m\in Adj_n$ and $m'\in Adj_{n'}$, if $\Psi_{n}^{*}=\Psi_{n'}^{*}$  and $\ell_m=\ell'_{m'}$, then $\Psi_{m}^{*}=\Psi_{m'}^{*}$.
\end{lemma}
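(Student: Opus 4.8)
The plan is to reduce the "infinite" statement $\Psi_m^*=\Psi_{m'}^*$ to the family of "finite" statements $\Psi_m^i=\Psi_{m'}^i$ for all $i\geq 0$, and then apply Lemma~\ref{lm:Feb-21-a} for each $i$. Concretely, I would first invoke Lemma~\ref{lm:equal whole path set} to rewrite the hypothesis $\Psi_n^*=\Psi_{n'}^*$ as: $\Pi_n^i=\Pi_{n'}^i$ for every integer $i\geq 1$. By statement~\eqref{eq:accumulated path set}, this is equivalent to $\Psi_n^i=\Psi_{n'}^i$ for every integer $i\geq 0$ (the case $i=0$ being trivial, since both sides are $\varnothing$). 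So the hypothesis hands me, for free, the whole family $\{\Psi_n^i=\Psi_{n'}^i\}_{i\geq 0}$.

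Next, fix an arbitrary integer $i\geq 0$; I want to show $\Psi_m^i=\Psi_{m'}^i$. Apply Lemma~\ref{lm:Feb-21-a} with the index $i+1$ in place of its "$i$": its hypotheses are exactly $\Psi_n^{i+1}=\Psi_{n'}^{i+1}$ (which I have, since $i+1\geq 1$), $m\in Adj_n$, $m'\in Adj_{n'}$, and $\ell_m=\ell'_{m'}$ (all assumed here), and its conclusion is $\Psi_m^{(i+1)-1}=\Psi_{m'}^{(i+1)-1}$, i.e.\ $\Psi_m^i=\Psi_{m'}^i$. Since $i\geq 0$ was arbitrary, $\Psi_m^i=\Psi_{m'}^i$ for every integer $i\geq 0$.

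Finally, to conclude $\Psi_m^*=\Psi_{m'}^*$, I translate back: by statement~\eqref{eq:accumulated path set} and Lemma~\ref{lm:equal path set} (applied with $j=i$ for each $i$), the equalities $\Psi_m^i=\Psi_{m'}^i$ for all $i\geq 1$ give $\Pi_m^i=\Pi_{m'}^i$ for all $i\geq 1$, and then Lemma~\ref{lm:equal whole path set} yields $\Psi_m^*=\Psi_{m'}^*$. (Alternatively, and more directly, one can observe $\Psi_m^*=\bigcup_{i\geq 1}\Psi_m^i=\bigcup_{i\geq 1}\Psi_{m'}^i=\Psi_{m'}^*$ using statements~\eqref{eq:accumulated path set} and \eqref{eq:whole path set}.) There is no real obstacle here: the lemma is essentially a "pass to the limit" companion of Lemma~\ref{lm:Feb-21-a}, and the only thing to be careful about is the index bookkeeping---using $i+1$ when invoking Lemma~\ref{lm:Feb-21-a} so that its conclusion lands at level $i$, and handling the $i=0$ base case separately since Lemma~\ref{lm:Feb-21-a} is stated only for $i\geq 1$.
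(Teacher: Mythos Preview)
Your proposal is correct and follows essentially the same approach as the paper: convert $\Psi_n^*=\Psi_{n'}^*$ to the finite-level equalities $\Psi_n^i=\Psi_{n'}^i$ via Lemmas~\ref{lm:equal path set} and~\ref{lm:equal whole path set}, apply Lemma~\ref{lm:Feb-21-a} at each level to obtain $\Psi_m^{i-1}=\Psi_{m'}^{i-1}$, and convert back. The only cosmetic difference is that the paper indexes the application of Lemma~\ref{lm:Feb-21-a} by $i$ (getting $\Psi_m^{i-1}=\Psi_{m'}^{i-1}$ for each $i\geq 1$) whereas you index it by $i+1$, but this is the same argument.
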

\begin{proof}
Note that, by Lemma~\ref{lm:equal path set} and Lemma~\ref{lm:equal whole path set}, the assumption $\Psi_{n}^{*}=\Psi_{n'}^{*}$ implies that $\Psi_{n}^{i}=\Psi_{n'}^{i}$ for each integer $i\geq 1$.
Then, the assumptions $m\in Adj_n$, $m'\in Adj_{n'}$, and $\ell_m=\ell'_{m'}$ imply that $\Psi_{m}^{i-1}=\Psi_{m'}^{i-1}$ for each integer $i\geq 1$ by Lemma~\ref{lm:Feb-21-a}.
Thus, $\Psi_{m}^{*}=\Psi_{m'}^{*}$ again by Lemma~\ref{lm:equal path set} and Lemma~\ref{lm:equal whole path set}.
\end{proof}

Lemma~\ref{lm:3-15-a} below shows that, if a ``belief similarity'' exists between the agents denoted by nodes $n$ and $n'$, then the sets of rational agents should be the same in their beliefs and a rational agent $b$ should also have similar beliefs.

\begin{lemma}\label{lm:3-15-a}
For any integer $i$ such that $i\geq 2$, if $\Psi_n^i=\Psi_{n'}^i$, then  $\mathcal{B}_n=\mathcal{B}_{n'}$ and $\Psi_{\beta_n(b)}^{i-1}=\Psi_{\beta'_n(b)}^{i-1}$ for each agent $b\in\mathcal{B}_n$.
\end{lemma}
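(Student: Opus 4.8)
The plan is to split the conclusion into its two conjuncts and handle them in order: first establish $\mathcal{B}_n=\mathcal{B}_{n'}$, then, using that equality, obtain $\Psi_{\beta_n(b)}^{i-1}=\Psi_{\beta'_{n'}(b)}^{i-1}$ for each agent $b\in\mathcal{B}_n$ as a direct consequence of Lemma~\ref{lm:Feb-21-a}.

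For the first conjunct, the observation driving the argument is that $\mathcal{B}_n$ is recoverable from the second level of the path set. Unfolding the recursion in statement~\eqref{eq:path set} at level $2$, and using $\Pi_m^1=\{\ell_m\}$ for every node $m$, gives $\Pi_n^2=\{(\ell_n,\ell_m)\mid m\in Adj_n\}=\{(\ell_n,b)\mid b\in\mathcal{B}_n\}$, and symmetrically $\Pi_{n'}^2=\{(\ell'_{n'},b)\mid b\in\mathcal{B}_{n'}\}$. Since $i\geq 2$, Lemma~\ref{lm:equal path set} converts the hypothesis $\Psi_n^i=\Psi_{n'}^i$ into the level-wise equalities $\Pi_n^1=\Pi_{n'}^1$ and $\Pi_n^2=\Pi_{n'}^2$; the former forces $\ell_n=\ell'_{n'}$ (equivalently, apply Lemma~\ref{lm:3-11-a}). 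Comparing the sets of second coordinates in $\Pi_n^2=\Pi_{n'}^2$ then yields $\mathcal{B}_n=\mathcal{B}_{n'}$, and the degenerate case $\mathcal{B}_n=\varnothing$ (where $\Pi_n^2=\varnothing=\Pi_{n'}^2$ forces $\mathcal{B}_{n'}=\varnothing$) requires nothing extra.

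For the second conjunct, fix $b\in\mathcal{B}_n=\mathcal{B}_{n'}$ and put $m:=\beta_n(b)\in Adj_n$ and $m':=\beta'_{n'}(b)\in Adj'_{n'}$, so that $\ell_m=b=\ell'_{m'}$ by Definition~\ref{df:function tau}. Applying Lemma~\ref{lm:Feb-21-a} with these $m$ and $m'$ and the hypothesis $\Psi_n^i=\Psi_{n'}^i$ (admissible since $i\geq 2\geq 1$) gives $\Psi_m^{i-1}=\Psi_{m'}^{i-1}$, that is, $\Psi_{\beta_n(b)}^{i-1}=\Psi_{\beta'_{n'}(b)}^{i-1}$, completing the proof. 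I do not anticipate a genuine obstacle: the argument is a short unwinding of the definitions, and the only points needing care are that $i\geq 2$ is used essentially (both to speak of $\Pi_n^2$ and to invoke Lemma~\ref{lm:Feb-21-a} at level $i-1\geq 1$) and that $\beta'_{n'}$ is read as the analogue of $\beta_n$ built from $B'$ at the node $n'$.
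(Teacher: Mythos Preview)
Your proposal is correct and follows essentially the same approach as the paper: both derive $\Pi_n^2=\Pi_{n'}^2$ from Lemma~\ref{lm:equal path set}, read off $\mathcal{B}_n=\mathcal{B}_{n'}$ by projecting onto second coordinates, and then invoke Lemma~\ref{lm:Feb-21-a} with $m=\beta_n(b)$ and $m'=\beta'_{n'}(b)$ to obtain the second conjunct. Your explicit mention of $\ell_n=\ell'_{n'}$ and the degenerate empty case are minor elaborations, not a different route.
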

\begin{proof}
Note that, by Lemma~\ref{lm:equal path set} and the assumption $\Psi_n^i=\Psi_{n'}^i$ where $i\geq 2$,
\begin{equation}\label{eq:Mar-4-1}
\Pi_n^2=\Pi_{n'}^2.
\end{equation}
Meanwhile, by statements~\eqref{eq:path set},
\begin{equation}\notag
\begin{aligned}
\Pi_n^2&=\{(\ell_n,\ell_m)\,|\,m\in Adj_n\};\\
\Pi_{n'}^2&=\{(\ell'_{n'},\ell'_{m'})\,|\,m'\in Adj_{n'}\}.
\end{aligned}
\end{equation}
Then, $\{\ell_m\,|\,m\in Adj_n\}\!=\!\{\ell'_{m'}\,|\,m'\in Adj_{n'}\}$ by statement~\eqref{eq:Mar-4-1},
which means $\mathcal{B}_n=\mathcal{B}_{n'}$ by statement~\eqref{eq:1 step agent set}.
Thus, by Definition~\ref{df:function tau}, for each agent $b\in\mathcal{B}_n$,
\begin{equation}\notag
\beta_n(b)\in Adj_n,\ \beta_{n'}(b)\in Adj_{n'},\ \text{and } \ell_{\beta_n(b)}=b=\ell'_{\beta_{n'}(b)}.
\end{equation}
Hence, $\Psi^{i-1}_{\beta_{n}(b)}=\Psi^{i-1}_{\beta_{n'}(b)}$ for each agent $b\in\mathcal{B}_n$ by Lemma~\ref{lm:Feb-21-a} and the assumption $\Psi^{i}_{n}=\Psi^{i}_{n'}$.
\end{proof}

Using the property above, the lemma below proves that, for each integer $i\geq 1$, the results of the $i^{th}$ iterative rationalisation are identical on the nodes $n$ and $n'$ if $\Psi_n^{i}=\Psi_{n'}^{i}$.

\begin{lemma}\label{lm:indistinguishable nodes are doxastic equivalent}
For any integer $i\ge 1$, if $\Psi_n^{i}=\Psi_{n'}^{i}$, then $\mathbb{R}_B^i(S^\Delta)_n=\mathbb{R}_{B'}^i(S^\Delta)_{n'}$ for each game $G=(\Delta,\preceq)$. 
\end{lemma}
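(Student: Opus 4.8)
The plan is to prove the statement by induction on $i$. The induction hypothesis will be that for all nodes $m$, $m'$ in $B$, $B'$ respectively, if $\Psi_m^{i-1}=\Psi_{m'}^{i-1}$ then $\mathbb{R}_B^{i-1}(S^\Delta)_m=\mathbb{R}_{B'}^{i-1}(S^\Delta)_{m'}$ for every game. For the base case $i=1$, the assumption $\Psi_n^1=\Psi_{n'}^1$ gives $\ell_n=\ell'_{n'}$ by statement~\eqref{eq:path set main} (or Lemma~\ref{lm:3-11-a}), and since $\mathbb{R}_B^1(S^\Delta)_n=\Re_{\ell_n}(\tilde\Theta_n(S^\Delta))$ with $\tilde\Theta_n^b(S^\Delta)=\Delta_b$ for every $b\ne\ell_n$ by statement~\eqref{eq:RBR graph reasoning scene}, the belief scene depends only on the label $\ell_n$; hence both sides coincide.

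For the inductive step, assume $i\ge 2$ and $\Psi_n^i=\Psi_{n'}^i$. First I would invoke Lemma~\ref{lm:3-15-a} to obtain $\mathcal{B}_n=\mathcal{B}_{n'}$ and $\Psi_{\beta_n(b)}^{i-1}=\Psi_{\beta_{n'}(b)}^{i-1}$ for each $b\in\mathcal{B}_n$. Then by the induction hypothesis applied to the pairs $\beta_n(b)$, $\beta_{n'}(b)$, we get $\mathbb{R}_B^{i-1}(S^\Delta)_{\beta_n(b)}=\mathbb{R}_{B'}^{i-1}(S^\Delta)_{\beta_{n'}(b)}$ for every game and every $b\in\mathcal{B}_n$. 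Now by Definition~\ref{df:belief scene} in the form of statement~\eqref{eq:RBR graph reasoning scene}, the belief scene $\tilde\Theta_n(\mathbb{R}_B^{i-1}(S^\Delta))$ has $b$-component equal to $\mathbb{R}_B^{i-1}(S^\Delta)_{\beta_n(b)}$ when $b\in\mathcal{B}_n$ and $\Delta_b$ otherwise; the same description holds for $\tilde\Theta_{n'}(\mathbb{R}_{B'}^{i-1}(S^\Delta))$ using $\mathcal{B}_{n'}=\mathcal{B}_n$ and the just-established equality of components. Hence the two belief scenes are literally equal as reasoning scenes of agent $\ell_n=\ell'_{n'}$ (the label equality coming again from Lemma~\ref{lm:3-11-a}).

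Finally, applying $\Re_{\ell_n}$ to these identical belief scenes and using Definition~\ref{df:rationalisation on solution} together with Definition~\ref{df:ith rationalisation}, we conclude
\[
\mathbb{R}_B^i(S^\Delta)_n=\Re_{\ell_n}\big(\tilde\Theta_n(\mathbb{R}_B^{i-1}(S^\Delta))\big)=\Re_{\ell'_{n'}}\big(\tilde\Theta_{n'}(\mathbb{R}_{B'}^{i-1}(S^\Delta))\big)=\mathbb{R}_{B'}^i(S^\Delta)_{n'},
\]
which closes the induction. I expect the main obstacle to be purely bookkeeping: carefully matching the two belief scenes component-by-component, i.e. verifying that $\mathcal{B}_n=\mathcal{B}_{n'}$ really does let the ``otherwise'' branches of statement~\eqref{eq:RBR graph reasoning scene} align, and that the induction hypothesis is stated for \emph{all} pairs of nodes (not just $n,n'$) so it can be reused on the children $\beta_n(b)$, $\beta_{n'}(b)$. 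No genuinely hard estimate is involved; everything reduces to Lemma~\ref{lm:3-15-a} plus the definitions.
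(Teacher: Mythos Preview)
Your proposal is correct and follows essentially the same approach as the paper: induction on $i$, the base case handled via $\ell_n=\ell'_{n'}$ and the triviality of $\tilde\Theta_n(S^\Delta)$, and the inductive step via Lemma~\ref{lm:3-15-a} to match $\mathcal{B}_n=\mathcal{B}_{n'}$ and the children's $\Psi^{i-1}$-sets, then the induction hypothesis on the children, then equality of belief scenes, then $\Re_{\ell_n}$. Your remark that the induction hypothesis must be stated for \emph{all} pairs of nodes (so it applies to $\beta_n(b),\beta_{n'}(b)$) is exactly the bookkeeping point the paper handles implicitly by proving the lemma for arbitrary $n,n'$.
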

\begin{proof}
By Lemma~\ref{lm:3-11-a}, if $\ell_n\neq\ell'_{n'}$, then $\Psi_n^{i}\neq\Psi_{n'}^{i}$ for each integer $i\geq 1$, which contradicts the assumption $\Psi_n^{i}=\Psi_{n'}^{i}$. Thus,
\begin{equation}\label{eq:Feb-20-6}
\ell_n=\ell'_{n'}.
\end{equation}

We prove this lemma by induction on integer $i$.
In the base case where $i=1$, by Definition~\ref{df:ith rationalisation} and Definition~\ref{df:rationalisation on solution},
\begin{equation}\label{eq:Feb-20-7}
\begin{aligned}
\mathbb{R}_B^1(S^\Delta)_n&=\Re_{\ell_n}(\tilde{\Theta}_n(S^\Delta));\\
\mathbb{R}_{B'}^1(S^\Delta)_{n'}&=\Re_{\ell'_{n'}}(\tilde{\Theta}_{n'}(S^\Delta)).
\end{aligned}
\end{equation}
Note that, by Definition~\ref{df:belief scene}, statements~\eqref{eq:solution Delta} and \eqref{eq:Feb-20-6},
\begin{equation}\notag
\tilde{\Theta}_n(S^\Delta)=\prod_{b\neq\ell_n}\Delta_b=\prod_{b\neq\ell'_{n'}}\Delta_b=\tilde{\Theta}_{n'}(S^\Delta).
\end{equation}
Then, $\Re_{\ell_n}(\tilde{\Theta}_n(S^\Delta))=\Re_{\ell'_{n'}}(\tilde{\Theta}_{n'}(S^\Delta))$ by statement~\eqref{eq:Feb-20-6}.
Thus, $\mathbb{R}_B^1(S^\Delta)_n=\mathbb{R}_{B'}^1(S^\Delta)_{n'}$ by statement~\eqref{eq:Feb-20-7}.

In the case where $i\geq 2$, by Lemma~\ref{lm:3-15-a} and the assumption $\Psi_n^{i}=\Psi_{n'}^{i}$,
\begin{equation}\label{eq:3-15-19}
\mathcal{B}_n=\mathcal{B}_{n'}
\end{equation}
and $\Psi^{i-1}_{\beta_{n}(b)}=\Psi^{i-1}_{\beta_{n'}(b)}$ for each agent $b\in\mathcal{B}_n$.
Then, by the induction hypothesis, for each agent $b\in\mathcal{B}_n$,
\begin{equation}\label{eq:3-15-21}
\mathbb{R}_B^{i-1}(S^\Delta)_{\beta_n(b)}=\mathbb{R}_{B'}^{i-1}(S^\Delta)_{\beta_{n'}(b)}.
\end{equation}
Meanwhile, by statements~\eqref{eq:RBR graph reasoning scene}, \eqref{eq:3-15-19} and \eqref{eq:Feb-20-6}, for each agent $b\neq\ell_n$,
\begin{equation}\notag
\begin{aligned}
\tilde{\Theta}^b_{n}(\mathbb{R}_{B}^{i-1}(S^\Delta))=
\begin{cases}
\mathbb{R}_{B}^{i-1}(S^\Delta)_{\beta_n(b)}, & \text{if } b\in\mathcal{B}_n;\\
\Delta_b, & \text{otherwise};
\end{cases}\\
\tilde{\Theta}^b_{n'}(\mathbb{R}_{B'}^{i-1}(S^\Delta))=
\begin{cases}
\mathbb{R}_{B'}^{i-1}(S^\Delta)_{\beta_{n'}(b)}, & \text{if } b\in\mathcal{B}_{n};\\
\Delta_b, & \text{otherwise}.
\end{cases}
\end{aligned}
\end{equation}
Hence, by statement~\eqref{eq:3-15-21},
\begin{equation}\label{eq:3-15-20}
\tilde{\Theta}_{n}(\mathbb{R}_{B}^{i-1}(S^\Delta))=\tilde{\Theta}_{n'}(\mathbb{R}_{B'}^{i-1}(S^\Delta)).
\end{equation}
Note that, by Definition~\ref{df:rationalisation on solution} and Definition~\ref{df:ith rationalisation},
\begin{equation}\notag
\begin{aligned}
&\mathbb{R}_{B}^i(S^\Delta)_{n}=\mathbb{R}(\mathbb{R}_{B}^{i-1}(S^\Delta))_n=\Re_{\ell_n}(\tilde{\Theta}_{n}(\mathbb{R}_{B}^{i-1}(S^\Delta)));\\
&\mathbb{R}_{B'}^i(S^\Delta)_{n'}=\mathbb{R}(\mathbb{R}_{B'}^{i-1}(S^\Delta))_{n'}=\Re_{\ell'_{n'}}(\tilde{\Theta}_{n'}(\mathbb{R}_{B'}^{i-1}(S^\Delta))).
\end{aligned}
\end{equation}
Therefore, $\mathbb{R}_B^i(S^\Delta)_n=\mathbb{R}_{B'}^i(S^\Delta)_{n'}$ by statements~\eqref{eq:Feb-20-6} and \eqref{eq:3-15-20}.
\end{proof}

Informally, the above lemma shows that the iterative rationalisation process is a ``bisimulation'' on the nodes $n$ and $n'$ when they represent the same belief hierarchy.
Next, we use this finding to prove the ``if'' part of Theorem~\ref{th:doxastic equivalent nodes = indistinguishable}.

\noindent\textbf{Theorem \ref{th:doxastic equivalent nodes = indistinguishable}} (``if'' part) \textit{The nodes $n$ and $n'$ are doxastically equivalent if $\Psi_n^*=\Psi_{n'}^*$.}

\begin{proof}
By Lemma~\ref{lm:equal whole path set}, the assumption $\Psi_n^*=\Psi_{n'}^*$ implies that
\begin{equation}\label{eq:4-4-3}
\Pi_n^i=\Pi_{n'}^i \text{ for each } i\geq 1.
\end{equation}
Then, $\Pi_n^1=\Pi_{n'}^1$. Thus, by statement~\eqref{eq:path set},
\begin{equation}\label{eq:3-11-2}
\ell_n=\ell'_{n'}.
\end{equation}

Consider an arbitrary game $G=(\Delta,\preceq)$.
By Theorem~\ref{th:rational solution}, there are integers $k_1,k_2\geq 0$ such that $\mathbb{S}(G)_{n}=\mathbb{R}_{B}^{j_1}(S^\Delta)_{n}$ and $\mathbb{S}'(G)_{n'}=\mathbb{R}_{B'}^{j_2}(S^\Delta)_{n'}$ for any $j_1\geq k_1$ and $j_2\geq k_2$.
Let $k:=\max(k_1,k_2)$. Then, 
\begin{equation}\label{eq:Feb-26-5}
\mathbb{S}(G)_{n}=\mathbb{R}_{B}^{k}(S^\Delta)_{n} \text{\hspace{1mm} and \hspace{1mm}} \mathbb{S}'(G)_{n'}=\mathbb{R}_{B'}^{k}(S^\Delta)_{n'}.
\end{equation}
If $k=0$, then, by Definition~\ref{df:ith rationalisation}, statements~\eqref{eq:solution Delta} and \eqref{eq:3-11-2},
\begin{equation}\label{eq:Feb-26-6}
\mathbb{R}_{B}^{0}(S^\Delta)_{n}=\Delta_{\ell_n}=\Delta_{\ell'_{n'}}=\mathbb{R}_{B'}^{0}(S^\Delta)_{n'}.
\end{equation}
If $k\geq 1$, then $\Psi_n^k=\Psi_{n'}^k$ by statements~\eqref{eq:4-4-3} and \eqref{eq:accumulated path set}.
Thus, $\mathbb{R}_{B}^{k}(S^\Delta)_{n}=\mathbb{R}_{B'}^{k}(S^\Delta)_{n'}$ by Lemma~\ref{lm:indistinguishable nodes are doxastic equivalent}.
Hence, it is always true that $\mathbb{S}(G)_{n}=\mathbb{S}'(G)_{n'}$ by statement~\eqref{eq:Feb-26-5} and \eqref{eq:Feb-26-6}.
Therefore, the ``if'' part statement of the theorem follows from Definition~\ref{df:doxastic equivalent nodes}.
\end{proof}

\section{Equivalence Between RBR Graphs}\label{sec:app RBR graph equivalence}

In this section, we formally prove Theorem~\ref{th:RBR graph equivalent}, which states the necessary and sufficient condition for the equivalence of two RBR graphs.
After that, we discuss the structural similarity between equivalent RBR graphs, which is used later to design an algorithm that minimises an RBR graph.

For the convenience of proof, we add a few new notations and rewrite Theorem~\ref{th:RBR graph equivalent} in the following form.

\noindent\textbf{Theorem \ref{th:RBR graph equivalent}} \textit{The two RBR graphs $B=(N,E,\ell,\pi)$ and $B'=(N',E',\ell',\pi')$ are equivalent if and only if, for each agent $a\in\mathcal{A}$, \textbf{either}
\begin{enumerate}[label={C\arabic*.}, ref=C\arabic*,left=0pt]
\item both $\pi_a$ and $\pi'_a$ are not defined, \textbf{or}\label{item:Feb_27-1}
\item $\pi_a$ and $\pi'_a$ are both defined and doxastically equivalent.\label{item:Feb_27-2}
\end{enumerate}}
\begin{proof}
For the ``if'' part, consider an arbitrary agent $a\in\mathcal{A}$ and an arbitrary game $G=(\Delta,\preceq)$.
If condition~\ref{item:Feb_27-1} above is true, then $\mathfrak{R}(G)_a=\Delta_{a}=\mathfrak{R}'(G)_a$ by Definition~\ref{df:doxastic rationalisability}.
If condition~\ref{item:Feb_27-2} above is true, then, by Definition~\ref{df:doxastic rationalisability},
\begin{equation}\label{eq:Feb-27-1}
\mathfrak{R}(G)_a=\mathbb{S}(G)_{\pi_a} \text{\hspace{1mm} and \hspace{1mm}} \mathfrak{R}'(G)_a=\mathbb{S}'(G)_{\pi'_a}
\end{equation}
Note that, $\mathbb{S}(G)_{\pi_a}=\mathbb{S}'(G)_{\pi'_a}$ by Definition~\ref{df:doxastic equivalent nodes} and the assumption in condition~\ref{item:Feb_27-2} that $\pi_a$ and $\pi'_a$ are doxastic equivalent.
Then, $\mathfrak{R}(G)_a=\mathfrak{R}'(G)_a$ by statement~\eqref{eq:Feb-27-1}.
In conclusion, if either of the conditions~\ref{item:Feb_27-1} and \ref{item:Feb_27-2} above is true, then $\mathfrak{R}(G)_a=\mathfrak{R}'(G)_a$.
Thus, the RBR graphs $B$ and $B'$ are equivalent by Definition~\ref{df:equivalent} because agent $a$ is an arbitrary one in set $\mathcal{A}$.

For the ``only if'' part, consider the case where there is an agent $a\in\mathcal{A}$ such that both conditions~\ref{item:Feb_27-1} and \ref{item:Feb_27-2} are false.
Then, one of the next two statements must be true:
\begin{enumerate}[label={S\arabic*.}, ref=S\arabic*,left=0pt]
\item only one of $\pi_a$ and $\pi'_a$ is defined.\label{st:Feb-27-1}
\item  both $\pi_a$ and $\pi'_a$ are defined but they are not doxastically equivalent.\label{st:Feb-27-2}
\end{enumerate}

Then, the ``only if'' part of the statement of the theorem follows from Claim~\ref{cl:Feb-27-a} and Claim~\ref{cl:Feb-27-b} below.

\begin{claim}\label{cl:Feb-27-a}
If statement~\ref{st:Feb-27-1} above is true, then the RBR graphs $B$ and $B'$ are not equivalent.
\end{claim}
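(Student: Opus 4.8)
The plan is to use a single, very simple game as a witness to non-equivalence. By the symmetry between $B$ and $B'$ in statement~\ref{st:Feb-27-1} (and the symmetry of Definition~\ref{df:equivalent}), I may assume without loss of generality that $\pi_a$ is defined while $\pi'_a$ is not. Then, by Definition~\ref{df:doxastic rationalisability}, $\mathfrak{R}'(G)_a=\Delta_a$ for \emph{every} game $G$, whereas $\mathfrak{R}(G)_a=\mathbb{S}(G)_{\pi_a}$. So it suffices to exhibit one game $G$ in which the rational solution at node $\pi_a$ is a \emph{proper} subset of $\Delta_a$.

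For this I would reuse the parameterised game $G_1$ from Definition~\ref{df:sequence game} (setting $k=1$). By Lemma~\ref{lm:sequence game rational solution}, $\mathbb{S}(G_1)_{\pi_a}=\Delta_{\ell_{\pi_a}}\setminus\Psi_{\pi_a}^1$. Since $\ell_{\pi_a}=a$ by item~\ref{dfitem:RBR graph designating function} of Definition~\ref{df:RBR graph} and $\Psi_{\pi_a}^1=\Pi_{\pi_a}^1=\{a\}$ by Definition~\ref{df:path set}, this gives $\mathbb{S}(G_1)_{\pi_a}=\Delta_a\setminus\{(a)\}$, where $(a)$ denotes the singleton sequence. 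But $(a)\in\Sigma_a^1\subseteq\Delta_a$ by item~\ref{dfitem:sequence game strategy space} of Definition~\ref{df:sequence game}, hence $(a)\notin\mathbb{S}(G_1)_{\pi_a}$ and therefore $\mathbb{S}(G_1)_{\pi_a}\subsetneq\Delta_a$. Intuitively, in $G_1$ the singleton strategy $(a)$ is strongly dominated by $\bot_a$ already at the first rationalisation, so a rational $a$ would never play it.

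Combining the two observations, $\mathfrak{R}(G_1)_a=\mathbb{S}(G_1)_{\pi_a}\neq\Delta_a=\mathfrak{R}'(G_1)_a$, so by Definition~\ref{df:equivalent} the RBR graphs $B$ and $B'$ are not equivalent. There is no real obstacle here; the only point deserving a line of care is the ``without loss of generality'' reduction, which is legitimate because both statement~\ref{st:Feb-27-1} and the notion of equivalence are symmetric in $B$ and $B'$. (If one prefers not to invoke the $G_k$ machinery, an equally short alternative is an ad hoc game in which every agent $b$ has strategy space $\{0_b,1_b\}$ and $a$ strictly prefers, in every outcome, the strategy $1_a$ to $0_a$; then $0_a$ is dominated in every reasoning scene, so $\mathbb{S}(G)_{\pi_a}=\{1_a\}\subsetneq\Delta_a$, and the same conclusion follows.)
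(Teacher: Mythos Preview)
Your proof is correct. The overall structure---the WLOG reduction, the identification $\mathfrak{R}'(G)_a=\Delta_a$ versus $\mathfrak{R}(G)_a=\mathbb{S}(G)_{\pi_a}$, and the search for a single witness game---is identical to the paper's. The only difference is the choice of witness: you invoke the already-defined game $G_1$ and Lemma~\ref{lm:sequence game rational solution} to get $\mathbb{S}(G_1)_{\pi_a}=\Delta_a\setminus\{(a)\}=\{\bot_a\}\subsetneq\Delta_a$, whereas the paper builds an ad hoc two-strategy game $G_{0,1}$ (each agent picks $0$ or $1$, and $1$ strictly dominates $0$) and argues directly from the definitions that $\mathbb{S}(G_{0,1})_{\pi_a}=\{1\}$. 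Amusingly, your parenthetical alternative is exactly the paper's construction. Your main route is economical in that it reuses existing machinery and requires no new game definition; the paper's route is more elementary in that it does not rely on the $G_k$ lemmas (which appear later in the appendix and are rather heavier than what the claim needs). Either is perfectly acceptable.
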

\begin{proof-of-claim}
Without loss of generality, suppose that $\pi_a$ is defined and $\pi'_a$ is not defined.
Then, by Definition~\ref{df:doxastic rationalisability},
\begin{equation}\label{eq:Feb-26-8}
\mathfrak{R}(G)_a=\mathbb{S}(G)_{\pi_a} \text{\hspace{1mm} and \hspace{1mm}} \mathfrak{R}'(G)_a=\Delta_a
\end{equation}
for each game $G=(\Delta,\preceq)$.
Consider a game $G_{0,1}$ where every agent can choose between two strategies $0$ and $1$ but a rational agent always chooses strategy $1$. 
Formally, game $G_{0,1}$ is a $(\Delta,\prec)$ where, for each agent $b\in\mathcal{A}$,
\begin{itemize}
\item $\Delta_b=\{0,1\}$;
\item $\s\preceq_b \s'$ if and only if $s_b\leq s'_b$ for each pair of outcomes $\s,\s'\in\prod_{c\in\mathcal{A}}\Delta_c$.
\end{itemize}
Then, $0\lhd_{\Theta_a}1$ for each reasoning scene $\Theta_a$ in game $G_{0,1}$ by Definition~\ref{df:dominated strategy}.
Thus, $\Re_a(\Theta_a)=\{1\}$ for each reasoning scene $\Theta_a$ by Definition~\ref{df:rational response}.
Note that $\ell_{\pi_a}=a$ by item~\ref{dfitem:RBR graph designating function} of Definition~\ref{df:RBR graph}.
Then, $\mathbb{S}(G_{0,1})_{\pi_a}=\{1\}$ by Definition~\ref{df:rationalisation on solution}, Definition~\ref{df:ith rationalisation}, and Theorem~\ref{th:rational solution}.
Hence, $\mathfrak{R}_a(G_{0,1})=\{1\}$ and $\mathfrak{R}'_a(G_{0,1})=\{0,1\}$ by statement~\eqref{eq:Feb-26-8} and the definition of game $G_{0,1}$.
In other word, there is an agent $a$ and a game $G_{0,1}$ such that $\mathfrak{R}_a(G_{0,1})\neq\mathfrak{R}'_a(G_{0,1})$.
Therefore, the RBR graphs $B$ and $B'$ are not equivalent by Definition~\ref{df:equivalent}.
\end{proof-of-claim}

\begin{claim}\label{cl:Feb-27-b}
If statement~\ref{st:Feb-27-2} above is true, then the RBR graphs $B$ and $B'$ are not equivalent.
\end{claim}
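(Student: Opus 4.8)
The plan is to unfold the relevant definitions directly, exploiting the fact that \emph{doxastic equivalence} of nodes and \emph{doxastic rationalisability} of graphs are both phrased in terms of the very same rational solutions. Assume statement~\ref{st:Feb-27-2}: both $\pi_a$ and $\pi'_a$ are defined, and the nodes $\pi_a$ and $\pi'_a$ are \emph{not} doxastically equivalent.

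First I would invoke Definition~\ref{df:doxastic equivalent nodes}: since $\pi_a$ and $\pi'_a$ fail to be doxastically equivalent, there must be a witnessing game $G=(\Delta,\preceq)$ for which $\mathbb{S}(G)_{\pi_a}\neq\mathbb{S}'(G)_{\pi'_a}$, where $\mathbb{S}(G)$ and $\mathbb{S}'(G)$ denote the rational solutions of $G$ on $B$ and $B'$, respectively.

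Next, because both $\pi_a$ and $\pi'_a$ are defined, Definition~\ref{df:doxastic rationalisability} gives $\mathfrak{R}(G)_a=\mathbb{S}(G)_{\pi_a}$ and $\mathfrak{R}'(G)_a=\mathbb{S}'(G)_{\pi'_a}$. Combining this with the previous step yields $\mathfrak{R}(G)_a\neq\mathfrak{R}'(G)_a$. Hence there exist an agent $a$ and a game $G$ distinguishing the doxastic rationalisabilities of $B$ and $B'$, so $B$ and $B'$ are not equivalent by Definition~\ref{df:equivalent}, which is exactly the claim.

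I do not expect a genuine obstacle here: unlike Claim~\ref{cl:Feb-27-a}, which required exhibiting the explicit game $G_{0,1}$, this claim is pure bookkeeping that rides on the witnessing game already guaranteed by Definition~\ref{df:doxastic equivalent nodes}. If a more self-contained argument were wanted, one could instead route through Theorem~\ref{th:doxastic equivalent nodes = indistinguishable} to obtain $\Psi_{\pi_a}^*\neq\Psi_{\pi'_a}^*$ and then reuse the parameterised game from Definition~\ref{df:sequence game} together with Lemma~\ref{lm:sequence game rational solution}, but this detour is unnecessary.
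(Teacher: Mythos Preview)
Your proposal is correct and follows essentially the same approach as the paper's proof: both arguments use Definition~\ref{df:doxastic equivalent nodes} to obtain a witnessing game $G$ with $\mathbb{S}(G)_{\pi_a}\neq\mathbb{S}'(G)_{\pi'_a}$, then apply Definition~\ref{df:doxastic rationalisability} (which is applicable since $\pi_a$ and $\pi'_a$ are defined) to conclude $\mathfrak{R}(G)_a\neq\mathfrak{R}'(G)_a$, and finish via Definition~\ref{df:equivalent}. The only cosmetic difference is that the paper first records $\mathfrak{R}(G)_a=\mathbb{S}(G)_{\pi_a}$ for \emph{all} games before specialising to the witness, whereas you pick the witness first.
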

\begin{proof-of-claim}
By Definition~\ref{df:doxastic rationalisability} and the assumption that both $\pi_a$ and $\pi'_a$ are defined in statement~\ref{st:Feb-27-2},
\begin{equation}\label{eq:Feb-27-2}
\mathfrak{R}(G)_a=\mathbb{S}(G)_{\pi_a} \text{\hspace{1mm} and \hspace{1mm}} \mathfrak{R}'(G)_a=\mathbb{S}'(G)_{\pi'_a}
\end{equation}
for each game $G$. Meanwhile, by Definition~\ref{df:doxastic equivalent nodes} and the assumption that $\pi_a$ and $\pi'_a$ are not doxastically equivalent in statement~\ref{st:Feb-27-2}, there is a game $G$ where $\mathbb{S}(G)_{\pi_a}\neq\mathbb{S}'(G)_{\pi'_a}$.
Then, $\mathfrak{R}(G)_a\neq\mathfrak{R}'(G)_a$ in such a game $G$ by statement~\eqref{eq:Feb-27-2}.
Hence, the RBR graphs $B$ and $B'$ are not equivalent by Definition~\ref{df:equivalent}.
\end{proof-of-claim}
This concludes the proof of the theorem.
\end{proof}

By Theorem~\ref{th:doxastic equivalent nodes = indistinguishable} and Theorem~\ref{th:RBR graph equivalent}, we can easily conclude
Corollary~\ref{cr:RBR graph equivalent}.
It states the necessary and sufficient condition for the equivalence of two RBR graphs based on the structural property of the graphs.
We rewrite Corollary~\ref{cr:RBR graph equivalent} in the next form for the convenience of later proofs.

\noindent\textbf{Corollary~\ref{cr:RBR graph equivalent}} \textit{The two RBR graphs $B=(N,E,\ell,\pi)$ and $B'=(N',E',\ell',\pi')$ are equivalent if and only if
\begin{enumerate}
\item $\pi$ and $\pi'$ have the same domain $\mathcal{D}$ of definition, \textbf{and}
\item $\Psi_{\pi_a}^*=\Psi_{\pi'_a}^*$ for each agent $a\in\mathcal{D}$.\label{item:cr:RBR graph equivalent 2}
\end{enumerate}}

In preparation for introducing the algorithm that minimises an RBR graph, we need to investigate more (structural) properties of RBR graph equivalence.
They are also important and interesting but we lack space in the main text to introduce them.
We formally state and prove them in the next theorem and the following lemmas.

Recall that set $\Psi_n^*$ captures the belief hierarchy of the (real or doxastic) agent denoted by node $n$.
The next theorem shows that a belief hierarchy must show in both or neither of two equivalent RBR graphs.
Note that, compared with item~\ref{item:cr:RBR graph equivalent 2} of Corollary~\ref{cr:RBR graph equivalent} where only real agents are considered, Theorem~\ref{th:node type matches in equivalent RBR graphs} also consider the doxastic agents.

\begin{theorem}\label{th:node type matches in equivalent RBR graphs}
If RBR graphs $(N,E,\ell,\pi)$ and $(N',E',\ell',\pi')$ are equivalent, then $\{\Psi_n^*\mid n\in N\}=\{\Psi_{n'}^*\mid n'\in N'\}$.
\end{theorem}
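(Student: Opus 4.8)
The plan is to prove the two inclusions $\{\Psi_n^*\mid n\in N\}\subseteq\{\Psi_{n'}^*\mid n'\in N'\}$ and its reverse; since the hypothesis is symmetric in $B=(N,E,\ell,\pi)$ and $B'=(N',E',\ell',\pi')$, it suffices to establish the first. So I fix an arbitrary node $n\in N$ and look for a node of $N'$ with the same belief hierarchy. The point of departure is that, by item~\ref{dfitem:RBR graph no irrelevant dummy node} of Definition~\ref{df:RBR graph}, every node of an RBR graph is reachable from some real agent: there is an agent $a\in\mathcal{A}$ with $\pi_a$ defined and a path $\pi_a=n_0,n_1,\dots,n_k=n$ in $B$ with $n_iEn_{i+1}$ for $i<k$. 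Since $B$ and $B'$ are equivalent, Corollary~\ref{cr:RBR graph equivalent} tells me that $\pi'_a$ is also defined and $\Psi_{\pi_a}^*=\Psi_{\pi'_a}^*$. Thus the real-agent node $\pi_a$ already has a belief-hierarchy match $\pi'_a$ in $B'$, and the remaining task is to transport this match along the path.

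The transport is a straightforward induction on the path length $j$, with hypothesis: there is a node $n'_j\in N'$ with $\Psi_{n_j}^*=\Psi_{n'_j}^*$. The base case $j=0$ is exactly $\Psi_{\pi_a}^*=\Psi_{\pi'_a}^*$ above. For the inductive step, from $n_jEn_{j+1}$ I get $n_{j+1}\in Adj_{n_j}$; set $b:=\ell_{n_{j+1}}$. Since $\Psi_{n_j}^*=\Psi_{n'_j}^*$ implies $\Psi_{n_j}^i=\Psi_{n'_j}^i$ for every $i\geq 1$ (Lemmas~\ref{lm:equal path set} and \ref{lm:equal whole path set}), in particular $\Psi_{n_j}^2=\Psi_{n'_j}^2$, so $\mathcal{B}_{n_j}=\mathcal{B}_{n'_j}$ by Lemma~\ref{lm:3-15-a}; hence $b\in\mathcal{B}_{n'_j}$, and the node $n'_{j+1}:=\beta_{n'_j}(b)$ lies in $Adj_{n'_j}$ with $\ell'_{n'_{j+1}}=b=\ell_{n_{j+1}}$ by Definition~\ref{df:function tau}. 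Now Lemma~\ref{lm:4-4-a}, applied to $n_j$, $n'_j$ and their successors $n_{j+1},n'_{j+1}$ with equal labels, yields $\Psi_{n_{j+1}}^*=\Psi_{n'_{j+1}}^*$, completing the induction. Taking $j=k$ produces a node $n'_k\in N'$ with $\Psi_n^*=\Psi_{n'_k}^*$, as desired; the reverse inclusion follows by the same argument with the roles of $B$ and $B'$ exchanged, and the theorem follows.

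Since all the needed machinery — reachability from real agents, Corollary~\ref{cr:RBR graph equivalent}, and the ``diffusion'' Lemmas~\ref{lm:3-15-a} and \ref{lm:4-4-a} — is already in place, there is no genuinely hard step here; the one thing one must notice is that the reachability clause (item~\ref{dfitem:RBR graph no irrelevant dummy node}) of Definition~\ref{df:RBR graph} is precisely what lets us bootstrap from the real-agent guarantee of Corollary~\ref{cr:RBR graph equivalent} to an arbitrary (possibly doxastic) node. A minor point to watch is the degenerate path of length $0$, i.e.\ when $n=\pi_a$ itself, which is absorbed by the base case of the induction.
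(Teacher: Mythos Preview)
Your proof is correct and follows essentially the same approach as the paper: reachability from a real agent (item~\ref{dfitem:RBR graph no irrelevant dummy node}), Corollary~\ref{cr:RBR graph equivalent} for the base case, and an induction along the path using Lemma~\ref{lm:4-4-a}. The only cosmetic difference is that the paper first produces the matching path in $B'$ in one shot via the belief sequence and Lemma~\ref{lm:March-1-c} and then runs the induction, whereas you build the matching node $n'_{j+1}$ on the fly inside the inductive step using Lemma~\ref{lm:3-15-a} and $\beta_{n'_j}$; both arrive at the same place.
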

\begin{proof}
Without loss of generality, it suffices to prove that $\{\Psi_n^*\mid n\in N\}\subseteq\{\Psi_{n'}^*\mid n'\in N'\}$.
Consider an arbitrary node $n\in N$.
Note that, by item~\ref{dfitem:RBR graph no irrelevant dummy node} of Definition~\ref{df:RBR graph}, there is an agent $a\in\mathcal{A}$ and a path $(n_1,\dots,n_k)$ in the RBR graph $(N,E,\ell,\pi)$ such that
\begin{equation}\label{eq:3-15-8}
n_1=\pi_a,
\end{equation}
\begin{equation}\label{eq:3-18-4}
n_k=n,
\end{equation}
and, for each integer $j$ such that $1<j\leq k$,
\begin{equation}\label{eq:3-15-15}
n_{j}\in Adj_{n_{j-1}}.
\end{equation}
Then, by statements~\eqref{eq:path set} and \eqref{eq:3-15-8},
\begin{equation}\label{eq:3-15-10}
(\ell_{n_1},\dots,\ell_{n_k})\in\Pi_{\pi_a}^k.
\end{equation}
Consider the designated node $\pi'_a\in N'$.
Then, by item~\ref{item:cr:RBR graph equivalent 2} of Corollary~\ref{cr:RBR graph equivalent} and the assumption of the theorem,
\begin{equation}\label{eq:3-15-11}
\Psi^*_{\pi_a}=\Psi^*_{\pi'_a}.
\end{equation}
Thus, $\Pi^k_{\pi_a}\!=\!\Pi^k_{\pi'_a}$ by Lemma~\ref{lm:equal whole path set}.
Then, $(\ell_{n_1},\dots,\ell_{n_k})\!\in\!\Pi_{\pi'_a}^k$ by statement~\eqref{eq:3-15-10}.
Hence, by Lemma~\ref{lm:March-1-c}, there is a path $(n'_1,\dots,n'_k)$ in the RBR graph $(N',E',\ell',\pi')$, where 
\begin{equation}\label{eq:3-15-12}
n'_1=\pi'_a,  
\end{equation}
for each integer $j$ such that $1<j\leq k$,
\begin{equation}\label{eq:3-15-16}
n'_{j}\in Adj_{n'_{j-1}},
\end{equation}
and, for each integer $j$ such that $1\leq j\leq k$,
\begin{equation}\label{eq:3-15-13}
\ell'_{n'_j}=\ell_{n_j}.
\end{equation}
Note that, by statements~\eqref{eq:3-15-11}, \eqref{eq:3-15-8} and \eqref{eq:3-15-12},
\begin{equation}\label{eq:3-15-14}
\Psi^*_{n_1}=\Psi^*_{\pi_a}=\Psi^*_{\pi'_a}=\Psi^*_{n'_1}.
\end{equation}

\begin{claim}\label{cl:3-15-b}
$\Psi^*_{n_j}=\Psi^*_{n'_j}$ for each integer $j\leq k$.
\end{claim}
\begin{proof-of-claim}
We prove the statement of the claim by induction on $j$. 
In the base case where $j=1$, the statement of the claim follows from statement~\eqref{eq:3-15-14}.

In the case where $2\leq j\leq k$, by the induction hypothesis, $\Psi^*_{n_{j-1}}=\Psi^*_{n'_{j-1}}$.
Then, $\Psi^*_{n_j}=\Psi^*_{n'_j}$ by statements~\eqref{eq:3-15-15}, \eqref{eq:3-15-16}, \eqref{eq:3-15-13}, and Lemma~\ref{lm:4-4-a}.
\end{proof-of-claim}

Note that $\Psi^{*}_{n_k}=\Psi^{*}_{n'_k}$ by Claim~\ref{cl:3-15-b}.
Then, $\Psi^{*}_{n}=\Psi^{*}_{n'_k}$ by statement~\eqref{eq:3-18-4}.
Hence, $\Psi^{*}_{n}\in\{\Psi^{*}_{n'}\mid n'\in N'\}$. 
\end{proof}

Next, we define the notion of local isomorphism which describes a local structural similarity between two RBR graphs.
It is an extension of {\em covering map} in graph theory \cite{angluin1980local}. In Definition~\ref{df:RBR graph isomorphic}, items~\ref{dfitem:isomorphic edge} and \ref{dfitem:isomorphic label} capture the essence of covering map for the general labelled graphs, while item~\ref{dfitem:isomorphic designation} is specific to RBR graphs.

\begin{definition}\label{df:RBR graph isomorphic}
A \textbf{local isomorphism} from RBR graph $(N,E,\ell,\pi)$ to RBR graph $(N',E',\ell',\pi')$ is a surjective function $\alpha: N\to N'$ such that
\begin{enumerate}
\item $\{\alpha(m)\mid nEm\}=\{m'\mid\alpha(n)E'm'\}$ for each $n\in N$;\label{dfitem:isomorphic edge}
\item $\ell_n=\ell'_{\alpha(n)}$ for each node $n\in N$;\label{dfitem:isomorphic label}
\item $\pi$ and $\pi'$ have the same domain $\mathcal{D}$ of definition and $\alpha(\pi_a)=\pi'_a$ for each agent $a\in\mathcal{D}$.\label{dfitem:isomorphic designation}
\end{enumerate}

Furthermore, $\alpha$ is called \textbf{isomorphism} if it is a bijective function (\textit{i.e.} $|N|=|N'|$).
\end{definition}

If there is a (local) isomorphism from the RBR graph $B$ to the RBR graph $B'$, then we say that the RBR graph $B$ is ({\bf\em locally}) {\bf\em isomorphic} to the RBR graph $B'$.
Note that, in the case that $\alpha$ is a bijective function, items~\ref{dfitem:isomorphic edge} and \ref{dfitem:isomorphic label} of Definition~\ref{df:RBR graph isomorphic} capture the isomorphism between two labelled graphs \cite{hsieh2006efficient}.

The next two lemmas give the reason why we study local isomorphism: it is related to the doxastic equivalence between nodes and the equivalence between RBR graphs.
They are used in the next section to prove the correctness of the algorithm that minimises an RBR graph.

\begin{lemma}\label{lm:path set equal in isomorphic}
If $\alpha$ is a local isomorphism from RBR graph $(N,E,\ell,\pi)$ to $(N',E',\ell',\pi')$, then $\Psi_{n}^*=\Psi_{\alpha(n)}^*$ for each node $n\in N$.
\end{lemma}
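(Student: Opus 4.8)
The plan is to reduce the claim about the full path sets $\Psi^*$ to a claim about the finite-length path sets $\Pi^i$, and then prove the latter by induction on $i$. Concretely, by Lemma~\ref{lm:equal whole path set} it suffices to show that $\Pi_n^i=\Pi_{\alpha(n)}^i$ for every node $n\in N$ and every integer $i\geq 1$; once this is established, Lemma~\ref{lm:equal whole path set} immediately yields $\Psi_n^*=\Psi_{\alpha(n)}^*$.

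For the base case $i=1$, statement~\eqref{eq:path set} gives $\Pi_n^1=\{\ell_n\}$ and $\Pi_{\alpha(n)}^1=\{\ell'_{\alpha(n)}\}$, and these coincide by item~\ref{dfitem:isomorphic label} of Definition~\ref{df:RBR graph isomorphic}. For the inductive step, I would assume $i\geq 2$ and $\Pi_m^{i-1}=\Pi_{\alpha(m)}^{i-1}$ for all $m\in N$. By statement~\eqref{eq:path set} together with the definition~\eqref{eq:1 step neighbour set} of $Adj$, one has $\Pi_n^i=\{\ell_n\!::\!\sigma\mid m\in Adj_n,\ \sigma\in\Pi_m^{i-1}\}$ and $\Pi_{\alpha(n)}^i=\{\ell'_{\alpha(n)}\!::\!\sigma\mid m'\in Adj_{\alpha(n)},\ \sigma\in\Pi_{m'}^{i-1}\}$. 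The structural fact that drives the argument is item~\ref{dfitem:isomorphic edge} of Definition~\ref{df:RBR graph isomorphic}, i.e. $\{\alpha(m)\mid nEm\}=\{m'\mid\alpha(n)E'm'\}$, which says $\alpha$ maps $Adj_n$ onto $Adj_{\alpha(n)}$. For $\Pi_n^i\subseteq\Pi_{\alpha(n)}^i$: given $m\in Adj_n$ and $\sigma\in\Pi_m^{i-1}$, the node $\alpha(m)$ lies in $Adj_{\alpha(n)}$, the induction hypothesis gives $\sigma\in\Pi_{\alpha(m)}^{i-1}$, and $\ell_n=\ell'_{\alpha(n)}$, whence $\ell_n\!::\!\sigma\in\Pi_{\alpha(n)}^i$. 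For the reverse inclusion: given $m'\in Adj_{\alpha(n)}$ and $\sigma\in\Pi_{m'}^{i-1}$, surjectivity of $\alpha$ on $Adj_n$ yields some $m\in Adj_n$ with $\alpha(m)=m'$, so the induction hypothesis gives $\sigma\in\Pi_m^{i-1}$, and again $\ell'_{\alpha(n)}=\ell_n$, whence $\ell'_{\alpha(n)}\!::\!\sigma\in\Pi_n^i$. This closes the induction.

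I do not anticipate a substantial obstacle; this is a routine bisimulation-style induction. The one point that calls for care is that $\alpha$ restricted to $Adj_n$ need not be injective a priori, so the set equality $\Pi_n^i=\Pi_{\alpha(n)}^i$ should be argued by the two inclusions separately rather than through a bijection between $Adj_n$ and $Adj_{\alpha(n)}$. Item~\ref{dfitem:isomorphic edge} supplies precisely the surjectivity (the ``covering map'' property) that both inclusions need, and injectivity of $\alpha$ is never invoked.
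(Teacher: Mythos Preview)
Your proposal is correct and follows essentially the same approach as the paper: reduce to $\Pi_n^i=\Pi_{\alpha(n)}^i$ via Lemma~\ref{lm:equal whole path set}, then induct on $i$ using item~\ref{dfitem:isomorphic label} for the base case and item~\ref{dfitem:isomorphic edge} (rewriting $Adj_{\alpha(n)}$ as $\{\alpha(m)\mid m\in Adj_n\}$) together with the induction hypothesis for the step. The only cosmetic difference is that the paper reparametrizes the union $\bigcup_{m'\in Adj_{\alpha(n)}}$ directly as $\bigcup_{m\in Adj_n}$ via the surjection, whereas you argue the two inclusions separately; these are equivalent presentations of the same argument.
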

\begin{proof}
By Lemma~\ref{lm:equal whole path set}, it suffices to prove by induction that $\Pi_{n}^i=\Pi_{\alpha(n)}^i$ for each integer $i\geq 1$.
In the base case $i=1$, the statement $\Pi_{n}^1=\Pi_{\alpha(n)}^1$ follows from statement~\eqref{eq:path set} and item~\ref{dfitem:isomorphic label} of Definition~\ref{df:RBR graph isomorphic}.

Next, we consider the case $i\geq 2$.
By statement~\eqref{eq:path set},
\begin{align}
\Pi_n^i&=\bigcup_{m\in Adj_n}\{\ell_n\!::\!\sigma\mid \sigma\in\Pi_{m}^{i-1}\};\label{eq:Mar-3-1}\\
\Pi_{\alpha(n)}^i&=\bigcup_{m'\in Adj_{\alpha(n)}}\{\ell'_{\alpha(n)}\!::\!\sigma\mid \sigma\in\Pi_{m'}^{i-1}\}.\label{eq:Mar-3-2}
\end{align}
Meanwhile, $Adj_{\alpha(n)}=\{\alpha(m)\mid m\in Adj_n\}$ by statement~\eqref{eq:1 step neighbour set} and item~\ref{dfitem:isomorphic edge} of Definition~\ref{df:RBR graph isomorphic}.
Then, by item~\ref{dfitem:isomorphic label} of Definition~\ref{df:RBR graph isomorphic} and statement~\eqref{eq:Mar-3-2},
\begin{equation}\label{eq:Mar-3-4}
\Pi_{\alpha(n)}^i=\bigcup_{m\in Adj_n}\{\ell_{n}\!::\!\sigma\mid\sigma\in\Pi_{\alpha(m)}^{i-1}\}.
\end{equation}
Note that, by the induction hypothesis, $\Pi_{m}^{i-1}=\Pi_{\alpha(m)}^{i-1}$ for each node $m\in Adj_n$.
Then, $\Pi_n^i=\Pi_{\alpha(n)}^i$ by statements~\eqref{eq:Mar-3-1} and \eqref{eq:Mar-3-4}.
\end{proof}

\begin{lemma}\label{lm:isomorphic equivalent}
Two RBR graphs are equivalent if one of them is locally isomorphic to the other.
\end{lemma}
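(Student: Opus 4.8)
The plan is to reduce the statement entirely to Corollary~\ref{cr:RBR graph equivalent} together with Lemma~\ref{lm:path set equal in isomorphic}. First I would note that the equivalence relation on RBR graphs from Definition~\ref{df:equivalent} is symmetric, since the defining condition ``$\mathfrak{R}(G)_a=\mathfrak{R}'(G)_a$ for each agent $a$ and each game $G$'' is symmetric in the two graphs. Hence it suffices to treat the case in which $B=(N,E,\ell,\pi)$ is locally isomorphic to $B'=(N',E',\ell',\pi')$, say via a local isomorphism $\alpha\colon N\to N'$ as in Definition~\ref{df:RBR graph isomorphic}.

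Next I would check the two conditions in Corollary~\ref{cr:RBR graph equivalent}. Condition~(1), that $\pi$ and $\pi'$ have the same domain $\mathcal{D}$ of definition, is part of item~\ref{dfitem:isomorphic designation} of Definition~\ref{df:RBR graph isomorphic}, so it holds by hypothesis. For condition~(2), fix an arbitrary agent $a\in\mathcal{D}$; by item~\ref{dfitem:isomorphic designation} we have $\alpha(\pi_a)=\pi'_a$. Applying Lemma~\ref{lm:path set equal in isomorphic} to the node $\pi_a\in N$ gives $\Psi^*_{\pi_a}=\Psi^*_{\alpha(\pi_a)}=\Psi^*_{\pi'_a}$, which is exactly condition~(2). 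With both conditions verified, Corollary~\ref{cr:RBR graph equivalent} yields that $B$ and $B'$ are equivalent, completing the argument.

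I do not expect a real obstacle here: the substantive content — that a local isomorphism preserves the belief hierarchies $\Psi^*_n$, and that coincidence of these hierarchies at the designated nodes characterises equivalence — is already packaged in Lemma~\ref{lm:path set equal in isomorphic} and Corollary~\ref{cr:RBR graph equivalent}. The only point requiring a moment's care is the initial reduction to a single direction, which relies on symmetry of the equivalence relation; once that is noted, the proof is a short composition of the two cited results.
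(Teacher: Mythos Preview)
Your proposal is correct and follows essentially the same route as the paper's proof: both use item~\ref{dfitem:isomorphic designation} of Definition~\ref{df:RBR graph isomorphic} to obtain the common domain $\mathcal{D}$ and the identity $\alpha(\pi_a)=\pi'_a$, then apply Lemma~\ref{lm:path set equal in isomorphic} to get $\Psi^*_{\pi_a}=\Psi^*_{\pi'_a}$, and conclude via Corollary~\ref{cr:RBR graph equivalent}. Your explicit remark on the symmetry of equivalence to reduce to a single direction is a small clarification the paper leaves implicit, but otherwise the arguments coincide.
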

\begin{proof}
Consider the case that an RBR graph $(N,E,\ell,\pi)$ is locally isomorphic to another RBR graph $(N',E',\ell',\pi')$. By Definition~\ref{df:RBR graph isomorphic}, let $\alpha$ be the local isomorphism and $\mathcal{D}$ be the common domain of definition of $\pi$ and $\pi'$.
Then, by Lemma~\ref{lm:path set equal in isomorphic}, for each agent $a\in\mathcal{D}$,
\begin{equation}\label{eq:3-15-1}
\Psi_{\pi_a}^*=\Psi_{\alpha(\pi_a)}^*.
\end{equation}
Meanwhile, $\alpha(\pi_a)=\pi'_a$ for each agent $a\in\mathcal{D}$ by item~\ref{dfitem:isomorphic designation} of Definition~\ref{df:RBR graph isomorphic}.
Then, $\Psi_{\pi_a}^*=\Psi_{\pi'_a}^*$ for each agent $a\in\mathcal{D}$ by statement~\eqref{eq:3-15-1}.
Thus, the RBR graphs $(N,E,\ell,\pi)$ and $(N',E',\ell',\pi')$ are equivalent by Corollary~\ref{cr:RBR graph equivalent} because $\mathcal{D}$ is the common domain of definition of $\pi$ and $\pi'$.
\end{proof}

\section{Minimisation of RBR Graphs}

In this section, we clarify the minimality of an RBR graph. In addition, we prove the correctness of Algorithm~\ref{alg:compute CF} in Section~\ref{sec:minimisation RBR graph} and analyse its time complexity.

\subsection{Canonical RBR Graph}\label{sec:app canonical RBR graph}

Definition~\ref{df:canonical RBR graph} specifies which type of RBR graph is canonical.
The next lemma follows directly from Definition~\ref{df:canonical RBR graph}.
\begin{lemma}\label{lm:node type differ in canonical graph}
$|\{\Psi^*_{n}\mid n\in N\}|=|N|$ in a canonical RBR graph $(N,E,\ell,\pi)$.
\end{lemma}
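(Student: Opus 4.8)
The plan is to exhibit the assignment $n \mapsto \Psi^*_n$ as a bijection between the node set $N$ and the set $\{\Psi^*_n \mid n \in N\}$, from which the equality of the two cardinalities is immediate. First I would observe that, tautologically, this assignment is a surjection from $N$ onto $\{\Psi^*_n \mid n \in N\}$, since every element of the target set is by its very definition of the form $\Psi^*_n$ for some $n \in N$.

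Next I would invoke Definition~\ref{df:canonical RBR graph}: because the RBR graph $(N,E,\ell,\pi)$ is canonical, we have $\Psi^*_n \neq \Psi^*_{n'}$ for every pair of distinct nodes $n,n' \in N$. This is exactly the statement that $n \mapsto \Psi^*_n$ is injective. A map that is both surjective and injective is a bijection, and since $N$ is finite by item~\ref{dfitem:RBR graph frame} of Definition~\ref{df:RBR graph}, a bijection between $N$ and $\{\Psi^*_n \mid n \in N\}$ forces $|\{\Psi^*_n \mid n \in N\}| = |N|$, which is the claim.

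There is essentially no obstacle here: the lemma is a direct unwinding of the definition of canonicity, and the only point that needs a (trivial) mention is the finiteness of $N$, which guarantees that ``bijection implies equal cardinality'' applies without any set-theoretic subtlety.
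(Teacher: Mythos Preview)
Your proposal is correct and matches the paper's approach: the paper simply states that the lemma follows directly from Definition~\ref{df:canonical RBR graph}, and your argument spells out exactly that one-line observation (canonicity means $n\mapsto\Psi^*_n$ is injective, hence a bijection onto its image).
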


The next lemma shows that canonical RBR graphs are minimal in terms of the number of nodes.
Briefly speaking, the statement of the lemma follows from Lemma~\ref{lm:node type differ in canonical graph} and Theorem~\ref{th:node type matches in equivalent RBR graphs} by the {\em pigeonhole principle}.
We give a formal definition below.

\begin{lemma}\label{lm:canonical is minimal}
A canonical RBR graph is not equivalent to any RBR graph with fewer nodes.
\end{lemma}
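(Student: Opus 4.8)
The plan is to derive a contradiction from the existence of an equivalent RBR graph with strictly fewer nodes. Suppose $(N,E,\ell,\pi)$ is canonical and $(N',E',\ell',\pi')$ is an equivalent RBR graph with $|N'| < |N|$. Since the two graphs are equivalent, Theorem~\ref{th:node type matches in equivalent RBR graphs} gives $\{\Psi_n^*\mid n\in N\}=\{\Psi_{n'}^*\mid n'\in N'\}$. In particular the two sets have the same cardinality, so $|\{\Psi_n^*\mid n\in N\}|=|\{\Psi_{n'}^*\mid n'\in N'\}|$.

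Next I would bound each side. By Lemma~\ref{lm:node type differ in canonical graph}, the left-hand set has cardinality exactly $|N|$, because the graph is canonical. For the right-hand set, the map $n'\mapsto\Psi_{n'}^*$ is a function from $N'$ onto $\{\Psi_{n'}^*\mid n'\in N'\}$, so $|\{\Psi_{n'}^*\mid n'\in N'\}|\le |N'|$ — this is just the observation that the image of a function is no larger than its domain (the pigeonhole principle, since a canonical structure is not assumed on $(N',E',\ell',\pi')$, distinct nodes there may share the same $\Psi^*$). Combining the three displayed facts yields
\begin{equation}\notag
|N| = |\{\Psi_n^*\mid n\in N\}| = |\{\Psi_{n'}^*\mid n'\in N'\}| \le |N'|,
\end{equation}
which contradicts the assumption $|N'|<|N|$. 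Hence no equivalent RBR graph with fewer nodes exists.

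There is really no hard obstacle here: the lemma is an immediate consequence of Theorem~\ref{th:node type matches in equivalent RBR graphs} (the $\Psi^*$-spectrum is an equivalence invariant), Lemma~\ref{lm:node type differ in canonical graph} (canonicity makes the spectrum as large as the node set), and counting. The only point that needs a moment's care is the direction of the pigeonhole inequality — it is the non-canonical graph $(N',E',\ell',\pi')$ whose node count can only be $\ge$ the size of its $\Psi^*$-spectrum, whereas for the canonical graph equality holds. I would make sure to state explicitly that equivalence of the two graphs is what licenses the use of Theorem~\ref{th:node type matches in equivalent RBR graphs}, since that theorem is the load-bearing step.
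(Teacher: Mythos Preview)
Your proof is correct and follows essentially the same approach as the paper: invoke Theorem~\ref{th:node type matches in equivalent RBR graphs} to equate the $\Psi^*$-spectra, use Lemma~\ref{lm:node type differ in canonical graph} to identify the canonical spectrum's size with $|N|$, and then apply the trivial bound $|\{\Psi_{n'}^*\mid n'\in N'\}|\le|N'|$. The only cosmetic difference is that you phrase it as a contradiction from $|N'|<|N|$, whereas the paper directly concludes $|N|\le|N'|$ for any equivalent graph.
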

\begin{proof}
Consider an arbitrary belief $(N',E',\ell',\pi')$ that is equivalent to a canonical RBR graph $(N,E,\ell,\pi)$.
Then,
\begin{equation}\notag
|\{\Psi_{n}^*\mid n\in N\}|=|N|
\end{equation}
by Lemma~\ref{lm:node type differ in canonical graph} and
\begin{equation}\notag
\{\Psi_n^*\mid n\in N\}=\{\Psi_{n'}^*\mid n'\in N'\}
\end{equation}
by Theorem~\ref{th:node type matches in equivalent RBR graphs}.
Thus,
\begin{equation}\label{eq:4-4-4}
|\{\Psi_{n'}^*\mid n'\in N'\}|=|N|.
\end{equation}
Note that, $|\{\Psi_{n'}^*\mid n'\in N'\}|\leq |N'|$.
Hence, $|N|\leq |N'|$ by statement~\eqref{eq:4-4-4}.
\end{proof}

The above lemma allows us to minimise an RBR graph by finding an equivalent canonical RBR graph.
The next theorem states a stronger result about two equivalent canonical RBR graphs.
It shows the structural uniqueness of the equivalent canonical RBR graph of any RBR graph.
In other words, the most condensed expression (with RBR graphs) of any RBR system is unique.

\begin{theorem}\label{th:equivalent canonical graph are isomorphic}
If two canonical RBR graphs $(N,E,\ell,\pi)$ and $(N',E',\ell',\pi')$ are equivalent, then they are isomorphic.
\end{theorem}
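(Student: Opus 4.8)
The plan is to build the isomorphism directly from the belief‑hierarchy invariant $\Psi_n^*$, exploiting canonicity to make the construction unambiguous on both sides, and then to check it is a local isomorphism in the sense of Definition~\ref{df:RBR graph isomorphic}; since it will also be a bijection, it is an isomorphism. First I would record the cardinality fact: canonicity gives, via Lemma~\ref{lm:node type differ in canonical graph}, $|\{\Psi_n^*\mid n\in N\}| = |N|$ and $|\{\Psi_{n'}^*\mid n'\in N'\}| = |N'|$, while equivalence gives, via Theorem~\ref{th:node type matches in equivalent RBR graphs}, $\{\Psi_n^*\mid n\in N\} = \{\Psi_{n'}^*\mid n'\in N'\}$; hence $|N| = |N'|$. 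Define $\alpha\colon N\to N'$ by letting $\alpha(n)$ be the node $n'\in N'$ with $\Psi_{n'}^* = \Psi_n^*$. This is well defined (existence by Theorem~\ref{th:node type matches in equivalent RBR graphs}, uniqueness by canonicity of $(N',E',\ell',\pi')$), injective (by canonicity of $(N,E,\ell,\pi)$), and surjective (again by Theorem~\ref{th:node type matches in equivalent RBR graphs}), so $\alpha$ is a bijection.

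Next I would verify the two easier conditions of Definition~\ref{df:RBR graph isomorphic}. Label preservation, $\ell_n = \ell'_{\alpha(n)}$, follows at once from $\Psi_n^* = \Psi_{\alpha(n)}^*$ (which in particular gives $\Psi_n^1 = \Psi_{\alpha(n)}^1$) by Lemma~\ref{lm:3-11-a}. For the designation condition, Corollary~\ref{cr:RBR graph equivalent} applied to the two equivalent graphs yields a common domain $\mathcal{D}$ of $\pi$ and $\pi'$ with $\Psi_{\pi_a}^* = \Psi_{\pi'_a}^*$ for every $a\in\mathcal{D}$; hence $\alpha(\pi_a) = \pi'_a$ by the defining property of $\alpha$ and canonicity of $(N',E',\ell',\pi')$.

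The crux is edge preservation: $\{\alpha(m)\mid m\in Adj_n\} = Adj_{\alpha(n)}$ for each $n\in N$. Since $\Psi_n^* = \Psi_{\alpha(n)}^*$ implies $\Psi_n^i = \Psi_{\alpha(n)}^i$ for every $i$, Lemma~\ref{lm:3-15-a} (taking $i = 2$) gives $\mathcal{B}_n = \mathcal{B}_{\alpha(n)}$. For the inclusion $\subseteq$, take $m\in Adj_n$ and set $b := \ell_m$, so $b\in\mathcal{B}_n = \mathcal{B}_{\alpha(n)}$ and $\beta_{\alpha(n)}(b)\in Adj_{\alpha(n)}$ has label $b$; Lemma~\ref{lm:4-4-a} then gives $\Psi_m^* = \Psi_{\beta_{\alpha(n)}(b)}^*$, whence $\alpha(m) = \beta_{\alpha(n)}(b)\in Adj_{\alpha(n)}$ by canonicity of $(N',E',\ell',\pi')$. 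The reverse inclusion is symmetric: given $m'\in Adj_{\alpha(n)}$, put $b := \ell'_{m'}\in\mathcal{B}_{\alpha(n)} = \mathcal{B}_n$ and use $\beta_n(b)\in Adj_n$; Lemma~\ref{lm:4-4-a} gives $\Psi_{\beta_n(b)}^* = \Psi_{m'}^*$, hence $\alpha(\beta_n(b)) = m'$, so $m'\in\{\alpha(m)\mid m\in Adj_n\}$. Thus $\alpha$ satisfies all three conditions of Definition~\ref{df:RBR graph isomorphic} and, being bijective, is an isomorphism, proving the theorem. I expect the bookkeeping in this last step to be the only delicate point: at each step one must invoke canonicity of the \emph{target} graph to identify the adjacent node produced by Lemma~\ref{lm:4-4-a} with the value of $\alpha$, since a priori these are merely two distinct‑looking nodes that happen to carry the same belief hierarchy.
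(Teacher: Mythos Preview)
Your proposal is correct and follows essentially the same approach as the paper: both construct the bijection $\alpha$ via the invariant $\Psi_n^*$ using Theorem~\ref{th:node type matches in equivalent RBR graphs} and Lemma~\ref{lm:node type differ in canonical graph}, then verify the three conditions of Definition~\ref{df:RBR graph isomorphic} using Lemma~\ref{lm:4-4-a} for edge preservation, Lemma~\ref{lm:3-11-a} (equivalently $\Pi_n^1=\Pi_{\alpha(n)}^1$) for labels, and Corollary~\ref{cr:RBR graph equivalent} for designation. The only cosmetic difference is that the paper packages the edge step as the single identity $\alpha(\beta_n(b))=\beta_{\alpha(n)}(b)$ for each $b\in\mathcal{B}_n=\mathcal{B}_{\alpha(n)}$, whereas you write out the two inclusions separately; the content is identical.
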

\begin{proof}
By Theorem~\ref{th:node type matches in equivalent RBR graphs} and the assumption of the theorem,
\begin{equation}\label{eq:4-5-1}
\{\Psi_{n}^*\mid n\in N\}=\{\Psi_{n'}^*\mid n'\in N'\}.
\end{equation}
Then, by Lemma~\ref{lm:node type differ in canonical graph},
\begin{equation}\notag
|N|=|\{\Psi_{n}^*\mid n\in N\}|=|\{\Psi_{n'}^*\mid n'\in N'\}|=|N'|.
\end{equation}
Hence, there is a bijective function $\alpha: N\to N'$ such that
\begin{equation}\label{eq:4-5-2}
\Psi^*_{n}=\Psi^*_{\alpha(n)}
\end{equation}
for each node $n\in N$ by statement~\eqref{eq:4-5-1}.

\begin{claim}\label{cl:4-5-a}
$\{\alpha(m)\mid nEm\}=\{m'\mid\alpha(n)E'm'\}$ for each node $n\in N$.
\end{claim}
\begin{proof-of-claim}
By statements~\eqref{eq:1 step neighbour set}, \eqref{eq:1 step agent set}, and Definition~\ref{df:function tau},
\begin{align}
\{m\mid nEm\}&=\{\beta_n(b)\mid b\in\mathcal{B}_n\};\label{eq:4-9-1}\\
\{m'\mid\alpha(n)E'm'\}&=\{\beta_{\alpha(n)}(b)\mid b\in\mathcal{B}_{\alpha(n)}\}.\label{eq:4-9-2}
\end{align}
By applying function $\alpha$ to both sides of statement~\eqref{eq:4-9-1},
\begin{equation}\label{eq:4-9-3}
\{\alpha(m)\mid nEm\}=\{\alpha(\beta_n(b))\mid b\in\mathcal{B}_n\}.
\end{equation}

Note that, $\Psi^2_{n}=\Psi^2_{\alpha(n)}$ by Lemma~\ref{lm:equal whole path set}, statements~\eqref{eq:accumulated path set} and \eqref{eq:4-5-2}.
Then, $\mathcal{B}_n=\mathcal{B}_{\alpha(n)}$ by Lemma~\ref{lm:3-15-a}.
Thus, by statements~\eqref{eq:4-9-2} and \eqref{eq:4-9-3}, it suffices to prove for each agent $b\in\mathcal{B}_n$ that $\alpha(\beta_n(b))=\beta_{\alpha(n)}(b)$.

Consider an arbitrary agent $b\in\mathcal{B}_n$.
Then, $\beta_n(b)\in Adj_n$, $\beta_{\alpha(n)}(b)\in Adj_{\alpha(n)}$, and $\ell_{\beta_n(b)}=b=\ell'_{\beta_{\alpha(n)}(b)}$ by statements~\eqref{eq:1 step neighbour set}, \eqref{eq:1 step agent set}, and Definition~\ref{df:function tau}.
Thus, $\Psi^*_{\beta_n(b)}=\Psi^*_{\beta_{\alpha(n)}(b)}$ by statement~\eqref{eq:4-5-2} and Lemma~\ref{lm:4-4-a}.
Hence, by statement~\eqref{eq:4-5-2},
\begin{equation}\label{eq:4-5-5}
\Psi^*_{\alpha(\beta_n(b))}=\Psi^*_{\beta_{\alpha(n)}(b)}.
\end{equation}
Note that both nodes $\alpha(\beta_n(b))$ and $\beta_{\alpha(n)}(b)$ are in set $ N'$.
Therefore, $\alpha(\beta_n(b))=\beta_{\alpha(n)}(b)$ by statement~\eqref{eq:4-5-5}, Definition~\ref{df:canonical RBR graph}, and the assumption that the RBR graph $(N',E',\ell',\pi')$ is canonical.
\end{proof-of-claim}

\begin{claim}\label{cl:4-5-b}
$\ell_{n}=\ell'_{\alpha(n)}$ for each node $n\in N$.
\end{claim}
\begin{proof-of-claim}
Note that $\Pi^1_{n}=\Pi^1_{\alpha(n)}$ by statement~\eqref{eq:4-5-2} and Lemma~\ref{lm:equal whole path set}.
Then, $\ell_{n}=\ell'_{\alpha(n)}$ by statement~\eqref{eq:path set}.
\end{proof-of-claim}

\begin{claim}\label{cl:4-5-c}
$\pi$ and $\pi'$ have the same domain $\mathcal{D}$ of definition and $\alpha(\pi_a)=\pi'_a$ for each agent $a\in\mathcal{D}$.
\end{claim}
\begin{proof-of-claim}
By Corollary~\ref{cr:RBR graph equivalent} and the assumption of equivalence, $\pi$ and $\pi'$ have the same domain $\mathcal{D}$ of definition and $\Psi^*_{\pi_a}=\Psi^*_{\pi'_a}$ for each agent $a\in\mathcal{D}$ .
Thus, $\Psi^*_{\alpha(\pi_a)}=\Psi^*_{\pi'_a}$ by statement~\eqref{eq:4-5-2}.
Hence, $\alpha(\pi_a)=\pi'_a$ by Definition~\ref{df:canonical RBR graph} because both $\alpha(\pi_a)$ and $\pi'_a$ are nodes in the canonical RBR graph $(N',E',\ell',\pi')$.
\end{proof-of-claim}

Recall that $\alpha$ is a bijective function. Then, the statement of the theorem follows from Claim~\ref{cl:4-5-a}, Claim~\ref{cl:4-5-b}, and Claim~\ref{cl:4-5-c} by Definition~\ref{df:RBR graph isomorphic}.
\end{proof}

\subsection{Partition Sequence of the Node Set}\label{sec:app partition sequence}

The core of Algorithm~\ref{alg:compute CF} is a sequence of partitions of the node set of an RBR graph.
This is inspired by Myhill-Nerode theorem~\cite{myhill1957finite,nerode1958linear}.
In this subsection, we formally define the partition sequence and prove its properties, based on which we prove the correctness of Algorithm~\ref{alg:compute CF} in the next subsection.

Throughout this subsection, we consider an arbitrary RBR graph $(N,E,\ell,\pi)$ and a sequence of \textit{partitions} $\mathbb{P}^1$, $\mathbb{P}^2$, $\mathbb{P}^3,\dots$ of the set $N$ of nodes, where
\begin{equation}\label{eq:1st partition in minimisation}
\mathbb{P}^1:=\big\{\{n'\in N\,|\,\ell_{n'}=\ell_n\}\,\big|\,n\in N\big\}
\end{equation}
is the \textit{partition} of set $N$ according to the labels of the nodes, and,
for each integer $i\geq 2$,
\begin{equation}\label{eq:ith partition in minimisation}
\hspace{-4mm}\mathbb{P}^i:=\!\!\!\!\bigcup_{P\in \mathbb{P}^{i-1}}\!\!\!\! \big\{\{n'\in P\mid type^i(n')=type^i(n) \}\mid n\in P\big\}\hspace{-4mm}
\end{equation}
where
\begin{equation}\label{eq:partition type in minimisation}
type^i(n):=\{P\in\mathbb{P}^{i-1}\mid nEm,m\in P\}.
\end{equation}
Observe that, for each integer $i\geq 2$, the family $\mathbb{P}^i$ is a \textit{refined} partition of the partition $\mathbb{P}^{i-1}$.
More precisely, the refining process first partitions each set $P\in\mathbb{P}^{i-1}$ according to some \textit{type} information of the nodes and then combines the partitions for all sets $P\in\mathbb{P}^{i-1}$ as the family $\mathbb{P}^i$.
Let
\begin{equation}\label{eq:finest partition in minimisation}
\mathbb{P}:=\lim_{i\to\infty}\mathbb{P}^i.
\end{equation}
As we will see in Lemma~\ref{lm:finest partition exists}, such a partition exists and it is the \textbf{finest partition} in the sequence.

For each node $n\in N$ and each integer $i\geq 1$, we abuse\footnote{Mathematically, the partition $\mathbb{P}^i$ is an equivalence relation on the set $N$ of nodes and each set $P\in\mathbb{P}^i$ is an equivalence class. Then, $\mathbb{P}^i(n)$ is the equivalence class containing node $n$. Thus, $\mathbb{P}^i(n)=\mathbb{P}^i(n')$ if and only if the nodes $n,n'$ belong to the same equivalence class.} the notation $\mathbb{P}^i(n)$ to denote the set $P\in\mathbb{P}^i$ such that $n\in P$, and similarly for $\mathbb{P}(n)$.
Then, by statement~\eqref{eq:1 step neighbour set} and Definition~\ref{df:function tau}, statement \eqref{eq:partition type in minimisation} can be reformulated as, for any integer $i\geq 2$,
\begin{equation}\label{eq:variety type in minimisation}
type^i(n)=\big\{\mathbb{P}^{i-1}(m)\mid m\in Adj_n\big\};
\end{equation}
\begin{equation}\label{eq:3-25-1}
type^i(n)=\big\{\mathbb{P}^{i-1}(\beta_n(b))\mid b\in \mathcal{B}_n\big\}.
\end{equation}
The next lemma follows directly from  statement~\eqref{eq:ith partition in minimisation}.
\begin{lemma}\label{lm:Mar-6-a}
\!For any integer $i\!\geq\!\! 2$ and any nodes $n,n'\!\in\! N$, the statement $\mathbb{P}^i(n)=\mathbb{P}^i(n')$is true if and only if the following statements are both true
\begin{itemize}
\item $\mathbb{P}^{i-1}(n)=\mathbb{P}^{i-1}(n')$;
\item $type^i(n)\!=\!type^i(n')$.
\end{itemize}
\end{lemma}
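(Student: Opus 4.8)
The plan is to reduce the statement to a single structural identity describing the block of $\mathbb{P}^i$ that contains a given node. First I would record the fact (already observed after \eqref{eq:ith partition in minimisation}) that, for each $i\geq 2$, the family $\mathbb{P}^i$ is genuinely a partition of $N$ refining $\mathbb{P}^{i-1}$: for each fixed block $P\in\mathbb{P}^{i-1}$, the inner family $\{\{n'\in P\mid type^i(n')=type^i(n)\}\mid n\in P\}$ is the quotient of $P$ by the equivalence relation ``having the same $type^i$-value'', hence it partitions $P$; and since distinct blocks of $\mathbb{P}^{i-1}$ are disjoint, the union over all $P\in\mathbb{P}^{i-1}$ in \eqref{eq:ith partition in minimisation} yields a partition of $N$ each of whose blocks lies inside a block of $\mathbb{P}^{i-1}$. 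In particular the notation $\mathbb{P}^i(n)$ is well defined, and $n\in\mathbb{P}^i(n)$.

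The key step is then the identity
\[
\mathbb{P}^i(n)=\{m\in\mathbb{P}^{i-1}(n)\mid type^i(m)=type^i(n)\}\qquad(\star)
\]
valid for every node $n\in N$ and every integer $i\geq 2$. This is immediate from \eqref{eq:ith partition in minimisation}: the block of $\mathbb{P}^i$ containing $n$ must be one of the blocks contributed by $P:=\mathbb{P}^{i-1}(n)$ (the unique block of $\mathbb{P}^{i-1}$ that contains $n$), and among those contributions it is precisely the set of $m\in P$ with $type^i(m)=type^i(n)$.

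Given $(\star)$, both implications follow with no further computation. For the forward direction, if $\mathbb{P}^i(n)=\mathbb{P}^i(n')$, then $n'\in\mathbb{P}^i(n)$, so $(\star)$ gives $n'\in\mathbb{P}^{i-1}(n)$, i.e.\ $\mathbb{P}^{i-1}(n)=\mathbb{P}^{i-1}(n')$, and also $type^i(n')=type^i(n)$. For the converse, if $\mathbb{P}^{i-1}(n)=\mathbb{P}^{i-1}(n')$ and $type^i(n)=type^i(n')$, then the right-hand side of $(\star)$ written for $n$ and the one written for $n'$ are literally the same set, whence $\mathbb{P}^i(n)=\mathbb{P}^i(n')$.

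There is no genuine obstacle here: this lemma is essentially an unpacking of the recursive definition \eqref{eq:ith partition in minimisation}. The only point requiring care is the bookkeeping that $\mathbb{P}^i$ is a partition, so that $\mathbb{P}^i(n)$ denotes a unique block; this is why I would state that fact explicitly (or cite the observation already made) before invoking the notation. The alternative descriptions \eqref{eq:variety type in minimisation} and \eqref{eq:3-25-1} of $type^i$ play no role in this particular lemma.
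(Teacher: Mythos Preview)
Your proposal is correct and matches the paper's approach: the paper simply states that the lemma follows directly from statement~\eqref{eq:ith partition in minimisation}, and your argument is precisely the unpacking of that definition via the identity $(\star)$, with the requisite observation that $\mathbb{P}^i$ is a genuine partition so that $\mathbb{P}^i(n)$ is well defined.
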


Then, the next lemma follows from statement~\eqref{eq:1st partition in minimisation} by induction on integer $i$.
\begin{lemma}\label{lm:Mar-6-b}
For any nodes $n,n'\in N$, if there is an integer $i\geq 1$ such that $\mathbb{P}^i(n)=\mathbb{P}^i(n')$, then $\ell_n=\ell_{n'}$.
\end{lemma}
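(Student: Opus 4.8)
The plan is to prove the statement by induction on the integer $i$, exploiting the fact that the partition sequence $\mathbb{P}^1,\mathbb{P}^2,\dots$ is nested, with each $\mathbb{P}^i$ being a refinement of $\mathbb{P}^{i-1}$.

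For the base case $i=1$, I would appeal directly to statement~\eqref{eq:1st partition in minimisation}: by definition, every block of $\mathbb{P}^1$ is the set of all nodes carrying one fixed label. Hence the hypothesis $\mathbb{P}^1(n)=\mathbb{P}^1(n')$ says that $n$ and $n'$ lie in a common such block, which is exactly the assertion $\ell_n=\ell_{n'}$.

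For the inductive step, I would fix $i\geq 2$, assume the claim at level $i-1$, and suppose $\mathbb{P}^i(n)=\mathbb{P}^i(n')$. By Lemma~\ref{lm:Mar-6-a} this implies $\mathbb{P}^{i-1}(n)=\mathbb{P}^{i-1}(n')$ — precisely the refinement property already extracted from statement~\eqref{eq:ith partition in minimisation}. Applying the induction hypothesis to $n,n'$ at level $i-1$ then yields $\ell_n=\ell_{n'}$, which closes the induction.

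There is no real obstacle here: the lemma is pure bookkeeping sitting on top of Lemma~\ref{lm:Mar-6-a}. The only point worth flagging is that the argument must be routed through the coarser partition $\mathbb{P}^{i-1}$; one should not try to recover the label from the $type^i$ information alone, since $type^i(n)$ records only the multiset of successor blocks of $n$ and carries no label data of its own.
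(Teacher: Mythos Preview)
Your proposal is correct and matches the paper's own proof, which simply states that the lemma follows from statement~\eqref{eq:1st partition in minimisation} by induction on $i$. Your use of Lemma~\ref{lm:Mar-6-a} to extract the refinement property $\mathbb{P}^{i-1}(n)=\mathbb{P}^{i-1}(n')$ in the inductive step is exactly the intended route.
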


The next lemma follows from the recursive definitions in statements~\eqref{eq:ith partition in minimisation} and \eqref{eq:partition type in minimisation}.
It shows that, once the sequence of partitions is stable, it will be stable forever.
\begin{lemma}\label{lm:Mar-8-a}
For any integer $i\geq 1$, if $\mathbb{P}^{i+1}=\mathbb{P}^{i}$, then $\mathbb{P}^{i+2}=\mathbb{P}^{i+1}$.
\end{lemma}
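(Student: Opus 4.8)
The plan is to combine the characterisation of partition equality in Lemma~\ref{lm:Mar-6-a} with the hypothesis $\mathbb{P}^{i+1}=\mathbb{P}^i$ to push information up one level. Since every $\mathbb{P}^{j}$ with $j\geq 2$ refines $\mathbb{P}^{j-1}$ (the observation following statement~\eqref{eq:partition type in minimisation}, which is just the forward direction of Lemma~\ref{lm:Mar-6-a}), we already know $\mathbb{P}^{i+2}$ refines $\mathbb{P}^{i+1}$. As two partitions of $N$ that mutually refine each other coincide as families of sets, it therefore suffices to prove the reverse refinement: whenever $\mathbb{P}^{i+1}(n)=\mathbb{P}^{i+1}(n')$ for $n,n'\in N$, also $\mathbb{P}^{i+2}(n)=\mathbb{P}^{i+2}(n')$.

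The key step is to observe that the hypothesis forces $type^{i+2}=type^{i+1}$ pointwise. Indeed, by statement~\eqref{eq:variety type in minimisation}, $type^{i+2}(n)=\{\mathbb{P}^{i+1}(m)\mid m\in Adj_n\}$ while $type^{i+1}(n)=\{\mathbb{P}^{i}(m)\mid m\in Adj_n\}$; since $\mathbb{P}^{i+1}=\mathbb{P}^i$ gives $\mathbb{P}^{i+1}(m)=\mathbb{P}^{i}(m)$ for every node $m$, these two sets are equal for every $n\in N$. Now take $n,n'$ with $\mathbb{P}^{i+1}(n)=\mathbb{P}^{i+1}(n')$. Applying Lemma~\ref{lm:Mar-6-a} at level $i+1$ yields $type^{i+1}(n)=type^{i+1}(n')$, and hence $type^{i+2}(n)=type^{i+2}(n')$ by the pointwise equality just established. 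Finally, applying Lemma~\ref{lm:Mar-6-a} at level $i+2$ — whose first hypothesis $\mathbb{P}^{i+1}(n)=\mathbb{P}^{i+1}(n')$ is exactly what we assumed — delivers $\mathbb{P}^{i+2}(n)=\mathbb{P}^{i+2}(n')$. Combined with the easy refinement direction, this gives $\mathbb{P}^{i+2}=\mathbb{P}^{i+1}$.

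There is no serious obstacle here: the argument is a short chain of implications through Lemma~\ref{lm:Mar-6-a}. The only points requiring care are bookkeeping ones: all invocations of Lemma~\ref{lm:Mar-6-a} must be at indices $\geq 2$, which is automatic because $i\geq 1$ makes $i+1\geq 2$; and one must be sure not to invoke the statement being proved (or Lemma~\ref{lm:finest partition exists}) in establishing that $\mathbb{P}^{i+2}$ refines $\mathbb{P}^{i+1}$ — this refinement is purely definitional from statements~\eqref{eq:ith partition in minimisation} and \eqref{eq:partition type in minimisation}, so no circularity arises.
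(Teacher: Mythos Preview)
Your proposal is correct and follows essentially the same approach as the paper: both reduce to showing that $\mathbb{P}^{i+1}(n)=\mathbb{P}^{i+1}(n')$ implies $\mathbb{P}^{i+2}(n)=\mathbb{P}^{i+2}(n')$, establish $type^{i+2}=type^{i+1}$ pointwise from the hypothesis $\mathbb{P}^{i+1}=\mathbb{P}^i$ via statement~\eqref{eq:variety type in minimisation}, and then apply Lemma~\ref{lm:Mar-6-a} twice (once to extract $type^{i+1}(n)=type^{i+1}(n')$, once to conclude at level $i+2$). Your bookkeeping remarks about the index constraint $i+1\geq 2$ and the non-circularity of the refinement direction are accurate and slightly more explicit than the paper's presentation.
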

\begin{proof}
Consider two arbitrary nodes $n,n'$. In order to prove $\mathbb{P}^{i+2}=\mathbb{P}^{i+1}$, it suffices to show $\mathbb{P}^{i+2}(n)=\mathbb{P}^{i+2}(n')$ if and only if $\mathbb{P}^{i+1}(n)=\mathbb{P}^{i+1}(n')$.
The ``only if'' part of the statement follows from Lemma~\ref{lm:Mar-6-a}.
Next, we prove the ``if'' part, where we assume
\begin{equation}\label{eq:6-17-1}
\mathbb{P}^{i+1}(n)=\mathbb{P}^{i+1}(n').
\end{equation}

\begin{claim}\label{cl:Mar-8-a}
$type_n^{i+2}=type_{n'}^{i+2}$.
\end{claim}
\begin{proof-of-claim}
By the assumption $\mathbb{P}^{i+1}=\mathbb{P}^{i}$ of the lemma,
\begin{equation}\notag
\mathbb{P}^{i+1}(m)=\mathbb{P}^{i}(m)
\end{equation}
for each node $m\in N$.
Then, by statement~\eqref{eq:variety type in minimisation},
\begin{equation}\label{eq:Mar-8-1}
\begin{aligned}
type^{i+2}_n=&\big\{\mathbb{P}^{i+1}(m)\mid m\in Adj_n\big\}\\
=&\big\{\mathbb{P}^{i}(m)\mid m\in Adj_n\big\}=type^{i+1}_n;\\
type^{i+2}_{n'}=&\big\{\mathbb{P}^{i+1}(m)\mid m\in Adj_{n'}\big\}\\
=&\big\{\mathbb{P}^{i}(m)\mid m\in Adj_{n'}\big\}=type^{i+1}_{n'}.
\end{aligned}
\end{equation}
Note that $type^{i+1}_{n}=type^{i+1}_{n'}$ by Lemma~\ref{lm:Mar-6-a} and statement \eqref{eq:6-17-1}.
Thus, $type_n^{i+2}=type_{n'}^{i+2}$ by statement~\eqref{eq:Mar-8-1}.
\end{proof-of-claim}

Therefore, $\mathbb{P}^{i+2}(n)=\mathbb{P}^{i+2}(n')$ by Lemma~\ref{lm:Mar-6-a}, Claim~\ref{cl:Mar-8-a}, and statement~\eqref{eq:6-17-1}.
\end{proof}

The next corollary follows from Lemma~\ref{lm:Mar-8-a} and statement~\eqref{eq:finest partition in minimisation} by the standard definition of the limit of a sequence of sets \cite[Section 1.3]{resnick1998probability}.

\begin{corollary}\label{cr:3-12-a}
For any integer $i\!\geq\! 1$, if $\mathbb{P}^{i+1}\!=\!\mathbb{P}^{i}$, then $\mathbb{P}\!=\!\mathbb{P}^{i}$.
\end{corollary}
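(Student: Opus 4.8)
The plan is to strengthen the one-step stabilisation of Lemma~\ref{lm:Mar-8-a} into stabilisation of the whole tail of the sequence, and then to read off the value of the limit from the standard convention for limits of sequences of sets.

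First I would fix an integer $i\geq 1$ with $\mathbb{P}^{i+1}=\mathbb{P}^{i}$ and prove by induction on $j$ that $\mathbb{P}^{j+1}=\mathbb{P}^{j}$ for every integer $j\geq i$. The base case $j=i$ is exactly the hypothesis. For the inductive step, assume $\mathbb{P}^{j+1}=\mathbb{P}^{j}$ for some $j\geq i$; then Lemma~\ref{lm:Mar-8-a}, applied with index $j$ in place of $i$, gives $\mathbb{P}^{j+2}=\mathbb{P}^{j+1}$, closing the induction. Chaining these equalities by a further trivial induction on $j\geq i$ yields $\mathbb{P}^{j}=\mathbb{P}^{i}$ for every $j\geq i$; that is, the sequence of partitions $\mathbb{P}^1,\mathbb{P}^2,\dots$ is eventually constant with value $\mathbb{P}^{i}$.

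It then remains to identify the limit $\mathbb{P}=\lim_{j\to\infty}\mathbb{P}^{j}$ from statement~\eqref{eq:finest partition in minimisation}. Since the sequence stabilises at $\mathbb{P}^{i}$ from its $i$-th term onward, the standard definition of the limit of a sequence of sets (the convention recalled after Definition~\ref{df:rational solution}, whereby an eventually constant sequence converges to its stable value) gives $\mathbb{P}=\mathbb{P}^{i}$, which is precisely the statement of the corollary.

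I do not anticipate a genuine obstacle: the argument is a routine induction feeding into a definitional fact. The only point needing slight care is that each $\mathbb{P}^{j}$ is a partition, i.e.\ a set of subsets of $N$, so ``limit'' must be understood in the set-sequence sense; but because the relevant tail of the sequence is literally constant, the convention applies without any subtlety.
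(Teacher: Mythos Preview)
Your proposal is correct and follows essentially the same approach as the paper: the paper simply states that the corollary follows from Lemma~\ref{lm:Mar-8-a} and statement~\eqref{eq:finest partition in minimisation} via the standard definition of the limit of a sequence of sets, and you have spelled out exactly that argument (the induction propagating stabilisation along the tail, then reading off the limit of an eventually constant sequence).
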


The above lemma shows that, to get the finest partition $\mathbb{P}$ in the sequence, we just need to find a partition $\mathbb{P}^i$ in the sequence such that $\mathbb{P}^{i+1}\!=\!\mathbb{P}^{i}$.
The next lemma shows that such an integer $i$ always exists and the sequence of partition stabilises after at most $|N|$ items.

\begin{lemma}\label{lm:finest partition exists}
There is an integer $i\leq |N|$ such that $\mathbb{P}=\mathbb{P}^j$ for each integer $j\geq i$.
\end{lemma}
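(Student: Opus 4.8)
The plan is to exploit the fact that $\mathbb{P}^1,\mathbb{P}^2,\dots$ is a monotone refinement chain of partitions of the finite set $N$, so its granularity can strictly increase only finitely often, and then to invoke the stability results already established for this sequence.

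First I would record the elementary observation that, for each integer $i\geq 1$, the partition $\mathbb{P}^{i+1}$ refines $\mathbb{P}^{i}$: by statement~\eqref{eq:ith partition in minimisation} every block of $\mathbb{P}^{i+1}$ is a nonempty subset of some block $P\in\mathbb{P}^{i}$, so $\mathbb{P}^{i+1}$ is obtained by subdividing the blocks of $\mathbb{P}^{i}$. Hence $|\mathbb{P}^{i}|\leq|\mathbb{P}^{i+1}|\leq|N|$, and since a strictly finer partition of a finite set has strictly more blocks, $\mathbb{P}^{i+1}\neq\mathbb{P}^{i}$ forces $|\mathbb{P}^{i+1}|\geq|\mathbb{P}^{i}|+1$.

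Next I would argue that stabilisation occurs by index $|N|$. Suppose toward a contradiction that $\mathbb{P}^{j+1}\neq\mathbb{P}^{j}$ for every $j$ with $1\leq j<|N|$. Then, by the previous step and $|\mathbb{P}^{1}|\geq 1$, a straightforward induction on $j$ gives $|\mathbb{P}^{j}|\geq j$ for $1\leq j\leq|N|$, so $|\mathbb{P}^{|N|}|=|N|$; that is, $\mathbb{P}^{|N|}$ is the partition of $N$ into singletons, which cannot be strictly refined, and therefore $\mathbb{P}^{|N|+1}=\mathbb{P}^{|N|}$. Either way, there is a smallest integer $i\leq|N|$ with $\mathbb{P}^{i+1}=\mathbb{P}^{i}$.

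Finally I would feed this into the results already proved about the partition sequence: Lemma~\ref{lm:Mar-8-a} together with an easy induction on $j$ shows $\mathbb{P}^{j}=\mathbb{P}^{i}$ for every $j\geq i$, and Corollary~\ref{cr:3-12-a} gives $\mathbb{P}=\mathbb{P}^{i}$; combining these yields $\mathbb{P}=\mathbb{P}^{j}$ for all $j\geq i$ with $i\leq|N|$, as required. I do not expect a genuine obstacle: the only point needing care is the monotonicity-and-counting argument of the first two steps, after which the conclusion is a direct appeal to Lemma~\ref{lm:Mar-8-a} and Corollary~\ref{cr:3-12-a}.
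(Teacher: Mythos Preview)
Your proposal is correct and essentially mirrors the paper's own proof: both use that $\mathbb{P}^{i+1}$ refines $\mathbb{P}^{i}$ so $|\mathbb{P}^{i}|$ is nondecreasing and bounded by $|N|$, deduce that some $i\leq|N|$ has $\mathbb{P}^{i+1}=\mathbb{P}^{i}$, and then appeal to Lemma~\ref{lm:Mar-8-a} and Corollary~\ref{cr:3-12-a}. Your dichotomy ``either stabilise before $|N|$ or reach the singleton partition'' is just a slightly more explicit rendering of the same pigeonhole step the paper leaves implicit.
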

\begin{proof}
By statement~\eqref{eq:ith partition in minimisation},
\begin{equation}\label{eq:Mar-8-3}
|\mathbb{P}^1|\leq |\mathbb{P}^2|\leq |\mathbb{P}^3|\leq \dots
\end{equation}
Note that, $|\mathbb{P}^i|\leq |N|$ for each $i\geq 1$ because $\mathbb{P}^i$ is a partition of set $N$.
Then, there must be an integer $i\leq |N|$ such that $|\mathbb{P}^{i+1}|=|\mathbb{P}^i|$ by statement~\eqref{eq:Mar-8-3}.
By statement~\eqref{eq:ith partition in minimisation}, this further implies the existence of an integer $i\leq |N|$ such that $\mathbb{P}^{i+1}=\mathbb{P}^i$. 
Then, the statement of the lemma follows from Corollary~\ref{cr:3-12-a} and Lemma~\ref{lm:Mar-8-a}.
\end{proof}

The next corollary extends the result in Lemma~\ref{lm:Mar-6-b} using the result in Lemma~\ref{lm:finest partition exists}.
It shows that, if two nodes are in the same set of the finest partition, then they must be labelled with the same agent.

\begin{corollary}\label{cr:same label in finest partition}
For any two nodes $n,n'\in N$, if $\mathbb{P}(n)=\mathbb{P}(n')$, then $\ell_n=\ell_{n'}$.
\end{corollary}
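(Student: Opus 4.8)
The plan is to reduce the claim about the limiting partition $\mathbb{P}$ to the corresponding claim about a finite-index partition $\mathbb{P}^i$, for which Lemma~\ref{lm:Mar-6-b} already does the work. First I would invoke Lemma~\ref{lm:finest partition exists} to obtain an integer $i\leq |N|$ such that $\mathbb{P}=\mathbb{P}^j$ for every $j\geq i$; in particular $\mathbb{P}=\mathbb{P}^i$. Since $\mathbb{P}$ and $\mathbb{P}^i$ are the same partition of $N$, the block of $\mathbb{P}$ containing a node coincides with the block of $\mathbb{P}^i$ containing that node, i.e. $\mathbb{P}(m)=\mathbb{P}^i(m)$ for each $m\in N$ (using the abuse of notation introduced after statement~\eqref{eq:finest partition in minimisation}).

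Next I would apply this to the two nodes in the hypothesis: from $\mathbb{P}(n)=\mathbb{P}(n')$ and the identities $\mathbb{P}(n)=\mathbb{P}^i(n)$, $\mathbb{P}(n')=\mathbb{P}^i(n')$, we conclude $\mathbb{P}^i(n)=\mathbb{P}^i(n')$. This is exactly the antecedent of Lemma~\ref{lm:Mar-6-b} (with the fixed index $i\geq 1$), so that lemma yields $\ell_n=\ell_{n'}$, which is the desired conclusion.

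There is essentially no obstacle here: the only subtlety is the bookkeeping that ``being in the same block of $\mathbb{P}$'' and ``being in the same block of $\mathbb{P}^i$'' are literally the same statement once $\mathbb{P}=\mathbb{P}^i$ has been established, and that $i\geq 1$ so Lemma~\ref{lm:Mar-6-b} is applicable. The whole argument is a two-line chaining of Lemma~\ref{lm:finest partition exists} and Lemma~\ref{lm:Mar-6-b}.
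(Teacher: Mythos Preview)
Your proposal is correct and matches the paper's own approach exactly: the paper introduces this corollary by saying it ``extends the result in Lemma~\ref{lm:Mar-6-b} using the result in Lemma~\ref{lm:finest partition exists},'' which is precisely the two-step chaining you describe. There is nothing to add.
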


Up to now, we have seen some properties of the partition sequence that follow from the definitions in statements~\eqref{eq:1st partition in minimisation}, \eqref{eq:ith partition in minimisation}, and \eqref{eq:partition type in minimisation}.
In the rest of this subsection, we see some properties that reveal the reason why we consider such a partition sequence.

Recall that set $\Psi_n^i$ represents the $i$-length-bounded belief hierarchy of the (real or doxastic) agent denoted by node $n$ that consists of the belief sequences of length at most $i$.
The next lemma shows that the $i^{th}$ partition $\mathbb{P}^i$ in the sequence indeed partitions set $N$ according to the equivalence of set $\Psi_n^i$ of each node $n\in N$.
Informally speaking, two nodes are in the same set in partition $\mathbb{P}^i$ if and only if the agents they denote have the same $i$-length-bounded belief hierarchy.

\begin{lemma}\label{lm:partition same path set induction}
For any integer $i\geq 1$ and any nodes $n,n'\in N$, $\mathbb{P}^i(n)=\mathbb{P}^i(n')$ if and only if $\Psi_n^{i}=\Psi_{n'}^{i}$.
\end{lemma}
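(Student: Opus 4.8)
The plan is to prove the biconditional by induction on $i$. For the base case $i=1$, statement~\eqref{eq:1st partition in minimisation} says $\mathbb{P}^1(n)=\mathbb{P}^1(n')$ exactly when $\ell_n=\ell_{n'}$, while $\Psi_n^1=\Pi_n^1=\{\ell_n\}$ by Definition~\ref{df:path set}, so $\Psi_n^1=\Psi_{n'}^1$ also reduces to $\ell_n=\ell_{n'}$; the two conditions coincide.

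For the inductive step I would fix $i\ge 2$, assume the claim for $i-1$, and first record the auxiliary identity
\[
\Psi_n^i=\{\ell_n\}\cup\{\ell_n\!::\!\sigma\mid m\in Adj_n,\ \sigma\in\Psi_m^{i-1}\},
\]
which follows by unfolding statements~\eqref{eq:accumulated path set} and \eqref{eq:path set} (this is exactly the kind of rewriting carried out in statement~\eqref{eq:Mar-4-10}). Next I would use the induction hypothesis to translate the $type^i$ information into $\Psi^{i-1}$-data: since by the hypothesis $\mathbb{P}^{i-1}(m)=\mathbb{P}^{i-1}(m')$ iff $\Psi_m^{i-1}=\Psi_{m'}^{i-1}$, the map sending a class $P\in\mathbb{P}^{i-1}$ to the common value $\Psi_m^{i-1}$ of its members is well defined and injective, so by statement~\eqref{eq:variety type in minimisation} the equality $type^i(n)=type^i(n')$ is equivalent to $\{\Psi_m^{i-1}\mid m\in Adj_n\}=\{\Psi_{m'}^{i-1}\mid m'\in Adj_{n'}\}$.

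With these two observations both directions go through. For the forward direction, assume $\mathbb{P}^i(n)=\mathbb{P}^i(n')$; Lemma~\ref{lm:Mar-6-a} gives $\mathbb{P}^{i-1}(n)=\mathbb{P}^{i-1}(n')$ and $type^i(n)=type^i(n')$, hence by the induction hypothesis and the previous paragraph $\Psi_n^{i-1}=\Psi_{n'}^{i-1}$ and $\{\Psi_m^{i-1}\mid m\in Adj_n\}=\{\Psi_{m'}^{i-1}\mid m'\in Adj_{n'}\}$; from $\Psi_n^{i-1}=\Psi_{n'}^{i-1}$ and Lemma~\ref{lm:3-11-a} we get $\ell_n=\ell_{n'}$, and then the auxiliary identity gives $\Psi_n^i=\Psi_{n'}^i$, since the two indexed unions $\bigcup_{m\in Adj_n}\Psi_m^{i-1}$ and $\bigcup_{m'\in Adj_{n'}}\Psi_{m'}^{i-1}$ coincide. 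For the backward direction, assume $\Psi_n^i=\Psi_{n'}^i$; Lemma~\ref{lm:equal path set}, applied first with $j=i$ and then with $j=i-1$, yields $\Psi_n^{i-1}=\Psi_{n'}^{i-1}$, so $\mathbb{P}^{i-1}(n)=\mathbb{P}^{i-1}(n')$ by the induction hypothesis; moreover Lemma~\ref{lm:3-15-a} gives $\mathcal{B}_n=\mathcal{B}_{n'}$ and $\Psi_{\beta_n(b)}^{i-1}=\Psi_{\beta_{n'}(b)}^{i-1}$ for each $b\in\mathcal{B}_n$, so $\mathbb{P}^{i-1}(\beta_n(b))=\mathbb{P}^{i-1}(\beta_{n'}(b))$ by the induction hypothesis, whence $type^i(n)=type^i(n')$ by statement~\eqref{eq:3-25-1}; Lemma~\ref{lm:Mar-6-a} then delivers $\mathbb{P}^i(n)=\mathbb{P}^i(n')$.

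The step I expect to need the most care is the bookkeeping inside the inductive step: making precise the correspondence between elements of the partition $\mathbb{P}^{i-1}$ (which are sets of nodes) and the associated belief-hierarchy values $\Psi^{i-1}_{\cdot}$, and routing the backward direction through Lemma~\ref{lm:3-15-a} (rather than re-deriving $\mathcal{B}_n=\mathcal{B}_{n'}$ by hand) so that it mirrors the forward direction. Everything else is routine unfolding of the definitions in statements~\eqref{eq:path set}, \eqref{eq:ith partition in minimisation}, \eqref{eq:partition type in minimisation}, and \eqref{eq:variety type in minimisation}.
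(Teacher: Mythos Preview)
Your proposal is correct and follows essentially the same route as the paper's proof: induction on $i$, the same base case, the backward direction via Lemma~\ref{lm:3-15-a} and statement~\eqref{eq:3-25-1}, and the forward direction via Lemma~\ref{lm:Mar-6-a} together with the induction hypothesis to turn $type^i$-equality into equality of the families $\{\Psi_m^{i-1}\mid m\in Adj_n\}$. The only cosmetic differences are that the paper obtains $\ell_n=\ell_{n'}$ from Lemma~\ref{lm:Mar-6-b} rather than Lemma~\ref{lm:3-11-a}, and it reduces the forward direction to showing $\Pi_n^i=\Pi_{n'}^i$ instead of invoking your auxiliary identity for $\Psi_n^i$; both packagings amount to the same computation.
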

\begin{proof}
We prove the statement of the lemma by induction on integer $i$. 
We first consider the base case $i=1$.
Note that, $\mathbb{P}^1(n)=\mathbb{P}^1(n') \text{ if and only if } \ell_n=\ell_{n'}$ by statement~\eqref{eq:1st partition in minimisation}.
Meanwhile, $\Psi_n^{1}=\Psi_{n'}^{1} \text{ if and only if } \ell_n=\ell_{n'}$ by Definition~\ref{df:path set}.
Thus, $\mathbb{P}^1(n)=\mathbb{P}^1(n')$ if and only if $\Psi_n^{1}=\Psi_{n'}^{1}$.

Next, we consider the case $i\geq 2$.
For the ``if'' part of the statement, note that, by Lemma~\ref{lm:3-15-a} and the ``if'' part assumption $\Psi^{i}_{n}=\Psi^{i}_{n'}$,
\begin{equation}\label{eq:3-15-22}
\mathcal{B}_n=\mathcal{B}_{n'}
\end{equation}
and $\Psi^{i-1}_{\beta_{n}(b)}=\Psi^{i-1}_{\beta_{n'}(b)}$ for each agent $b\in\mathcal{B}_n$.
Then, by the induction hypothesis, for each agent $b\in\mathcal{B}_n$,
\begin{equation}\label{eq:3-18-1}
\mathbb{P}^{i-1}(\beta_n(b))=\mathbb{P}^{i-1}(\beta_{n'}(b)).
\end{equation}
Note that, by statement~\eqref{eq:3-25-1},
\begin{equation}\notag
\begin{aligned}
type^i(n)&=\big\{\mathbb{P}^{i-1}(\beta_n(b))\mid b\in\mathcal{B}_n\big\};\\
type^i(n')&=\big\{\mathbb{P}^{i-1}(\beta_{n'}(b))\mid b\in\mathcal{B}_{n'}\big\}.
\end{aligned}
\end{equation}
Hence, by statements~\eqref{eq:3-15-22} and \eqref{eq:3-18-1},
\begin{equation}\label{eq:3-11-5}
type^i(n)=type^i(n').
\end{equation}
Meanwhile, note that the ``if'' part assumption $\Psi^{i}_{n}=\Psi^{i}_{n'}$ implies that $\Psi^{i-1}_{n}=\Psi^{i-1}_{n'}$ by statement~\eqref{eq:accumulated path set} and Lemma~\ref{lm:equal path set}.
Then, $\mathbb{P}^{i-1}(n)=\mathbb{P}^{i-1}(n')$ by the induction hypothesis.
Therefore, $\mathbb{P}^{i}(n)=\mathbb{P}^{i}(n')$ by statement~\eqref{eq:3-11-5} and Lemma~\ref{lm:Mar-6-a}.

For the ``only if'' part of the statement, note that, the assumption $\mathbb{P}^i(n)=\mathbb{P}^i(n')$ implies
\begin{equation}\label{eq:Mar-6-5}
\ell_n=\ell_{n'}
\end{equation}
by Lemma~\ref{lm:Mar-6-b} and
\begin{gather}
\mathbb{P}^{i-1}(n)=\mathbb{P}^{i-1}(n'),\label{eq:Mar-6-6}\\
type^i(n)=type^i(n')\label{eq:Mar-6-7}
\end{gather}
by Lemma~\ref{lm:Mar-6-a}.
Then, $\Psi_n^{i-1}=\Psi_{n'}^{i-1}$ by statement~\eqref{eq:Mar-6-6} and the induction hypothesis.
Thus, to prove that $\Psi_n^{i}=\Psi_{n'}^{i}$, by statement~\eqref{eq:accumulated path set}, it suffices to prove that $\Pi_n^i=\Pi_{n'}^i$.

Note that, by statements~\eqref{eq:variety type in minimisation} and \eqref{eq:Mar-6-7},
\begin{equation}\notag
\big\{\mathbb{P}^{i-1}(m)\mid m\in Adj_n\big\}=\big\{\mathbb{P}^{i-1}(m')\mid m'\in Adj_{n'}\big\}.
\end{equation}
Then, by the induction hypothesis,
\begin{equation}\notag
\big\{\Psi_m^{i-1}\mid m\in Adj_n\big\}=\big\{\Psi_{m'}^{i-1}\mid m'\in Adj_{n'}\big\}.
\end{equation}
Thus, by Lemma~\ref{lm:equal path set},
\begin{equation}\notag
\big\{\Pi_m^{i-1}\mid m\in Adj_n\big\}=\big\{\Pi_{m'}^{i-1}\mid m'\in Adj_{n'}\big\}.
\end{equation}
Hence,
\begin{equation}\notag
\bigcup_{m\in Adj_n}\Pi_m^{i-1}=\bigcup_{m'\in Adj_{n'}}\Pi_{m'}^{i-1}.
\end{equation}
That is,
\begin{equation}\notag
\{\sigma\mid m\in Adj_n, \sigma\in\Pi_m^{i-1}\}\!=\!\{\sigma\mid m'\in Adj_{n'}, \sigma\in\Pi_{m'}^{i-1}\}.
\end{equation}
Therefore,
\begin{equation}\notag
\begin{aligned}
\Pi_n^{i}&=\{\ell_n\!::\!\sigma\mid m\in Adj_n, \sigma\in\Pi_{m}^{i-1}\}\\
&=\{\ell_{n'}\!::\!\sigma\mid m'\in Adj_{n'}, \sigma\in\Pi_{m'}^{i-1}\}=\Pi_{n'}^{i}
\end{aligned}
\end{equation}
by statement~\eqref{eq:Mar-6-5}.
\end{proof}

The next theorem extends the result in the above lemma to the limit of the partition sequence.
It shows that, in the finest partition $\mathbb{P}$, two nodes are in the set if and only if the agents they denote have the same belief hierarchy.

\begin{theorem}\label{th:finest partition same path set}
$\mathbb{P}(n)=\mathbb{P}(n')$ if and only if $\Psi_n^{*}=\Psi_{n'}^{*}$ for any nodes $n,n'\in N$.
\end{theorem}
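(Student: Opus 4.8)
The plan is to deduce the theorem from the finite-level statement already proved in Lemma~\ref{lm:partition same path set induction} together with the stabilisation result of Lemma~\ref{lm:finest partition exists}. The point is that $\mathbb{P}(n)=\mathbb{P}(n')$ and $\Psi_n^*=\Psi_{n'}^*$ are, respectively, the ``limits'' of the finite-level conditions $\mathbb{P}^i(n)=\mathbb{P}^i(n')$ and $\Psi_n^i=\Psi_{n'}^i$, which Lemma~\ref{lm:partition same path set induction} already equates level by level; so the main work is to show that each of these two families of conditions holds for \emph{all} $i$ as soon as the corresponding limit condition holds.

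First I would fix, using Lemma~\ref{lm:finest partition exists}, an integer $k$ with $\mathbb{P}=\mathbb{P}^j$ for every $j\geq k$; in particular $\mathbb{P}=\mathbb{P}^k$, so for any nodes $n,n'$ the equality $\mathbb{P}(n)=\mathbb{P}(n')$ is equivalent to $\mathbb{P}^k(n)=\mathbb{P}^k(n')$. Next I would prove that $\mathbb{P}^k(n)=\mathbb{P}^k(n')$ is in fact equivalent to the statement that $\mathbb{P}^i(n)=\mathbb{P}^i(n')$ for every integer $i\geq 1$. One implication is immediate by taking $i=k$. For the other, note that by Lemma~\ref{lm:Mar-6-a} the equality $\mathbb{P}^i(n)=\mathbb{P}^i(n')$ implies $\mathbb{P}^{i-1}(n)=\mathbb{P}^{i-1}(n')$ for each $i\geq 2$, so $\mathbb{P}^k(n)=\mathbb{P}^k(n')$ propagates down to all levels $1\leq i\leq k$; for levels $i\geq k$ we simply use $\mathbb{P}^i=\mathbb{P}^k$.

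Then I would invoke Lemma~\ref{lm:partition same path set induction} at each level to turn ``$\mathbb{P}^i(n)=\mathbb{P}^i(n')$ for every $i\geq 1$'' into ``$\Psi_n^i=\Psi_{n'}^i$ for every $i\geq 1$''. Finally I would close the argument by showing that the latter is equivalent to $\Psi_n^*=\Psi_{n'}^*$: by Lemma~\ref{lm:equal path set} the equalities $\Psi_n^i=\Psi_{n'}^i$ for all $i\geq 1$ are equivalent to $\Pi_n^i=\Pi_{n'}^i$ for all $i\geq 1$, and by Lemma~\ref{lm:equal whole path set} the latter is in turn equivalent to $\Psi_n^*=\Psi_{n'}^*$. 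Chaining these equivalences yields $\mathbb{P}(n)=\mathbb{P}(n')\iff\Psi_n^*=\Psi_{n'}^*$.

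I do not expect a genuine obstacle here: the theorem is essentially a bookkeeping combination of the earlier lemmas. The only step requiring a little care is the downward propagation $\mathbb{P}^k(n)=\mathbb{P}^k(n')\Rightarrow\mathbb{P}^i(n)=\mathbb{P}^i(n')$ for $i<k$, where one uses that each partition in the sequence refines its predecessor (Lemma~\ref{lm:Mar-6-a}); everything else is a direct appeal to a named result.
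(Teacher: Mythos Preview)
Your proposal is correct and takes essentially the same approach as the paper: both combine Lemma~\ref{lm:finest partition exists} (stabilisation of the partition sequence), Lemma~\ref{lm:partition same path set induction} (the level-by-level equivalence), and Lemmas~\ref{lm:equal path set}/\ref{lm:equal whole path set} (relating $\Psi^i$, $\Pi^i$, and $\Psi^*$). The only cosmetic difference is that the paper argues the two directions separately (one forward, one by contrapositive) rather than building your single chain of equivalences, and thereby avoids the explicit downward-propagation step via Lemma~\ref{lm:Mar-6-a}.
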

\begin{proof}
If $\Psi_n^{*}=\Psi_{n'}^{*}$, then $\Psi_n^{i}=\Psi_{n'}^{i}$ for each $i\geq 1$ by Lemma~\ref{lm:equal path set} and Lemma~\ref{lm:equal whole path set}.
Thus, $\mathbb{P}^i(n)=\mathbb{P}^i(n')$ for each $i\geq 1$ by Lemma~\ref{lm:partition same path set induction}.
Hence, $\mathbb{P}(n)=\mathbb{P}(n')$ by Lemma~\ref{lm:finest partition exists}.

If $\Psi_n^{*}\neq\Psi_{n'}^{*}$, then $\Pi_n^{i}\neq\Pi_{n'}^{i}$ for some integer $i\geq 1$ by Lemma~\ref{lm:equal whole path set}.
Thus, $\Psi_n^{j}\neq\Psi_{n'}^{j}$ for each $j\geq i$ by Lemma~\ref{lm:equal path set}.
Hence, $\mathbb{P}^j(n)\neq\mathbb{P}^j(n')$ for each $j\geq i$ by Lemma~\ref{lm:partition same path set induction}.
Therefore, $\mathbb{P}(n)\neq\mathbb{P}(n')$ by Lemma~\ref{lm:finest partition exists}.
\end{proof}

From Theorem~\ref{th:finest partition same path set} and Theorem~\ref{th:doxastic equivalent nodes = indistinguishable}, we can see that, for any two nodes $n,n'\in N$, in the finest partition, $\mathbb{P}(n)=\mathbb{P}(n')$ if and only if they are doxastically equivalent.
It shows that the finest partition $\mathbb{P}$ corresponds to the doxastic equivalence relation on the set $N$ of nodes.

\subsection{Minimising an RBR Graph}\label{sec:app minimisation algorithm}

As mentioned in the main text, to minimise an RBR graph $B$, we compute an equivalent canonical RBR graph $B'$.
Recall that, set $\Psi_n^*$ captures the belief hierarchy of the agent denoted by node $n$.
Informally, by Theorem~\ref{th:node type matches in equivalent RBR graphs}, the nodes in the equivalent canonical RBR graph $B'$ should capture all the belief hierarchies in the RBR graph $B$.
Meanwhile, by Definition~\ref{df:canonical RBR graph}, different nodes in the equivalent canonical RBR graph $B'$ should capture different belief hierarchies.
On the other hand, for the original RBR graph $B$ and the finest partition $\mathbb{P}$ of its nodes set, by Theorem~\ref{th:finest partition same path set}, all nodes in the same set $P\in\mathbb{P}$ capture the same belief hierarchy (\textit{i.e.} doxastically equivalent) and different sets in partition $\mathbb{P}$ correspond to different belief hierarchies.
In this sense, we only need to let each node in the equivalent canonical RBR graph $B'$ correspond to a set $P\in\mathbb{P}$ and set the edges, labelling function, and the designating function accordingly.

The above is the intuition behind Algorithm~\ref{alg:compute CF}.
In a nutshell, for an input RBR graph, Algorithm~\ref{alg:compute CF} classifies the nodes into different classes according to the doxastic equivalence relation. This is done by recursively refining the partition of the set of nodes until the finest partition is received.
Each class of doxastically equivalent nodes in the input RBR graph corresponds to one node in the output RBR graph, the latter of which is doxastically equivalent to every node in the former.
The doxastic equivalence between nodes in the output and the input RBR graphs is achieved by making the edge set and the labelling function of the output RBR graph in line with those of the input RBR graph.
For the convenience of proof, we rewrite Algorithm~\ref{alg:compute CF} in the main text using the new notation $\mathbb{P}(\cdot ) $ defined in Appendix~\ref{sec:app partition sequence} as below.  
Differences appear in lines~\ref{algline:computing type}, \ref{algline:output edge set} and \ref{algline:output designation assign}.

\setcounter{algocf}{0}
\begin{algorithm}[hbt]
\footnotesize
\caption{Minimise an RBR graph}
\KwIn{RBR graph $(N,E,\ell,\pi)$}
\KwOut{RBR graph $(N', E',\ell',\pi')$}

$\mathbb{P}\leftarrow \big\{\{n'\in N\,|\,\ell_{n'}=\ell_n\}\,\big|\,n\in N\big\}$\label{algline: app initialise P}{\scriptsize\Comment*[r]{$\Psi_n^1$ equivalence}\small}
$stable\leftarrow false$\label{algline: app initialise flag}\;
\While({\scriptsize\Comment*[f]{$\Psi_n^i$ equiv. $\to\Psi_n^{i+1}$ equiv.}\small}){not $stable$\label{algline: app while start}}{
    $stable\leftarrow true$\;
    $\mathbb{P}'\leftarrow\varnothing$\label{algline: app initialise P'}\;
    \For{each set $P\in\mathbb{P}$\label{algline: app partition for loop start}}{
    \For{each node $n\in P$\label{algline: app type for loop start}}{
        $type(n)\leftarrow\{\mathbb{P}(n')\mid nEn'\}$\label{algline: app computing type}\;
    }
    $\mathbb{Q}\!\leftarrow\!\big\{\{n'\!\in\! P\mid type(n')\!=\!type(n)\}\mid n\!\in\! P\big\}$\label{algline: app partition each P}\;
    $\mathbb{P}'\leftarrow\mathbb{P}'\cup\mathbb{Q}$\label{algline: app update P'}\;
    \If{$|\mathbb{Q}|>1$\label{algline: app flag update if}}{
        $stable\leftarrow false$\label{algline: app partition for loop ends}\;
    }
    }
    $\mathbb{P}\leftarrow\mathbb{P}'$\label{algline: app while end}\;
}
$N'\leftarrow\mathbb{P}$\label{algline: app output node set}{\scriptsize\Comment*[r]{equivalent classes as nodes}\small}
$E'\leftarrow\{(\mathbb{P}(n),\mathbb{P}(m))\mid (n,m)\in E\}$\label{algline: app output edge set}\;
\For{each node $P\in N'$\label{algline: app output label for loop}}{
    pick an arbitrary node $n\in P$ and $\ell'_P\leftarrow\ell_{n}$ \label{algline: app output label assign}\;
}
\For{each agent $a\in\mathcal{A}$\label{algline: app output designation for loop}}{
    $\pi'_a\leftarrow \mathbb{P}(\pi_a)$ if $\pi_a$ is defined\label{algline: app output designation assign}\;
}
\Return{$(N', E',\ell',\pi')$}\label{algline: app CF algorithm return}\;
\end{algorithm}

Precisely speaking, in Algorithm~\ref{alg:compute CF}, line~\ref{algline:initialise P} initialises set $\mathbb{P}$ to be the partition $\mathbb{P}^1$ following from statement~\eqref{eq:1st partition in minimisation}.
Line~\ref{algline:initialise flag} initialises the variable \textit{stable} to track if the refining process reaches the finest partition or not.
Then, the partition $\mathbb{P}$ is refined in the \textbf{while} loop in lines~\ref{algline:while start} to \ref{algline:while end} following from statements~\eqref{eq:partition type in minimisation} and \eqref{eq:ith partition in minimisation}. 
In particular, in the $i^{th}$ iteration of the \textbf{while} loop, the set $\mathbb{P}$ equals partition $\mathbb{P}^i$, while the set $\mathbb{P}'$ is used to collect all the elements in partition $\mathbb{P}^{i+1}$.
Specifically, in each round of the \textbf{for} loop in lines~\ref{algline:partition for loop start} to \ref{algline:partition for loop ends}, 
the inner \textbf{for} loop in lines~\ref{algline:type for loop start} and \ref{algline:computing type} computes $type^{i+1}(n)$ for each node $n\in P$ following from statement~\eqref{eq:partition type in minimisation}.
Then, line~\ref{algline:partition each P} uses the \textit{type} information to compute the partition $\mathbb{Q}$ of set $P$, which is a subset of the partition $\mathbb{P}^{i+1}$ by statement~\eqref{eq:ith partition in minimisation} and thus combined to set $\mathbb{P}'$ in line~\ref{algline:update P'}.
For this reason, if $|\mathbb{Q}|>1$ (as in line~\ref{algline:flag update if}), then the refining process has not reached a stable state, and thus the \textbf{while} loop should continue.
On the other hand, if $|\mathbb{Q}|=1$ for each set $P\in\mathbb{P}$, then $\mathbb{P}=\mathbb{P}'$ and the \textbf{while} loop terminates. 
Hence, the next proposition follows from Corollary~\ref{cr:3-12-a}.

\begin{proposition}\label{pp:CF algorithm gets finest partition}
In Algorithm~\ref{alg:compute CF}, after the \textbf{while} loop in lines~\ref{algline:while start} to \ref{algline:while end} terminates, the set $\mathbb{P}$ is the finest partition of the set of nodes in the input RBR graph.
\end{proposition}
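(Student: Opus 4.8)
The plan is to pin down a loop invariant identifying the value of the variable $\mathbb{P}$ at each stage of Algorithm~\ref{alg:compute CF} with the partition sequence $\mathbb{P}^1,\mathbb{P}^2,\dots$ of statements~\eqref{eq:1st partition in minimisation}--\eqref{eq:partition type in minimisation}, and then to close the argument with Corollary~\ref{cr:3-12-a}. First I would note that line~\ref{algline:initialise P} sets $\mathbb{P}$ to exactly $\mathbb{P}^1$ by statement~\eqref{eq:1st partition in minimisation}. Then I would prove by induction on $k\ge 1$ that the variable $\mathbb{P}$ equals $\mathbb{P}^k$ at the start of the $k^{th}$ iteration of the \textbf{while} loop. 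For the inductive step, assuming $\mathbb{P}=\mathbb{P}^k$ when iteration $k$ begins: the inner \textbf{for} loop headed at line~\ref{algline:type for loop start} computes, for each node $n$, the set $type(n)=\{\mathbb{P}(n')\mid nEn'\}=\{\mathbb{P}^k(m)\mid m\in Adj_n\}$, which is precisely $type^{k+1}(n)$ by statement~\eqref{eq:variety type in minimisation}; line~\ref{algline:partition each P} then splits each block $P\in\mathbb{P}^k$ by this $type$, and line~\ref{algline:update P'} accumulates the resulting pieces into $\mathbb{P}'$. By statement~\eqref{eq:ith partition in minimisation}, once the outer \textbf{for} loop has run over all blocks $P\in\mathbb{P}^k$ the variable $\mathbb{P}'$ holds exactly $\mathbb{P}^{k+1}$, which is assigned to $\mathbb{P}$ at line~\ref{algline:while end}. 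Hence $\mathbb{P}=\mathbb{P}^{k+1}$ at the start of iteration $k+1$, completing the induction.

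Next I would analyse the \textit{stable} flag. During iteration $k$ it is reset to \textit{true} and set back to \textit{false} (line~\ref{algline:flag update if}) exactly when some block $P\in\mathbb{P}^k$ produces $|\mathbb{Q}|>1$, i.e.\ exactly when some block of $\mathbb{P}^k$ is properly subdivided in passing to $\mathbb{P}^{k+1}$. If no block is subdivided then $\mathbb{Q}=\{P\}$ for every $P$, so $\mathbb{P}^{k+1}=\mathbb{P}^k$ by statement~\eqref{eq:ith partition in minimisation}; conversely a subdivision yields $\mathbb{P}^{k+1}\ne\mathbb{P}^k$. Therefore the \textbf{while} loop exits at the end of iteration $k$ if and only if $\mathbb{P}^{k+1}=\mathbb{P}^k$. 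Termination is guaranteed: Lemma~\ref{lm:finest partition exists} provides an integer $i\le|N|$ with $\mathbb{P}^{i+1}=\mathbb{P}^i$, so some such iteration must be reached.

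Finally, when the loop exits at the end of iteration $k$ we have $\mathbb{P}^{k+1}=\mathbb{P}^k$ and, by the invariant, the variable $\mathbb{P}$ holds $\mathbb{P}^{k+1}=\mathbb{P}^k$; Corollary~\ref{cr:3-12-a} then gives $\mathbb{P}^k=\lim_{i\to\infty}\mathbb{P}^i$, which is the finest partition of statement~\eqref{eq:finest partition in minimisation}. This is exactly the claim of the proposition. I do not anticipate any genuine mathematical obstacle here; the only thing requiring care is the index bookkeeping — making the iteration counter match the superscripts of the partition sequence (the pre-loop value is $\mathbb{P}^1$, so iteration $k$ turns $\mathbb{P}^k$ into $\mathbb{P}^{k+1}$) — and checking that the $type$ computed in the code coincides with $type^{k+1}$ through the reformulation~\eqref{eq:variety type in minimisation} rather than through the raw definition~\eqref{eq:partition type in minimisation}.
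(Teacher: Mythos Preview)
Your proposal is correct and follows essentially the same route as the paper: the paper establishes (informally, in the paragraph preceding the proposition) that the variable $\mathbb{P}$ equals $\mathbb{P}^i$ at the start of iteration $i$, that the loop terminates precisely when $\mathbb{P}'=\mathbb{P}$ (i.e.\ $\mathbb{P}^{i+1}=\mathbb{P}^i$), and then invokes Corollary~\ref{cr:3-12-a}. Your write-up is simply a more careful and explicit version of that same argument, including the termination appeal to Lemma~\ref{lm:finest partition exists}, which the paper leaves implicit.
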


The rest of the algorithm generates the output RBR graph $(N',E',\ell',\pi')$ based on the finest partition $\mathbb{P}$.
In particular, each set $P\in\mathbb{P}$ is also a node\footnote{$P$ has the ``node-set duality'': it is both a set in the finest partition $\mathbb{P}$ and a node in the node set $N'$ of the output RBR graph. Thus, notation $\mathbb{P}(n)$ represents both the set containing node $n$ in the finest partition and the corresponding node in the output RBR graph.} in the output RBR graph (line~\ref{algline:output node set}).
For each edge $(n,m)\in E$ in the input RBR graph, an edge from the node $\mathbb{P}(n)$ to the node $\mathbb{P}(m)$ is generated in the output RBR graph (line~\ref{algline:output edge set}).
The label of each node $P$ in the output RBR graph is assigned the label of an arbitrary node $n\in P$ in the finest partition (lines~\ref{algline:output label for loop} and \ref{algline:output label assign}). 
Proposition~\ref{pp:same label in each P} below shows that the arbitrary nature of node $n$ does not affect the assignment result.
For each agent $a\in\mathcal{A}$, if $\pi_a$ is defined in the input RBR graph, then the designated node $\pi'_a$ in the output RBR graph is defined as the node $\mathbb{P}(\pi_a)$, which corresponds to the set containing node $\pi_a$ in the finest partition (lines~\ref{algline:output designation for loop} and \ref{algline:output designation assign}).

\begin{proposition}\label{pp:same label in each P}
$\ell'_P=\ell_n$ for each node $P\in N'$ and each node $n\in P$.
\end{proposition}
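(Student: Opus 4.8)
The plan is to reduce the statement directly to Corollary~\ref{cr:same label in finest partition}, which asserts that any two nodes lying in the same block of the finest partition $\mathbb{P}$ share a label. The only preliminary point to nail down is the identification of the output node set $N'$ with the blocks of $\mathbb{P}$: after the \textbf{while} loop terminates, Proposition~\ref{pp:CF algorithm gets finest partition} guarantees that the variable $\mathbb{P}$ holds precisely the finest partition of the input node set, and line~\ref{algline:output node set} sets $N' \leftarrow \mathbb{P}$, so every node $P \in N'$ is literally a block of the finest partition.

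Given that, the argument is a single step. Fix a node $P \in N'$ and an arbitrary node $n \in P$. By line~\ref{algline:output label assign}, the algorithm has set $\ell'_P = \ell_m$ for some node $m \in P$ chosen when that line executed. Since both $n$ and $m$ belong to the same block $P$ of the finest partition, we have $\mathbb{P}(n) = P = \mathbb{P}(m)$, and therefore $\ell_n = \ell_m$ by Corollary~\ref{cr:same label in finest partition}. Combining these gives $\ell'_P = \ell_m = \ell_n$, which is exactly the claim; in particular this also shows the assignment in line~\ref{algline:output label assign} is independent of the ``arbitrary'' choice of node, so $\ell'$ is well defined.

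I do not expect any genuine obstacle here: the content is entirely carried by Corollary~\ref{cr:same label in finest partition} (itself a consequence of Lemma~\ref{lm:Mar-6-b} and Lemma~\ref{lm:finest partition exists}). The only thing to be careful about is the bookkeeping of the ``node--set duality'' of $P$ noted in the footnote to line~\ref{algline:output node set} — i.e., making explicit that when we write $n \in P$ we are treating $P$ as the corresponding block of $\mathbb{P}$, so that $\mathbb{P}(n) = P$ — but this is a matter of notation rather than a mathematical difficulty.
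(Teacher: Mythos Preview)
Your proposal is correct and follows essentially the same approach as the paper: both arguments reduce immediately to Corollary~\ref{cr:same label in finest partition} to conclude that all nodes in a block $P$ of the finest partition share the same label, and then invoke line~\ref{algline:output label assign} to obtain $\ell'_P=\ell_n$. Your version is slightly more explicit about the bookkeeping (identifying $N'$ with $\mathbb{P}$ via Proposition~\ref{pp:CF algorithm gets finest partition} and line~\ref{algline:output node set}), but the mathematical content is identical.
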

\begin{proof} 
Note that $\mathbb{P}(n_1)=\mathbb{P}(n_2)$ for any nodes $n_1,n_2\in P$.
Thus, $\ell_{n_1}=\ell_{n_2}$ for any nodes $n_1,n_2\in P$ by Corollary~\ref{cr:same label in finest partition}.
This means every node $n\in P$ is labelled with the same agent.
Hence, the statement of the proposition is true by line~\ref{algline:output label assign} of Algorithm~\ref{alg:compute CF}. 
\end{proof}

Note that line~\ref{algline:output label assign} is the only potentially nondeterministic step in Algorithm~\ref{alg:compute CF}.
With the result in Proposition~\ref{pp:same label in each P}, we show that Algorithm~\ref{alg:compute CF} is {\em deterministic}.
Next, we prove the correctness of Algorithm~\ref{alg:compute CF}, (\textit{i.e.} the output of Algorithm~\ref{alg:compute CF} is an equivalent canonical RBR graph of the input RBR graph).
In the next proposition, we prove that the output tuple $(N',E',\ell',\pi')$ is an RBR graph.

\begin{proposition}\label{pp:CF algorithm output is RBR graph}
The output tuple $(N',E',\ell',\pi')$ of Algorithm~\ref{alg:compute CF} is an RBR graph.
\end{proposition}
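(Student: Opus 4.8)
The plan is to check the four defining conditions of Definition~\ref{df:RBR graph} in turn for the output tuple $(N',E',\ell',\pi')$, and most of them will fall out immediately from Proposition~\ref{pp:same label in each P} and the properties of the input RBR graph. Item~\ref{dfitem:RBR graph frame}: the set $N'=\mathbb{P}$ is a partition of the finite set $N$, which exists by Lemma~\ref{lm:finest partition exists} and hence is finite, while $E'$ is defined on line~\ref{algline: app output edge set} as a subset of $N'\times N'$, so $(N',E')$ is a finite directed graph. Item~\ref{dfitem:RBR graph labelling function 1} of Definition~\ref{df:RBR graph}: if $PE'Q$, then by line~\ref{algline: app output edge set} there are input nodes $n\in P$ and $m\in Q$ with $nEm$; item~\ref{dfitem:RBR graph labelling function 1} of Definition~\ref{df:RBR graph} for the input gives $\ell_n\neq\ell_m$, which by Proposition~\ref{pp:same label in each P} becomes $\ell'_P\neq\ell'_Q$. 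Item~\ref{dfitem:RBR graph designating function}: line~\ref{algline: app output designation assign} sets $\pi'_a=\mathbb{P}(\pi_a)$ exactly when $\pi_a$ is defined, so $\pi$ and $\pi'$ have the same domain, and whenever $\pi'_a$ is defined we get $\ell'_{\pi'_a}=\ell'_{\mathbb{P}(\pi_a)}=\ell_{\pi_a}=a$ using Proposition~\ref{pp:same label in each P} and item~\ref{dfitem:RBR graph designating function} of Definition~\ref{df:RBR graph} for the input.

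For item~\ref{dfitem:RBR graph no irrelevant dummy node} I would lift paths through the map $\mathbb{P}(\cdot)$. Given any node $P\in N'$, pick an arbitrary $n\in P$; item~\ref{dfitem:RBR graph no irrelevant dummy node} of Definition~\ref{df:RBR graph} for the input provides an agent $a$ and a path $(n_1,\dots,n_k)$ in the input with $n_1=\pi_a$ and $n_k=n$. By line~\ref{algline: app output edge set}, each pair $(\mathbb{P}(n_j),\mathbb{P}(n_{j+1}))$ lies in $E'$, so $(\mathbb{P}(n_1),\dots,\mathbb{P}(n_k))$ is a path in $(N',E')$ from $\mathbb{P}(\pi_a)=\pi'_a$ to $\mathbb{P}(n)=P$, as required.

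The main obstacle is the remaining condition, item~\ref{dfitem:different child label} of Definition~\ref{df:RBR graph}: no node of the output has two distinct out-neighbours with the same label. This is the one place where I must appeal to the semantic meaning of the finest partition rather than to purely local bookkeeping. Suppose $PE'Q_1$, $PE'Q_2$, and $\ell'_{Q_1}=\ell'_{Q_2}$. By line~\ref{algline: app output edge set} there are input nodes $n_1,n_2\in P$ and $m_1\in Q_1$, $m_2\in Q_2$ with $n_1Em_1$ and $n_2Em_2$, and Proposition~\ref{pp:same label in each P} turns the hypothesis into $\ell_{m_1}=\ell_{m_2}$. Since $n_1$ and $n_2$ lie in the same block of the finest partition $\mathbb{P}$, Theorem~\ref{th:finest partition same path set} gives $\Psi_{n_1}^*=\Psi_{n_2}^*$. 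Now $m_1\in Adj_{n_1}$, $m_2\in Adj_{n_2}$, and $\ell_{m_1}=\ell_{m_2}$, so Lemma~\ref{lm:4-4-a} yields $\Psi_{m_1}^*=\Psi_{m_2}^*$, and applying Theorem~\ref{th:finest partition same path set} once more gives $\mathbb{P}(m_1)=\mathbb{P}(m_2)$; since $Q_1$ is the block containing $m_1$ and $Q_2$ the block containing $m_2$, this is exactly $Q_1=Q_2$. With all four items of Definition~\ref{df:RBR graph} verified, $(N',E',\ell',\pi')$ is an RBR graph. The hard part is precisely this last step: it is the only point where the argument needs the correspondence between the finest partition and belief hierarchies (Theorem~\ref{th:finest partition same path set}) together with the ``diffusion of belief hierarchies along edges'' (Lemma~\ref{lm:4-4-a}).
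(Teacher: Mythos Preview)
Your proposal is correct and follows essentially the same approach as the paper's own proof: you check the four items of Definition~\ref{df:RBR graph} in turn, dispatch items~\ref{dfitem:RBR graph frame}, \ref{dfitem:RBR graph labelling function 1}, \ref{dfitem:RBR graph designating function}, and~\ref{dfitem:RBR graph no irrelevant dummy node} via Proposition~\ref{pp:same label in each P} and path-lifting, and handle the only nontrivial item~\ref{dfitem:different child label} by combining Theorem~\ref{th:finest partition same path set} with Lemma~\ref{lm:4-4-a}, exactly as the paper does.
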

\begin{proof}
It suffices to prove that the tuple $(N',E',\ell',\pi')$ satisfies all four items of Definition~\ref{df:RBR graph}.
It is easily observed that \textbf{item~\ref{dfitem:RBR graph frame} of Definition~\ref{df:RBR graph}} is satisfied by lines~\ref{algline:output node set} and \ref{algline:output edge set} of Algorithm~\ref{alg:compute CF}.

Then, we show that \textbf{item~\ref{dfitem:RBR graph labelling function} of Definition~\ref{df:RBR graph}} is satisfied.
Note that $\mathbb{P}$ is a partition of the set of nodes in the input RBR graph.
Then, $P\neq\varnothing$ for each set $P\in\mathbb{P}$, which means a node $n\in P$ always exists.
Thus, $\ell'$ is a total function from set $N'$ to set $\mathcal{A}$ by lines~\ref{algline:output label for loop} and \ref{algline:output label assign} of Algorithm~\ref{alg:compute CF} because $\ell_n\in\mathcal{A}$.
Moreover, Claim~\ref{cl:3-13-1} and Claim~\ref{cl:3-13-2} below show that items~\ref{dfitem:RBR graph labelling function 1} and \ref{dfitem:different child label} of Definition~\ref{df:RBR graph} are satisfied, respectively.

\begin{claim}\label{cl:3-13-1}
If $(P,Q)\in E'$, then $\ell'_P\neq\ell'_Q$.
\end{claim}
\begin{proof-of-claim}
By line~\ref{algline:output edge set} of Algorithm~\ref{alg:compute CF} and the assumption $(P,Q)\in E'$ of the claim, there is a node $n\in P$ and a node $m\in Q$ such that $nEm$ in the input RBR graph.
Then, by item~\ref{dfitem:RBR graph labelling function 1} of definition~\ref{df:RBR graph},
\begin{equation}\label{eq:3-13-1}
\ell_n\neq\ell_m.
\end{equation}
Note that, $\ell_n=\ell'_P$ and $\ell_m=\ell'_Q$ by Proposition~\ref{pp:same label in each P} because $n\in P$ and $m\in Q$.
Thus, $\ell'_P\neq\ell'_Q$ by statement~\eqref{eq:3-13-1}.
\end{proof-of-claim}

\begin{claim}\label{cl:3-13-2}
If $(P,Q),(P,R)\in E'$ and $\ell'_Q=\ell'_R$, then $Q=R$.
\end{claim}
\begin{proof-of-claim}
By line~\ref{algline:output edge set} of Algorithm~\ref{alg:compute CF} and the assumption $(P,Q),(P,R)\in E'$, there are nodes $n_1,n_2\in P$, $m_1\in Q$, and $m_2\in R$ such that
\begin{equation}
(n_1,m_1)\in E \text{\hspace{1mm} and \hspace{1mm}} (n_2,m_2)\in E.\label{eq:3-13-4}
\end{equation}
Moreover, $\ell_{m_1}=\ell'_Q$ and $\ell_{m_2}=\ell'_R$ by Proposition~\ref{pp:same label in each P}.
Then, by the assumption $\ell'_Q=\ell'_R$ of the claim,
\begin{equation}\label{eq:3-13-5}
\ell_{m_1}=\ell_{m_2}.
\end{equation}
Note that $n_1,n_2\in P$, which means $\mathbb{P}(n_1)=\mathbb{P}(n_2)$, then 
\begin{equation}\label{eq:3-13-6}
\Psi^*_{n_1}=\Psi^*_{n_2}
\end{equation}
by Theorem~\ref{th:finest partition same path set}.
Moreover, $m_1\in Adj_{n_1}$ and $m_2\in Adj_{n_2}$ by statement~\eqref{eq:3-13-4}.
Then, $\Psi^{*}_{m_1}=\Psi^{*}_{m_2}$ by statements~\eqref{eq:3-13-5}, \eqref{eq:3-13-6}, and Lemma~\ref{lm:4-4-a}.
Hence, $\mathbb{P}(m_1)=\mathbb{P}(m_2)$ by Theorem~\ref{th:finest partition same path set}.
This means $Q=R$ because $m_1\in P$ and $m_2\in R$.
\end{proof-of-claim}

Next, we show that \textbf{item~\ref{dfitem:RBR graph designating function} of Definition~\ref{df:RBR graph}} is satisfied.
Note that $\pi_a\in\mathbb{P}(\pi_a)$.
Then, $\ell'_{\mathbb{P}(\pi_a)}=\ell_{\pi_a}$ by Proposition~\ref{pp:same label in each P}.
Thus, $\ell'_{\mathbb{P}(\pi_a)}=a$ by item~\ref{dfitem:RBR graph designating function} of Definition~\ref{df:RBR graph}.
Hence, $\ell'_{\pi'_a}=a$ by line~\ref{algline:output designation assign} of Algorithm~\ref{alg:compute CF}.

Finally, we show that \textbf{item~\ref{dfitem:RBR graph no irrelevant dummy node} of Definition~\ref{df:RBR graph}} is satisfied.
For any node $P\in N'$, consider an arbitrary node $n\in P$.
By item~\ref{dfitem:RBR graph no irrelevant dummy node} of Definition~\ref{df:RBR graph}, in the input RBR graph, there is a designated node $\pi_a\in N$ and a path $(m_1,\dots,m_k)$ where $k\geq 1$ such that
\begin{equation}\label{eq:3-14-1}
m_1=\pi_a,\;m_k=n,
\end{equation}
and $(m_i,m_{i+1})\in E$ for each integer $i<k$.
Then, by line~\ref{algline:output edge set} of Algorithm~\ref{alg:compute CF}, $(\mathbb{P}(m_i),\mathbb{P}(m_{i+1}))\in E'$ for each $i<k$.
Thus, a path $(\mathbb{P}(m_1),\dots,\mathbb{P}(m_k))$ exists.
Note that, $\mathbb{P}(m_1)=\mathbb{P}(\pi_a)=\pi'(a)$ and $\mathbb{P}(m_k)=\mathbb{P}(n)=P$ by statement~\eqref{eq:3-14-1}, line~\ref{algline:output designation assign} of Algorithm~\ref{alg:compute CF}, and the fact $n\in P$.
Hence, $(\mathbb{P}(m_1),\dots,\mathbb{P}(m_k))$ is a path from $\pi'_a$ to $P$. 
\end{proof}

Proposition~\ref{pp:CF algorithm output is RBR graph} above shows that the output of Algorithm~\ref{alg:compute CF} is indeed an RBR graph.
Next, we show the output RBR graph is equivalent to the input RBR graph by proving that the input RBR graph is locally isomorphic to the output RBR graph.
Particularly, for an input RBR graph $(N,E,\ell,\pi)$ Algorithm~\ref{alg:compute CF}, the finest partition $\mathbb{P}$ of set $N$, and the corresponding output RBR graph $(N',E',\ell',\pi')$, consider a function $\alpha:N\to N'$ such that, for each node $n\in N$,
\begin{equation}\label{eq:algorithm local isomorphism}
\alpha(n):=\mathbb{P}(n).
\end{equation}
\begin{proposition}\label{pp:CF algorithm surjection}
$\alpha$ is a surjection from set $N$ to set $N'$.
\end{proposition}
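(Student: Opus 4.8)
The plan is simply to unfold the relevant definitions, since the statement is essentially a bookkeeping fact about partitions. First I would recall that by line~\ref{algline:output node set} of Algorithm~\ref{alg:compute CF} the output node set is $N'=\mathbb{P}$, and by Proposition~\ref{pp:CF algorithm gets finest partition} the family $\mathbb{P}$ obtained when the \textbf{while} loop terminates is the finest partition of the node set $N$ of the input RBR graph. In particular, $\mathbb{P}$ is a genuine partition of $N$: its blocks are nonempty, pairwise disjoint, and their union is $N$. This is the only external input the argument needs.

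Given this, well-definedness of $\alpha$ is immediate: for each $n\in N$ there is exactly one block of $\mathbb{P}$ containing $n$ — the block denoted $\mathbb{P}(n)$ — and this block is by definition an element of $N'$. Hence statement~\eqref{eq:algorithm local isomorphism} does define a total function $\alpha:N\to N'$.

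For surjectivity I would take an arbitrary node $P\in N'=\mathbb{P}$. Since $P$ is a block of the partition $\mathbb{P}$, it is nonempty, so pick some node $n\in P$. Because the blocks of $\mathbb{P}$ are pairwise disjoint, $P$ is the unique block containing $n$, so $\mathbb{P}(n)=P$, and therefore $\alpha(n)=P$ by statement~\eqref{eq:algorithm local isomorphism}. As $P$ was arbitrary, every node of $N'$ lies in the image of $\alpha$, which is exactly surjectivity.

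There is no genuine obstacle here; the only point requiring a little care is to cite Proposition~\ref{pp:CF algorithm gets finest partition} so that one may treat the object $\mathbb{P}$ produced by the algorithm as an honest partition (in particular, with all blocks nonempty), rather than reasoning about the internals of the refinement loop directly.
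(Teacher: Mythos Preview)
Your proof is correct and follows essentially the same approach as the paper: both arguments rest on the observation that $\mathbb{P}$ is a partition of $N$ and that $N'=\mathbb{P}$ by line~\ref{algline:output node set}, from which surjectivity of $\alpha$ is immediate. The paper's version is terser (it simply writes $\{\alpha(n)\mid n\in N\}=\{\mathbb{P}(n)\mid n\in N\}=\mathbb{P}=N'$), while you spell out nonemptiness of blocks and well-definedness explicitly, but the underlying idea is identical.
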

\begin{proof}
Note that $\mathbb{P}$ is a partition of set $N$. Then,
\begin{equation}\notag
\{\alpha(n)\mid n\in N\}=\{\mathbb{P}(n)\mid n\in N\}=\mathbb{P}=N'
\end{equation}
by statement~\eqref{eq:algorithm local isomorphism} and line~\ref{algline:output node set} of Algorithm~\ref{alg:compute CF}.
\end{proof}

\begin{proposition}\label{pp:CF algorithm edge match}
$\{\alpha(m)\mid nEm\}=\{Q\mid\alpha(n)E'Q\}$ for each node $n\in N$.
\end{proposition}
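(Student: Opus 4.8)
The plan is to prove the stated set equality by establishing the two inclusions separately, using the dictionary between the finest partition $\mathbb{P}$ and the invariant $\Psi^*$ that Theorem~\ref{th:finest partition same path set} provides, together with the ``diffusion'' lemmas for $\Psi^*$ along edges.

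First I would handle the easy inclusion $\{\alpha(m)\mid nEm\}\subseteq\{Q\mid\alpha(n)E'Q\}$. If $nEm$, then $(n,m)\in E$ by statement~\eqref{eq:1 step neighbour set}, so $(\mathbb{P}(n),\mathbb{P}(m))\in E'$ by line~\ref{algline:output edge set} of Algorithm~\ref{alg:compute CF}. Since $\alpha(n)=\mathbb{P}(n)$ and $\alpha(m)=\mathbb{P}(m)$ by statement~\eqref{eq:algorithm local isomorphism}, this gives $\alpha(n)E'\alpha(m)$, i.e.\ $\alpha(m)\in\{Q\mid\alpha(n)E'Q\}$.

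The substantive direction is $\supseteq$. Assume $\alpha(n)E'Q$. By line~\ref{algline:output edge set} of Algorithm~\ref{alg:compute CF}, there are nodes $\bar n,\bar m\in N$ with $\bar nE\bar m$, $\mathbb{P}(\bar n)=\mathbb{P}(n)$, and $\mathbb{P}(\bar m)=Q$. The point to be careful about is that this edge of $E'$ is witnessed by a representative $\bar n$ of the class $\mathbb{P}(n)$ that need not equal $n$, so I must transport it to a genuine edge leaving $n$. From $\mathbb{P}(\bar n)=\mathbb{P}(n)$ and Theorem~\ref{th:finest partition same path set} I get $\Psi^*_n=\Psi^*_{\bar n}$, which (via Lemma~\ref{lm:equal whole path set} and Lemma~\ref{lm:equal path set}) forces $\Psi^2_n=\Psi^2_{\bar n}$; then Lemma~\ref{lm:3-15-a} yields $\mathcal{B}_n=\mathcal{B}_{\bar n}$. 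Since $\bar m\in Adj_{\bar n}$, we have $\ell_{\bar m}\in\mathcal{B}_{\bar n}=\mathcal{B}_n$, so the node $m:=\beta_n(\ell_{\bar m})$ satisfies $m\in Adj_n$, hence $nEm$, and $\ell_m=\ell_{\bar m}$ by Definition~\ref{df:function tau}.

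It then remains to verify $\alpha(m)=Q$. Since $m\in Adj_n$, $\bar m\in Adj_{\bar n}$, $\Psi^*_n=\Psi^*_{\bar n}$, and $\ell_m=\ell_{\bar m}$, Lemma~\ref{lm:4-4-a} gives $\Psi^*_m=\Psi^*_{\bar m}$, so $\mathbb{P}(m)=\mathbb{P}(\bar m)=Q$ by Theorem~\ref{th:finest partition same path set}, i.e.\ $\alpha(m)=Q$. Together with $nEm$ this shows $Q\in\{\alpha(m)\mid nEm\}$, which completes the reverse inclusion and hence the proof. The only genuine obstacle here is the representative-independence used in the $\supseteq$ direction; once the edge is relocated from $\bar n$ to $n$, everything else is routine translation between $\mathbb{P}$ and $\Psi^*$ via the lemmas already proved.
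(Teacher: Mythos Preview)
Your proof is correct and follows essentially the same approach as the paper's own proof: both directions are handled identically, using line~\ref{algline:output edge set} of Algorithm~\ref{alg:compute CF} for the easy inclusion, and for the reverse inclusion translating $\mathbb{P}(\bar n)=\mathbb{P}(n)$ into $\Psi^*_{\bar n}=\Psi^*_n$ via Theorem~\ref{th:finest partition same path set}, using Lemma~\ref{lm:3-15-a} to match $\mathcal{B}_n=\mathcal{B}_{\bar n}$, and then Lemma~\ref{lm:4-4-a} and Theorem~\ref{th:finest partition same path set} again to conclude. The only cosmetic difference is that you name the transported neighbour explicitly as $\beta_n(\ell_{\bar m})$, whereas the paper simply asserts the existence of such a node from $\mathcal{B}_n=\mathcal{B}_{\bar n}$.
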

\begin{proof}
By statement~\eqref{eq:algorithm local isomorphism}, it suffices to prove
\begin{equation}\notag
\{\mathbb{P}(m)\mid nEm\}=\{Q\mid\mathbb{P}(n)E'Q\}.
\end{equation}

($\subseteq$):
Consider an arbitrary node $m\in N$ such that $nEm$. 
Then, $(\mathbb{P}(n),\mathbb{P}(m))\in E'$ by line~\ref{algline:output edge set} of Algorithm~\ref{alg:compute CF}.
Hence, $\mathbb{P}(m)\in\{Q\mid\mathbb{P}(n)E'Q\}$.

($\supseteq$):
Consider an arbitrary node $Q$ such that $\mathbb{P}(n)E'Q$. 
Then, by line~\ref{algline:output edge set} of Algorithm~\ref{alg:compute CF}, there are two nodes $n_1,m_1\in N$ such that
\begin{equation}\label{eq:3-14-6}
\mathbb{P}(n_1)=\mathbb{P}(n),\,\, \mathbb{P}(m_1)=Q, \text{ and } n_1Em_1.
\end{equation}
Note that, by statement~\eqref{eq:1 step neighbour set} and the third part of statement~\eqref{eq:3-14-6},
\begin{equation}\label{eq:3-14-8}
m_1\in Adj_{n_1},
\end{equation}
and, by Theorem~\ref{th:finest partition same path set} and the first part of statement~\eqref{eq:3-14-6},
\begin{equation}\label{eq:3-14-9}
\Psi^*_{n}=\Psi^*_{n_1}.
\end{equation}
Then, $\Psi^2_{n}\!=\!\Psi^2_{n_1}$ by Lemma~\ref{lm:equal path set} and Lemma~\ref{lm:equal whole path set}.
Thus, $\mathcal{B}_n\!=\!\mathcal{B}_{n_1}$ by Lemma~\ref{lm:3-15-a}.
Hence, by statements~\eqref{eq:1 step agent set} and \eqref{eq:3-14-8}, there is a node $m_2$ such that
\begin{equation}\label{eq:3-14-10}
m_2\in Adj_n \text{\hspace{1mm} and \hspace{1mm}} \ell_{m_2}=\ell_{m_1}.
\end{equation}
Note that, statements~\eqref{eq:3-14-8}, \eqref{eq:3-14-9}, and \eqref{eq:3-14-10} imply $\Psi^{*}_{m_2}=\Psi^{*}_{m_1}$ by Lemma~\ref{lm:4-4-a}.
Then, $\mathbb{P}(m_2)=\mathbb{P}(m_1)$ by Theorem~\ref{th:finest partition same path set}.
Thus, $Q=\mathbb{P}(m_2)$ by the second part of statement~\eqref{eq:3-14-6}.
Hence, $Q\in\{\mathbb{P}(m)\mid nEm\}$ by statement~\eqref{eq:1 step neighbour set} and the first part of statement~\eqref{eq:3-14-10}.
\end{proof}

\begin{proposition}\label{pp:CF algorithm label}
$\ell_n=\ell'_{\alpha(n)}$ for each node $n\in N$.
\end{proposition}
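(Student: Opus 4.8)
The plan is to derive this directly from Proposition~\ref{pp:same label in each P}, which already records that the label $\ell'_P$ assigned in line~\ref{algline:output label assign} of Algorithm~\ref{alg:compute CF} agrees with $\ell_n$ for \emph{every} member $n$ of the block $P$, not just the arbitrarily picked representative. So there is really nothing left to establish beyond unwinding the definition of $\alpha$.

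First I would recall that $\alpha(n)=\mathbb{P}(n)$ by statement~\eqref{eq:algorithm local isomorphism}, so that the claim $\ell_n=\ell'_{\alpha(n)}$ is equivalent to $\ell_n=\ell'_{\mathbb{P}(n)}$. Next I would note that, since $\mathbb{P}$ is a partition of $N$ and $\mathbb{P}(n)$ denotes the (necessarily unique) block containing $n$, we have $n\in\mathbb{P}(n)$, and by Proposition~\ref{pp:CF algorithm output is RBR graph} (or simply line~\ref{algline:output node set}) the block $\mathbb{P}(n)$ is a node of the output RBR graph, i.e.\ $\mathbb{P}(n)\in N'$. Then applying Proposition~\ref{pp:same label in each P} with $P:=\mathbb{P}(n)$ and this particular node $n\in P$ yields $\ell'_{\mathbb{P}(n)}=\ell_n$, which is exactly what is needed.

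There is essentially no obstacle here; the only conceptual subtlety---that line~\ref{algline:output label assign} picks an \emph{arbitrary} node of each block, so a priori the value of $\ell'_P$ could depend on that choice---has already been dispatched in Proposition~\ref{pp:same label in each P} via Corollary~\ref{cr:same label in finest partition} (all nodes in a block of the finest partition carry the same label). Consequently the proof is a one-line invocation of Proposition~\ref{pp:same label in each P} together with the definitions of $\alpha$ and of $\mathbb{P}(n)$, and I would present it as such rather than re-proving the label-uniformity fact.
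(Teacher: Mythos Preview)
Your proposal is correct and follows essentially the same approach as the paper's own proof: recall $n\in\mathbb{P}(n)$, apply Proposition~\ref{pp:same label in each P} to obtain $\ell_n=\ell'_{\mathbb{P}(n)}$, and then use statement~\eqref{eq:algorithm local isomorphism} to rewrite $\mathbb{P}(n)$ as $\alpha(n)$.
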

\begin{proof}
Recall that $n\in\mathbb{P}(n)$.
Then, $\ell_n=\ell'_{\mathbb{P}(n)}$ by Proposition~\ref{pp:same label in each P}.
Thus, $\ell_n=\ell'_{\alpha(n)}$ by statement~\eqref{eq:algorithm local isomorphism}.
\end{proof}

\begin{proposition}\label{pp:CF algorithm designation}
$\pi$ and $\pi'$ have the same domain $\mathcal{D}$ of definition and $\alpha(\pi_a)=\pi'_a$ for each agent $a\in\mathcal{D}$.
\end{proposition}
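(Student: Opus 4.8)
This proposition is essentially a direct unwinding of the designation assignment in Algorithm~\ref{alg:compute CF}. The plan is to read off the definition of $\pi'$ from line~\ref{algline: app output designation assign} and match it against statement~\eqref{eq:algorithm local isomorphism}.

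First I would establish that $\pi$ and $\pi'$ have the same domain of definition. Line~\ref{algline: app output designation assign} of Algorithm~\ref{alg:compute CF} assigns a value to $\pi'_a$ for an agent $a\in\mathcal{A}$ precisely in the case that $\pi_a$ is defined, and assigns nothing otherwise. Hence $\pi'_a$ is defined if and only if $\pi_a$ is defined, so $\pi$ and $\pi'$ share a common domain $\mathcal{D}$ of definition. (One should also note that the value $\mathbb{P}(\pi_a)$ is a legitimate node of $N'$ whenever $\pi_a$ is defined, since $N'=\mathbb{P}$ by line~\ref{algline: app output node set} and $\mathbb{P}(\pi_a)\in\mathbb{P}$; this is implicitly covered by Proposition~\ref{pp:CF algorithm output is RBR graph}, so no extra work is needed here.)

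Next I would verify the identity $\alpha(\pi_a)=\pi'_a$ for each $a\in\mathcal{D}$. For such an $a$, line~\ref{algline: app output designation assign} gives $\pi'_a=\mathbb{P}(\pi_a)$, while statement~\eqref{eq:algorithm local isomorphism} gives $\alpha(\pi_a)=\mathbb{P}(\pi_a)$. Chaining these two equalities yields $\alpha(\pi_a)=\mathbb{P}(\pi_a)=\pi'_a$, which completes the proof.

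There is no real obstacle here: the statement is a bookkeeping step whose only content is matching the algorithm's line~\ref{algline: app output designation assign} against the defining equation~\eqref{eq:algorithm local isomorphism} of $\alpha$. Its purpose is to supply the third clause of Definition~\ref{df:RBR graph isomorphic}, so that together with Proposition~\ref{pp:CF algorithm surjection}, Proposition~\ref{pp:CF algorithm edge match}, and Proposition~\ref{pp:CF algorithm label} one can conclude that $\alpha$ is a local isomorphism from the input RBR graph to the output RBR graph, and hence (by Lemma~\ref{lm:isomorphic equivalent}) that the two graphs are equivalent.
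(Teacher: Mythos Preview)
Your proposal is correct and follows essentially the same approach as the paper, which simply cites statement~\eqref{eq:algorithm local isomorphism} and line~\ref{algline:output designation assign} of Algorithm~\ref{alg:compute CF} in a one-line proof. Your version is just a more explicit unwinding of that same bookkeeping step.
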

\begin{proof}
The statement of the proposition follows from statement~\eqref{eq:algorithm local isomorphism} and line~\ref{algline:output designation assign} of Algorithm~\ref{alg:compute CF}.
\end{proof}

\begin{proposition}\label{pp:CF output local isomorphic}
$\alpha$ is a local isomorphism from the input RBR graph $(N,E,\ell,\pi)$ to the output RBR graph $(N',E',\ell',\pi')$.
\end{proposition}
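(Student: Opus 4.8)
The plan is simply to assemble the four preceding propositions, since each of them establishes precisely one of the clauses of Definition~\ref{df:RBR graph isomorphic}. First I would recall that a local isomorphism, by Definition~\ref{df:RBR graph isomorphic}, is a surjection $\alpha:N\to N'$ satisfying item~\ref{dfitem:isomorphic edge} (the edge-neighbourhood equality $\{\alpha(m)\mid nEm\}=\{m'\mid\alpha(n)E'm'\}$), item~\ref{dfitem:isomorphic label} (label preservation), and item~\ref{dfitem:isomorphic designation} (the condition on the designating functions). Here $\alpha$ is the concrete map $\alpha(n):=\mathbb{P}(n)$ introduced in statement~\eqref{eq:algorithm local isomorphism}.

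Then I would close the argument by citing, in turn: surjectivity of $\alpha$ is Proposition~\ref{pp:CF algorithm surjection}; item~\ref{dfitem:isomorphic edge} is Proposition~\ref{pp:CF algorithm edge match}; item~\ref{dfitem:isomorphic label} is Proposition~\ref{pp:CF algorithm label}; and item~\ref{dfitem:isomorphic designation} is Proposition~\ref{pp:CF algorithm designation}. Together with Definition~\ref{df:RBR graph isomorphic}, these four facts yield that $\alpha$ is a local isomorphism from the input RBR graph $(N,E,\ell,\pi)$ to the output RBR graph $(N',E',\ell',\pi')$, which is exactly the claim.

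Since this is a pure bookkeeping step, I do not expect any genuine obstacle; all the substance has already been discharged into the four component propositions, which in turn rest on Theorem~\ref{th:finest partition same path set} (identifying membership in a common class of the finest partition $\mathbb{P}$ with equality of belief hierarchies $\Psi^*$) and on Lemma~\ref{lm:4-4-a} (propagating the invariant $\Psi^*$ along edges). The only care needed is to match the three numbered items of Definition~\ref{df:RBR graph isomorphic} with their respective propositions and to make explicit that $\alpha$ is the specific map defined in statement~\eqref{eq:algorithm local isomorphism}. Consequently, once this proposition is in place, the equivalence of the output RBR graph to the input follows immediately from Lemma~\ref{lm:isomorphic equivalent}.
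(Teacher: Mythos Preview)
Your proposal is correct and matches the paper's own proof exactly: the paper simply states that the proposition follows from Propositions~\ref{pp:CF algorithm surjection}--\ref{pp:CF algorithm designation} by Definition~\ref{df:RBR graph isomorphic}. Your additional remarks about the underlying dependencies and the subsequent use via Lemma~\ref{lm:isomorphic equivalent} are accurate context but not needed for the proof itself.
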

\begin{proof}
The statement of the proposition follows from Propositions~\ref{pp:CF algorithm surjection}-\ref{pp:CF algorithm designation} by Definition~\ref{df:RBR graph isomorphic}.
\end{proof}

Up to now, we have shown that the output of Algorithm~\ref{alg:compute CF} is an equivalent RBR graph of the input RBR graph. 
To prove the correctness of Algorithm~\ref{alg:compute CF}, we still need to prove the minimality of the output RBR graph.
By Lemma~\ref{lm:canonical is minimal}, it suffices to show that the output RBR graph of Algorithm~\ref{alg:compute CF} is canonical.
Then, by Definition~\ref{df:canonical RBR graph}, it suffices to prove the next proposition.

\begin{proposition}\label{pp:nodes not equivalent in output graph}
$\Psi^*_{P}\neq\Psi^*_{Q}$ for any distinct nodes $P,Q\in N'$.
\end{proposition}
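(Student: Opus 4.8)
The plan is to chain together two facts already established about the output RBR graph: the local isomorphism $\alpha$ defined in statement~\eqref{eq:algorithm local isomorphism} and the characterisation of the finest partition in Theorem~\ref{th:finest partition same path set}. First I would fix two distinct nodes $P,Q\in N'$. Since $N'=\mathbb{P}$ is a partition of the node set $N$ of the input RBR graph, neither $P$ nor $Q$ is empty, so I can pick representatives $n\in P$ and $m\in Q$; by construction $\mathbb{P}(n)=P$ and $\mathbb{P}(m)=Q$, hence $\alpha(n)=P$ and $\alpha(m)=Q$ by statement~\eqref{eq:algorithm local isomorphism}.

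Next I would invoke Proposition~\ref{pp:CF output local isomorphic}, which says $\alpha$ is a local isomorphism from the input RBR graph to the output RBR graph, together with Lemma~\ref{lm:path set equal in isomorphic}, which gives $\Psi^*_{n'}=\Psi^*_{\alpha(n')}$ for every node $n'$ in the input graph. Applying this to $n$ and $m$ yields $\Psi^*_n=\Psi^*_{P}$ and $\Psi^*_m=\Psi^*_{Q}$.

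Finally, since $P\neq Q$, the representatives $n$ and $m$ lie in different blocks of the finest partition, i.e.\ $\mathbb{P}(n)\neq\mathbb{P}(m)$. By Theorem~\ref{th:finest partition same path set} this forces $\Psi^*_n\neq\Psi^*_m$. Combining with the two equalities from the previous step gives $\Psi^*_{P}\neq\Psi^*_{Q}$, which is exactly the claim, and hence (by Definition~\ref{df:canonical RBR graph}) the output of Algorithm~\ref{alg:compute CF} is canonical.

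There is no real obstacle here: the proposition is a short corollary obtained by composing Lemma~\ref{lm:path set equal in isomorphic} (applied via Proposition~\ref{pp:CF output local isomorphic}) with Theorem~\ref{th:finest partition same path set}. The only point requiring a word of care is that the blocks of $\mathbb{P}$ are nonempty so that representatives exist, which is immediate from $\mathbb{P}$ being a partition of $N$.
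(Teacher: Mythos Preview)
Your proposal is correct and follows essentially the same route as the paper's own proof: pick representatives $n\in P$, $m\in Q$, use Proposition~\ref{pp:CF output local isomorphic} together with Lemma~\ref{lm:path set equal in isomorphic} to identify $\Psi^*_n=\Psi^*_P$ and $\Psi^*_m=\Psi^*_Q$, and then apply Theorem~\ref{th:finest partition same path set} to conclude $\Psi^*_n\neq\Psi^*_m$ from $\mathbb{P}(n)\neq\mathbb{P}(m)$.
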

\begin{proof}
Consider two arbitrary nodes $n\in P$ and $m\in Q$. Then, by statement~\eqref{eq:algorithm local isomorphism},
\begin{equation}\label{eq:3-15-4}
\alpha(n)=\mathbb{P}(n)=P \text{\hspace{1mm} and \hspace{1mm}} \alpha(m)=\mathbb{P}(m)=Q.
\end{equation}
Note that $\Psi^*_n=\Psi^*_{\alpha(n)}$ and $\Psi^*_m=\Psi^*_{\alpha(m)}$ by Lemma~\ref{lm:path set equal in isomorphic} and Proposition~\ref{pp:CF output local isomorphic}.
Then, by statement~\eqref{eq:3-15-4},
\begin{equation}\label{eq:3-15-5}
\Psi^*_n=\Psi^*_{P} \text{\hspace{1mm} and \hspace{1mm}} \Psi^*_m=\Psi^*_{Q}.
\end{equation}
Meanwhile, $\mathbb{P}(n)\neq\mathbb{P}(m)$ by statement~\eqref{eq:3-15-4} and the assumption that $P$ and $Q$ are distinct.
Then, $\Psi^*_n\neq\Psi^*_m$ by Theorem~\ref{th:finest partition same path set}.
Hence, $\Psi^*_{P}\neq\Psi^*_{Q}$ by statement~\eqref{eq:3-15-5}.
\end{proof}

As a summary, by Proposition~\ref{pp:CF algorithm output is RBR graph}, the output of Algorithm~\ref{alg:compute CF} is an RBR graph.
Then, by Lemma~\ref{lm:isomorphic equivalent} and Proposition~\ref{pp:CF output local isomorphic}, the output RBR graph is equivalent to the input RBR graph.
Thus, by Definition~\ref{df:canonical RBR graph} and Proposition~\ref{pp:nodes not equivalent in output graph}, the output RBR graph is an equivalent canonical RBR graph of the input RBR graph.
Hence, by Lemma~\ref{lm:canonical is minimal}, the output is the minimal equivalent RBR graph of the input RBR graph. 
We formally state this result in the next theorem.

\begin{theorem}\label{th:CF output canonical}
Algorithm~\ref{alg:compute CF} outputs the minimal equivalent RBR graph of the input RBR graph.
\end{theorem}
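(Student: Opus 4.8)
The plan is to assemble the propositions already established in this section into the three facts that together constitute "minimal equivalent RBR graph": the output tuple $(N',E',\ell',\pi')$ is itself an RBR graph, it is equivalent to the input, and no RBR graph equivalent to the input has fewer nodes. Almost all the work has been front-loaded, so this proof is essentially a bookkeeping argument that threads the pieces together in the right order.

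First I would invoke Proposition~\ref{pp:CF algorithm output is RBR graph} to conclude that the output tuple satisfies all four items of Definition~\ref{df:RBR graph}, so that it is legitimate to call it ``equivalent'' to the input and to compare node counts. Next I would establish equivalence: the map $\alpha:N\to N'$ defined by $\alpha(n):=\mathbb{P}(n)$ in statement~\eqref{eq:algorithm local isomorphism} is a local isomorphism from the input RBR graph to the output RBR graph by Proposition~\ref{pp:CF output local isomorphic} (which bundles Propositions~\ref{pp:CF algorithm surjection}--\ref{pp:CF algorithm designation}), and then Lemma~\ref{lm:isomorphic equivalent} yields at once that the input and output RBR graphs are equivalent.

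Then I would establish minimality through canonicity. By Proposition~\ref{pp:nodes not equivalent in output graph}, $\Psi^*_P\neq\Psi^*_Q$ for any two distinct nodes $P,Q\in N'$, which is exactly the defining condition of Definition~\ref{df:canonical RBR graph}, so the output RBR graph is canonical. By Lemma~\ref{lm:canonical is minimal}, a canonical RBR graph is not equivalent to any RBR graph with strictly fewer nodes; combining this with the equivalence just proved, no RBR graph equivalent to the input has fewer nodes than the output. Hence the output RBR graph is a minimal equivalent RBR graph of the input, which is the statement of the theorem.

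I do not expect a genuine obstacle at this stage, because the substantive content sits in the results being quoted: Proposition~\ref{pp:CF algorithm gets finest partition}, which certifies that the \textbf{while} loop terminates with $\mathbb{P}$ equal to the finest partition, and Theorem~\ref{th:finest partition same path set}, which identifies "same block of $\mathbb{P}$" with "same belief hierarchy $\Psi^*$"; together these are what make the block-quotient construction both well-defined and canonical. If I were reconstructing the chain from the ground up instead of quoting it, the delicate point would be Proposition~\ref{pp:CF algorithm edge match}, whose ``$\supseteq$'' direction requires lifting an edge of the quotient back through a block using Lemma~\ref{lm:3-15-a} and Lemma~\ref{lm:4-4-a} to produce a representative with the correct label and the correct belief hierarchy; but since that proposition is already available, the theorem itself follows immediately.
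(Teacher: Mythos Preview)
Your proposal is correct and mirrors the paper's own argument essentially line for line: invoke Proposition~\ref{pp:CF algorithm output is RBR graph} for well-formedness, Proposition~\ref{pp:CF output local isomorphic} together with Lemma~\ref{lm:isomorphic equivalent} for equivalence, Proposition~\ref{pp:nodes not equivalent in output graph} with Definition~\ref{df:canonical RBR graph} for canonicity, and then Lemma~\ref{lm:canonical is minimal} for minimality. There is nothing to add or correct.
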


Recall that, by Theorem~\ref{th:equivalent canonical graph are isomorphic}, two equivalent canonical RBR graphs must be isomorphic.
Thus, we can conclude that  Algorithm~\ref{alg:compute CF} outputs the {\em unique} (up to isomorphism) minimal equivalent RBR graph of the input RBR graph.

\subsection{Time Complexity of Algorithm~\ref{alg:compute CF}}\label{sec:app complexity}

In this subsection, we analyse the time complexity of Algorithm~\ref{alg:compute CF}.
By $|\mathcal{A}|$ we mean the number of (both rational and irrational) agents in the system.
For an input RBR graph $(N,E,\ell,\pi)$, we use $|N|$ to denote the number of nodes.
To implement Algorithm~\ref{alg:compute CF}, we need some extra assumptions about the data structures of the set $\mathcal{A}$ of agents and an RBR graph $(N,E,\ell,\pi)$:
\begin{enumerate}[label={A\arabic*.}, ref=A\arabic*,left=0pt]
\item\label{item:6-10-4} All agents in set $\mathcal{A}$ are denoted by integers between $0$ and $|\mathcal{A}|-1$, \textit{i.e.}
\begin{equation}\notag
\mathcal{A}=\{0,\dots,|\mathcal{A}|-1\}.
\end{equation}
\item\label{item:6-10-3} All nodes in set $N$ are denoted by integers between $0$ and $|N|-1$, \textit{i.e.}
\begin{equation}\notag
N=\{0,\dots,|N|-1\}.
\end{equation}
\item\label{item:6-10-5} The edges are stored in a collection $\{E_n\}_{n\in N}$, where $E_n$ is an array of size $|\mathcal{A}|$ such that, for each agent (integer) $a\in\mathcal{A}$,
\begin{equation}\notag
E_n[a]=\begin{cases}
m\in N, &\exists m\in N (nEm \text{ and } \ell_m=a);\\
-1, &\text{otherwise}.
\end{cases}
\end{equation}
\item\label{item:6-10-1} The labelling function $\ell$ is stored in an array of size $N$ such that $\ell[n]=\ell_n\in\mathcal{A}$ for each node (integer) $n\in N$.
\item\label{item:6-10-2} The designating function $\pi$ is stored in an array of size $|\mathcal{A}|$ such that, for each agent (integer) $a\in\mathcal{A}$, 
\begin{equation}\notag
\pi[a]=\begin{cases}
\pi_a\in N, &\text{if $\pi_a$ is defined};\\
-1, &\text{otherwise}.
\end{cases}
\end{equation}
\end{enumerate}

Note that, the array $E_n$ in assumption~\ref{item:6-10-5} is well-defined by item~\ref{dfitem:RBR graph labelling function} of Definition~\ref{df:RBR graph}.
It is worth mentioning that, in both analysis and pseudocode of Algorithm~\ref{alg:compute CF}, we consider partitions as collections of sets.
However, in implementation, we use an array of size $|N|$ to denote a partition.
In particular, we index the sets in a partition $\mathbb{P}$ with integers from $0$ to $|\mathbb{P}|-1$ (\textit{i.e.} the number of sets in partition $\mathbb{P}$).
Then,
\begin{equation}\label{eq:6-17-2}
\mathbb{P}[n]=k\in\{0,\dots,|\mathbb{P}|-1\}
\end{equation}
if node $n$ is in the $(k+1)^{th}$ set in partition $\mathbb{P}$.

Moreover, given an array-formed partition $\mathbb{P}$, for each node $n\in N$, we define $Type_n$ to be an array of size $|\mathcal{A}|$ to collect the information of $type(n)$ in line~\ref{algline:computing type} of Algorithm~\ref{alg:compute CF} and the information of $\mathbb{P}(n)$ where node $n$ belongs in partition $\mathbb{P}$.
Specifically, for each node $n\in N$ and each integer $a\in\mathcal{A}$, the $a^{th}$ item in the array $Type_n$ is
\begin{equation}\label{eq:6-12-1}
Type_n[a]=\begin{cases}
\mathbb{P}[E_n[a]], & \text{if } E_n[a]\neq -1;\\
-\mathbb{P}[n], & \text{if } a=\ell_n;\\
|\mathbb{P}|, & \text{otherwise}.
\end{cases}
\end{equation}

Note that, by assumptions \ref{item:6-10-3}, \ref{item:6-10-5}, and item~\ref{dfitem:RBR graph labelling function 1} of Definition~\ref{df:RBR graph}, in the first case $E_n[a]\neq -1$ of statement \eqref{eq:6-12-1}, the notation $E_n[a]$ represents a node $m\neq n$ such that $nEm$ and $\ell_m=a$.
Then, the set $\{Type_n[a]\mid a\in\mathcal{A}, E_n[a]\neq -1\}$ is the set $type(n)$ in line~\ref{algline:computing type} in Algorithm~\ref{alg:compute CF}.
In other words, the array $Type_n$ contains the information of the set $type(n)$.
Meanwhile, in the second case of statement \eqref{eq:6-12-1} where $E_n[a]= -1$ and $a=\ell_n$, the $a^{th}$ item $Type_n[a]$ in the array equals $-\mathbb{P}(n)$.
On the one hand, it records the information of which set node $n$ belongs in partition $\mathbb{P}$.
On the other hand, the negation sign helps distinguish the information of $\mathbb{P}(n)$ from the information of $type(n)$.
Note that the integer $|\mathbb{P}|$ is {\em not} equal to $\mathbb{P}[m]$ for each node $m\in N$ by statement~\eqref{eq:6-17-2}.
Then, in the third case of statement \eqref{eq:6-12-1} where $E_n[a]= -1$ and $a\neq\ell_n$, the integer $|\mathbb{P}|$ serves as a {\em placeholder} for the rest items in the array $Type_n$.
Consequently, by Lemma~\ref{lm:Mar-6-a}, two arbitrary nodes $n,n'\in N$ belong to the same set in the refined partition $\mathbb{P}'$ if and only if $Type_n=Type_{n'}$.
As a result, in implementation of Algorithm~\ref{alg:compute CF}, we don't have to get the refined partition $\mathbb{P}'$ by partitioning each set $P\in\mathbb{P}$, as lines~\ref{algline:partition each P} and \ref{algline:update P'}  of Algorithm~\ref{alg:compute CF} do.
Instead, we can compute the array $Type_n$ of each node $n\in N$ and then partition set $N$ according to the equivalence of the array $Type_n$. Note that, the latter can be implemented by ranking the nodes according to the $Type_n$ arrays.

Based on the above analysis, we implement Algorithm~\ref{alg:compute CF} with the above data structures as Algorithm~\ref{alg:implementation} shows.
It is worth mentioning that, lines~\ref{algline:imp 17} and \ref{algline:imp 18} of Algorithm~\ref{alg:implementation} rank all nodes in set $N$ according to the $Type_n$ arrays.
This implementation is inspired by the radix sort algorithm.
The inner sort in line~\ref{algline:imp 18} takes $O(|N|\cdot log|N|)$ by a stable sorting algorithm such as merging sort.
Thus, one execution of the \textbf{for} loop in lines~\ref{algline:imp 17} and \ref{algline:imp 18} takes $O(|\mathcal{A}|\cdot|N|\cdot log|N|)$.
Moreover, the \textbf{while} loop starting at line~\ref{algline:imp 6} reflects the iterative refinement process.
Then, by Lemma~\ref{lm:finest partition exists}, it terminates after at most $|N|$ iterations.
Thus, during the whole process of Algorithm~\ref{alg:implementation}, the execution of lines~\ref{algline:imp 17} and \ref{algline:imp 18} takes $O(|\mathcal{A}|\cdot|N|^2\cdot log|N|)$ in total.
This forms the major part of the time complex of Algorithm~\ref{alg:implementation}.
The time complexity analysis of the other parts of Algorithm~\ref{alg:implementation} is trivial and thus omitted.
We mark the time complexity for the other parts as the notes in Algorithm~\ref{alg:implementation}.

In conclusion, with the specific assumptions of the data structures and the implementation in Algorithm~\ref{alg:implementation}, the RBR graph minimisation procedure takes $O(|\mathcal{A}|\cdot|N|^2\cdot log|N|)$ in time.

\begin{algorithm}[thb]
\footnotesize
\caption{Implementation of Algorithm~\ref{alg:compute CF}}
\label{alg:implementation}
\KwIn{$\mathcal{A}$, $N$, $\{E_n\}_{n\in N}$, $\ell$, $\pi$ in assumptions \ref{item:6-10-4}-\ref{item:6-10-2}}
\KwOut{RBR graph $(N',\{E'_P\}_{P\in N'},\ell',\pi')$}

initialise $\mathbb{P}$ and $\mathbb{P}'$ to be two arrays of size $|N|$\label{algline:imp 1}\;
\For({\scriptsize\Comment*[f]{$O(N)$}\small}){each $n\in N$\label{algline:imp 2}}{
    $\mathbb{P}[n]\leftarrow\ell_n$\;
    initialise $Type_n$ to be an array of size $|\mathcal{A}|$\;
}

$\mathbb{P}size\leftarrow |\mathcal{A}|$\;

\While({\scriptsize\Comment*[f]{at most $|N|$ loops}\small}){true\label{algline:imp 6}}{
    \For({\scriptsize\Comment*[f]{ $O(|N|)$}\small}){each $n\in N$\label{algline:imp type for loop start}}{
        \For({\scriptsize\Comment*[f]{$O(|\mathcal{A}|)$}\small}){each $a\in\mathcal{A}$}{
            \uIf{$E_n[a]\neq -1$}{
                 $Type_n[a]\leftarrow\mathbb{P}[E_n[a]]$\;
            }
            \uElseIf{$a=\ell[n]$}{
                $Type_n[a]\leftarrow-\mathbb{P}[n]$\;
            }
            \Else{
                $Type_n[a]\leftarrow size$\;
            }

        }   
    }
    $\vec{N}\leftarrow list(N)$\;
    \For({\scriptsize\Comment*[f]{$O(|\mathcal{A}|)$}\small}){each $a\in\mathcal{A}$\label{algline:imp 17}}{
        sort $\vec{N}$ by descending $Type_n[a]$\label{algline:imp 18}{\scriptsize\Comment*[r]{$O(|N|\!\cdot\!log|N||)$}\small}
    }
    $k\leftarrow 0$\;
    $\mathbb{P}'[\vec{N}[0]]\leftarrow k$\;
    \For({\scriptsize\Comment*[f]{$O(|N|)$}\small}){each $i$ from $1$ to $|N|-1$}{
        \If({\scriptsize\Comment*[f]{$O(|\mathcal{A}|)$}\small}){$type_{\vec{N}[i]}\neq type_{\vec{N}[i-1]}$}{
            $k\leftarrow k+1$\;
        }
        $\mathbb{P}'[\vec{N}[i]]\leftarrow k$\;
    }
    \If{$k+1=\mathbb{P}size$}{
        \textbf{break}\;
    }
    $\mathbb{P}\leftarrow\mathbb{P}'${\scriptsize\Comment*[r]{$O(|N|)$}\small}
    $\mathbb{P}size\leftarrow k+1$\;
}
$N'\leftarrow\{0,\dots,k\}$\;
\For({\scriptsize\Comment*[f]{$O(|N|)$}\small}){each $n\in N$}{
    \For({\scriptsize\Comment*[f]{$O(|\mathcal{A}|)$}\small}){each $a\in\mathcal{A}$}{
        \eIf{$E_n[a]= -1$}{
            $E'_{\mathbb{P}[n]}[a]\leftarrow-1$\;
        }{
            $E'_{\mathbb{P}[n]}[a]\leftarrow\mathbb{P}[E_n[a]]$\;
        }
        
    }
    $\ell'[\mathbb{P}[n]]\leftarrow\ell[n]$\;
}

\For({\scriptsize\Comment*[f]{$O(\mathcal{A}|)$}\small}){each agent $a\in\mathcal{A}$\label{algline:imp output designation for loop}}{
    \eIf{$\pi[a]=-1$}{
        $\pi'[a]\leftarrow -1$\;
    }{
        $\pi'[a]\leftarrow \mathbb{P}[\pi[a]]$\;
    }
}
\Return{$(N',\{E'_P\}_{P\in N'},\ell',\pi')$}\label{algline:imp CF algorithm return}\;
\end{algorithm}

\end{document}